\documentclass[runningheads,envcountsame]{llncs}
\usepackage[T1]{fontenc}
\usepackage{graphicx}

\usepackage[bookmarks,unicode,colorlinks=true]{hyperref}%
   \def\@citecolor{blue}%
   \def\@urlcolor{blue}%
   \def\@linkcolor{blue}%
\def\orcidID#1{\smash{\href{http://orcid.org/#1}{\protect\raisebox{-1.25pt}{\protect\includegraphics{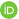}}}}}
\usepackage{color}

\usepackage[dvipsnames]{xcolor}
\usepackage{amsmath}    
\usepackage{amssymb}
\usepackage{graphicx}
\usepackage{subcaption}
\usepackage[ruled,linesnumbered]{algorithm2e}
\usepackage{enumerate}
\usepackage{cite}
\usepackage{tikz}
\usepackage{pifont}%
\usepackage{multirow}

\usepackage{thmtools}
\usepackage{thm-restate} 

\usetikzlibrary{arrows}
\usetikzlibrary{shapes}
\usetikzlibrary{automata}
\usetikzlibrary{positioning,shadows,trees}
\tikzset{initial text={}}
\usepackage{graphicx}
\usepackage{booktabs} 
\usepackage{siunitx} 
\usepackage{tabularx}

 \usepackage{cleveref} 
\Crefname{figure}{Fig.}{Figs.}
\usepackage{lineno}

\usepackage{ltablex} 
\keepXColumns 

\newcommand{\ra}{RA\xspace}
\newcommand{\dra}{DRA\xspace}
\newcommand{\wdra}{DRA\xspace}
\newcommand{\sca}{$S$-consistent\xspace}
\newcommand{\scap}{$S$-consistent with\xspace}
\newcommand{\spcap}{$S'$-consistent with\xspace}
\newcommand{\scn}{$S$-consistency\xspace}

\newcommand{\A}{\mathcal{A}}

\newcommand{\MQ}{\textsf{MQ}}

\newcommand{\TQ}{\textsf{TQ}}

\newcommand{\EQ}{\textsf{EQ}}

\newcommand{\Mem}{\textsf{Mem}}

\newcommand{\hypothesis}{\mathcal{H}}

\newcommand{\alphabet}{\Sigma}
\newcommand{\finwords}{\alphabet^*}

\newcommand{\lang}{L}

\newcommand{\emptyword}{\varepsilon}

\newcommand{\expr}{S}
\newcommand{\obtable}{\mathcal{T}}

\newcommand{\bigO}{\mathcal{O}}

\newcommand{\ralib}{\textsf{RaLib}}

\newcommand{\alphabetQ}{\mathbb{Q}}
\newcommand{\alphabetZ}{\mathbb{Z}}

\newcommand{\target}{\A}

\newcommand{\mdot}{\!\cdot\!}
\iftrue
\usepackage{todonotes}
	
	\newcommand{\dd}[1]{\todo[inline,color=teal!10,caption={DiDe}]{\textbf{Di-De:} #1}}
	\newcommand{\qiyi}[1]{\todo[inline,color=Orchid!10,caption={Qiyi}]{\textbf{Qiyi:} #1}}
    \newcommand{\ly}[1]{\todo[inline,color=orange!10,caption={LY}]{\textbf{LY:} #1}}
    
\else
\newcommand{\dd}[1]{}
\newcommand{\qiyi}[1]{}
\newcommand{\ly}[1]{}
\fi

\iffalse
\newcommand{\warn}[1]{\todo[inline,color=red!75,caption={W}]
\newcommand{\modify}[1]{{\color{red}{#1}}} {\textbf{IMPORTANT: #1}}}
\else
\newcommand{\warn}[1]{}
\newcommand{\modify}[1]{{\color{black}{#1}}} 
\fi

\begin{document}\sloppy
%
\title{Learning Canonical Register Automata over Ordered Data Domains}
%
%
\author{Yong Li\inst{1,2}\orcidID{0000-0002-7301-9234} \and
Qiyi Tang\inst{3}\orcidID{0000-0002-9265-3011} \and
Di-De Yen\inst{3}\orcidID{0000-0003-0045-9594}}
\authorrunning{Yong Li, Qiyi Tang and Di-De Yen}
%
\institute{Key Laboratory of System Software (Chinese Academy of Sciences), Beijing, China
\and 
Institute of Software, Chinese Academy of Sciences, Beijing, China
\\
\email{liyong@ios.ac.cn}\and
School of Computer Science and Informatics, University of Liverpool, UK\\
\email{\{qiyi.tang, d.d.yen\}@liverpool.ac.uk}}
\maketitle              
%

\begin{abstract}
Register automata are finite automata equipped with memory that recognize data languages over infinite alphabets. In this work, we investigate active learning algorithms for deterministic register automata (DRAs) over \emph{ordered} data domains---covering both dense domains, such as the rationals, and non-dense domains such as the integers. We show that the active learning problem for DRAs over both dense and non-dense ordered domains can be treated within a single unified framework. 
More specifically, we develop and implement a polynomial-time active learning procedure for DRAs over ordered domains, using oracles for membership, equivalence and memorability queries. 
The memorability queries were originally introduced for learning DRAs over domains with identity tests. 
Our unified framework also leads to a new consequence: minimization of DRAs over the non-dense ordered domain of integers is decidable, extending a result previously known only for dense domains.
Finally, we give improved complexity bounds of several decision problems for DRAs over ordered domains that are closely related to the queries used in active learning.

\end{abstract}

\section{Introduction}\label{sec:intro}

The $L^*$ algorithm, introduced by Angluin~\cite{DBLP:journals/iandc/Angluin87} within the Minimally Adequate Teacher (MAT) framework, enables a learner to infer an automaton representing an unknown system by interacting with a teacher through two types of queries: membership queries and equivalence queries. 
This query-based paradigm is often referred to as \emph{active learning}~\cite{DBLP:journals/iandc/Angluin87,DBLP:conf/tacas/VaandragerGRW22}.
A membership query asks whether a word $u$ belongs to the target language~$L$, while an equivalence query asks whether a proposed automaton recognizes~$L$. 
After collecting enough information from membership queries, the learner constructs a conjectured automaton and asks an equivalence query. 
The teacher either confirms that the conjecture is correct or returns a counterexample, which is then used by the learner to refine the conjecture. This iterative process continues until the learner successfully converges to the target automaton.

Active learning has been widely applied, including learning assumptions for compositional verification~\cite{DBLP:conf/tacas/CobleighGP03}, software testing~\cite{Czerny2014LearningbasedST}, detecting errors in network protocol implementations~\cite{DBLP:conf/uss/RuiterP15}, regular model checking~\cite{DBLP:conf/fmcad/ChenHLR17}, extracting automata models for recurrent neural networks~\cite{DBLP:conf/icml/WeissGY18}, verifying binarized neural networks~\cite{DBLP:conf/sat/ShihDC19}, and explaining machine learning models~\cite{DBLP:conf/ecai/Ozaki24}.

In this paper, we consider active automata learning for \emph{deterministic register automata} (DRAs) over \emph{ordered} data domains. 
Register automata (RAs) are automata over infinite alphabets and have been extensively studied~\cite{Kaminski:94:TCS, Neven:04:ACMCL, Ley:10:TR, DBLP:conf/mfcs/BalachanderFGT25, Alur:13:ACM, ChenOTW2017 } motivated by real-world problems where data may have an unbounded domain. 
An RA stores data symbols in a finite number of registers, which can later be compared against new inputs. 
RAs are far more expressive than classical finite automata; however, they lose many desirable closure and decidability properties—language inclusion and equivalence become undecidable for nondeterministic RAs~\cite{Kaminski:94:TCS}. 
Fortunately, these problems are decidable for DRAs~\cite{Neven:04:ACMCL}.
A key challenge in learning DRAs is that the learner must simultaneously reason about control-state behavior and data constraints, unlike in classical active learning.

In particular, comparisons between input data and register values come in two forms:
\modify{(1)} identity tests, which check equality; and \modify{(2)} ordered comparisons, which in addition to equality also test whether the input is greater or smaller than a stored value.  
Most existing work focuses on DRAs restricted to identity tests~\cite{Kaminski:94:TCS, Neven:04:ACMCL, DBLP:conf/vmcai/HowarSJC12, Bollig:13:DLT}, which limits their ability to capture ordering-based properties. 
For instance, checking whether an input sequence is sorted (i.e.\ in non-decreasing order) is recognizable by a DRA with ordered comparison but \emph{not} by any RA using identity tests only.
\modify{
There has been prior work~\cite{DBLP:journals/fac/CasselHJS16, Cassel2015RALibA} on learning DRAs over ordered domains; however, it relies on tree-based queries over abstract symbolic words, whereas our approach operates directly on concrete words using standard membership queries together with memorability information.
}

We focus on a family of DRAs introduced in~\cite{Ley:10:TR} that support ordered comparisons.  
These automata prohibit duplicate register values and, after each transition, permute registers so that their ordering reflects the order of last occurrence of their stored data.  
The languages recognized by such DRAs—often called \emph{regular data languages}—admit a Myhill--Nerode-style characterization and have a canonical minimal representation that is simultaneously minimal in both the number of states and the number of registers~\cite{Ley:10:TR,Ley:11:thesis}.
This property is crucial for active learning, which relies on a canonical representation of the target formalism, analogous to the Myhill–Nerode theorem~\cite{Myhill57,Nerode58} that guarantees a unique minimal DFA for regular languages.


A subtle issue arises when comparing dense data domains (e.g.\ $\alphabetQ$) and non-dense domains (e.g.\ $\alphabetZ$).  
For RAs restricted to identity tests, this distinction is irrelevant. 
However, for DRAs with ordered comparisons, non-dense domains introduce additional complications.   

\paragraph*{\textbf{Contributions.}} We show that minimal DRAs over~$\alphabetQ$ and minimal DRAs over~$\alphabetZ$ that recognize the same language when restricted to integers are structurally equivalent.   
Consequently, the decidable complexity result of the minimization algorithm of~\cite{Ley:11:thesis}, originally developed for dense domains, extends to non-dense domains as well. 
Accordingly, we obtain a unified active learning framework that applies uniformly to both dense and non-dense data domains.
As a related result, we establish that the configuration reachability problem over $\alphabetZ$ is decidable.
Based on the canonical form of DRAs, we develop an active learning algorithm for DRAs over \emph{ordered} data domains.
This algorithm uses \emph{polynomially} many membership, equivalence and \emph{memorability} queries, which precisely identify which symbols must be stored in registers when processing a data word\footnote{With only classical membership and equivalence queries, a learning algorithm would likely require exponential time, as discussed in Sect.~\ref{sec:conclusion}}.
The main technical challenge is that, unlike learning DRAs with identity tests~\cite{DBLP:conf/vmcai/HowarSJC12, Bollig:13:DLT}, which rely on equality-based reasoning to capture freshness of values, ordered comparisons require reasoning modulo order-preserving renamings of data values.
Such renamings are not unique, but we show that a canonical representative suffices to decide the relevant equivalence relations between words (cf. Lemma~\ref{lem:inequivalence-S}), making the learning independent of the choice of mapping.
This breaks standard observation-table techniques based on suffix reasoning, which compare rows via equality of abstract future behavior.

To address this, table entries here are compared modulo canonical renamings and the subset of values stored in memory, yielding our notion $S$-consistency (cf. \Cref{sec:learning}), which is not an equivalence relation and requires refined counterexample handling that updates symbolic transitions rather than only separating states.
We implement this learning algorithm
and demonstrate its effectiveness on a family of data languages and randomly generated DRAs.

We further study the complexity of decision problems for DRAs over ordered data domains, which are closely related to the queries arising in learning frameworks.
We show PSPACE-completeness of language inclusion/intersection non-emptiness for DRAs over equality and ordered domains. For dense ordered domains, we additionally show PSPACE upper bounds for configuration equivalence, language equivalence, and memorability.
For the memorability problem, our PSPACE upper bounds improve the previously known NEXPTIME upper bounds from~\cite{Ley:10:TR, Ley:11:thesis}.

\paragraph*{\textbf{Related Work.}}
Active learning of RAs has been investigated in~\cite{DBLP:conf/vmcai/HowarSJC12,DBLP:conf/tacas/DierlFHJST24,DBLP:conf/mfcs/BalachanderFGT25, Bollig:13:DLT}. 
In~\cite{DBLP:conf/mfcs/BalachanderFGT25}, Permutation Deterministic Register Automata (PDRAs) were introduced, and our DRAs restricted to identity tests correspond to LAR-DRAs, a subclass of RAs with a fixed permutation policy. 
It was shown in~\cite{DBLP:conf/mfcs/BalachanderFGT25} that minimization of LAR-DRAs is in PTIME, as both language equivalence and memorability are tractable. 
Moreover, they provide a PTIME active learning algorithm for LAR-DRAs using memorability queries.
The learning algorithms in~\cite{DBLP:conf/vmcai/HowarSJC12,DBLP:conf/tacas/DierlFHJST24} consider the DRA model of~\cite{CASSEL201554} and do not use memorability queries; their runtime is exponential in both the number of registers and the length of the longest counterexample.
\modify{Additionally, there is work~\cite{DBLP:conf/popl/MoermanS0KS17} on learning nominal automata, which are equivalent to RAs in expressiveness~\cite{Klin:14:lmcs}. These automata group infinite states into orbits under permutations; specifically, \cite{DBLP:conf/popl/MoermanS0KS17} provides an active learning algorithm for a subclass of nondeterministic nominal automata---including deterministic ones---over infinite domains with equality tests. 
{\color{black}The authors of~\cite{DBLP:conf/popl/MoermanS0KS17} briefly note that their framework is intended to accommodate both equality and ordered domains, although working out the extension to ordered domains is left open.}
Venhoek, Moerman and Rot~\cite{DBLP:journals/tcs/VenhoekMR22} further study computation over ordered nominal sets, contributing techniques for reasoning about infinite ordered structures in nominal settings.}
A passive learning algorithm for LAR-DRAs is given in~\cite{DBLP:conf/concur/BalachanderFG24}.

Active learning for DRAs over ordered domains is studied in~\cite{DBLP:journals/fac/CasselHJS16, Cassel2015RALibA}, where data words consist of symbols from both a finite and an infinite set. Such data words are also considered in~\cite{DBLP:conf/vmcai/HowarSJC12} and can be categorized into different patterns based on their finite components. This approach relies on tree queries instead of standard membership queries: for a given concrete prefix and a set $S$ of abstract suffixes, the teacher returns a symbolic decision tree. This tree captures the behavior of the unknown automaton's runs on the prefix followed by all possible concrete suffixes that satisfy the patterns in $S$. 
This tree-query technique is also used in~\cite{DBLP:conf/tacas/DierlFHJST24} to reduce query complexity.
{\color{black}Compared with memorability queries, tree queries provide richer information, since the resulting symbolic decision trees can expose both relevant register values and transition guards. In contrast, a memorability query directly returns the values of a prefix that may affect future behavior, independently of a particular suffix, and thus provides a more specialized oracle for identifying the required register contents.}
In~\cite{DBLP:journals/fac/CasselHJS16, Cassel2015RALibA}, ordered domains with a ternary operation $\text{sum}(\ell,m,n)$ (signifying $\ell+m=n$) and those with an operation $\text{inc}(m,n)$ (signifying $m+1=n$) are also studied. Furthermore, the equivalence and emptiness problems are addressed for the more expressive model in~\cite{ChenOTW2017}, which supports ordered data domains and allows certain linear arithmetic constraints.

On the other hand, our DRAs over ordered data domains were first introduced in~\cite{Ley:10:TR,Ley:11:thesis}. In this model, the duplication of register values is prohibited; this restriction does not reduce expressiveness and ensures the uniqueness of minimal automata. In our learning algorithm, in addition to equivalence and membership queries, the teacher also answers memorability queries. These queries were introduced in~\cite{DBLP:conf/mfcs/BalachanderFGT25} to improve the complexity of learning DRAs over domains restricted to equality tests.

Symbolic finite automata (SFAs) constitute another automata model for infinite alphabets. 
Unlike RAs, SFAs retain the classical closure and decidability properties of finite automata but have strictly less expressive power~\cite{DAntoniV2017}. There is extensive work on active learning for general and restricted classes of SFAs, see, for example, ~\cite{DrewsD2017,ArgyrosD2018,FismanFZ2023,GRINCHTEIN20104029,MM15,ASKK16,Maler2017}.

\section{Preliminaries}\label{sec:pre}

We use $\mathbb{R}$ (resp.\ $\mathbb{Q}$, $\mathbb{Z}$, $\mathbb{N}$) to denote the sets of real (resp.\ rational, integer, and non-negative) numbers.
A set $S \subseteq \mathbb{R}$ is \emph{dense} in $\mathbb{R}$ if, for every $s < t$, there exists $r \in S$ with $s < r < t$~\cite{Royden:10:book}. Accordingly, $\mathbb{Q}$ is dense in $\mathbb{R}$, whereas $\mathbb{Z}$ is not. For simplicity, we say $\mathbb{Q}$ and $\mathbb{R}$ are dense and $\mathbb{Z}$ is non-dense throughout the paper.


Let $\Sigma$ be an alphabet and $R$ a binary relation on $\Sigma$, where $\Sigma$ may be finite or infinite. A \emph{(data) word} is a finite sequence over $\alphabet$.
For words $u = a_1 \dots a_m$, $v = b_1 \dots b_n$ in $\Sigma^*$ and a symbol (or letter) $d \in \Sigma$, we write $d \in u$ if $d = a_i$ for some $1 \le i \le m$, and $u \cdot v$ denotes the concatenation $a_1 \dots a_m b_1 \dots b_n$. The length of $u$ is denoted by $|u|$.
The relation $R$ induces an equivalence relation $\sim_R$ on $\Sigma^*$: for $u = a_1 \dots a_m$ and $v = b_1 \dots b_n$, we write $u \sim_R v$ if and only if (1) $m = n$, and (2) $(a_i, a_j) \in R$ if and only if $(b_i, b_j) \in R$ for all $1 \le i, j \le n$.


Each equivalence class $\tau$ of $\sim_R$ is called a \emph{$\sim_R$-word type}, or simply a \emph{type}. We say $\tau$ has length $n$ if every word $u$ of type $\tau$ has $|u| = n$. A language $L \subseteq \Sigma^*$ is a \emph{data language} over $(\Sigma, R)$ if, for all words $u, v$ of the same $\sim_R$-type, $u \in L \Leftrightarrow v \in L$. In this paper, we consider only alphabet $\alphabet \in \{\mathbb{R}, \mathbb{Q}, \mathbb{Z}\}$ and $R$ being either $<$ (the total order) or $=$ (the identity relation) on $\alphabet$.


Let $\sigma$ be a mapping on $\finwords$ with $\sigma(u \cdot v) = \sigma(u) \cdot \sigma(v)$ for all $u, v \in \finwords$. We call $\sigma$ \emph{$R$-preserving} if $\sigma(u) \sim_R u$ for all $u \in \finwords$; when $R $ is $ <$, it is \emph{order-preserving}. For any $u = a_1\dots a_n, v=b_1\cdots b_n \in \Sigma^*$ with $u \sim_R v$, there exists an $R$-preserving partial mapping $\sigma$ such that $\sigma(u) = v$.
Moreover, if $\Sigma$ is dense, $\sigma$ can be chosen bijective as follows. For $(\Sigma, R) = (\mathbb{Q}, <)$, let $u' = a'_1 \dots a'_n$ and $v' = b'_1 \dots b'_n$ be the sorted sequences of $u$ and $v$, respectively. Then $\sigma_{u \rightarrow v}$ is defined as follows:
(1) $\sigma_{u \rightarrow v}(c) = \frac{(c - a'_i)(b'_{i+1} - b'_i)}{a'_{i+1} - a'_i} + b'_i$, for each $a'_i\neq a'_{i+1}$ and $a'_i \le c < a'_{i+1}$;
(2) $\sigma_{u \rightarrow v}(a'_i) = b'_i$, for each $a'_i= a'_{i+1}$, where $i = 1, \dots, n-1$;
(3) $\sigma_{u \rightarrow v}(c) = (c - a'_1) + b'_1$, if $c < a'_1$;
(4) $\sigma_{u \rightarrow v}(c) = (c - a'_n) + b'_n$, if $c \ge a'_n$.
By definition, $\sigma_{u \rightarrow v}$ is order-preserving and bijective. Since $\sigma_{u \rightarrow v}$ is bijective, its inverse $\sigma^{-1}_{u \rightarrow v}$ also exists.


\emph{Register automata} (\ra{s}), also known as \emph{finite-memory automata}, extend finite-state automata to infinite alphabets and recognize data languages. Several equivalent variants exist in the literature~\cite{Kaminski:94:TCS, Neven:04:ACMCL, Ley:10:TR, DBLP:conf/mfcs/BalachanderFGT25}. In this paper, we adopt the definition of \ra{s} from~\cite{Ley:10:TR}.

\begin{definition}\label{def:RA}
Given $k \in \mathbb{N}$, an alphabet $\Sigma$, and a binary relation $R$ on $\Sigma$, 
a $k$-register automaton ($k$-\ra) $\A$ over $(\Sigma,R)$ 
is a tuple $(Q, q_0, F, \Delta)$~where:
(1) $Q$ is a set of states, partitioned into disjoint subsets $Q_0, \dots, Q_k$,
(2) $q_0 \in Q_0$ is the \emph{initial state},
(3) $F \subseteq Q$ is the set of \emph{final states}, and
(4) $\Delta$ is a finite set of \emph{transitions} of the form $(p, \tau, E, q)$, 
where $p \in Q_i$ and $q \in Q_j$ for some $0 \le i, j \le k$, 
$\tau$ is a $\sim_R$-word type of length $i+1$, $E \subseteq \{1, \ldots, i+1\}$, and $j=i+1-|E|$.
\end{definition}

An RA $\A$ is \emph{deterministic} (\dra) if, for any two transitions $(p, \tau, E, q)$ and $(p', \tau', E', q')$, we have $(E, q) = (E', q')$ whenever $(p, \tau) = (p', \tau')$.

A \emph{configuration} of $\A$ is a pair $(q, v)$, where $q \in Q_i$ and $v$ is a word over $\Sigma$ of length $i$, for some $0 \le i \le k$. For configurations $(p, u)$ and $(q, v)$ and a symbol $a \in \Sigma$, we write $(p, u) \xrightarrow{a}_\A (q, v)$ or $(p, u) \xrightarrow{\tau: E}_\A (q, v)$ if there exists a transition $\delta = (p, \tau, E, q)$ such that $u \cdot a = a_1 \dots a_n$ is of type $\tau$ and $v$ is obtained from $u \cdot a$ by removing all $a_i$ with $i \in E$,
\modify{
subject to the following two conditions:
(1) if $|u| = k$, then $E \neq \emptyset$ to prevent memory overflow; and
(2) $j \in E$ whenever $1 \le j < n$ and $a_j = a_n = a$, which ensures no data duplication by requiring that if the current input $a$ is already stored, it must be removed.
}
The subscript $\A$ is omitted when clear from context.

Let $w = b_1 \dots b_m \in \Sigma^*$. A sequence of configurations $\pi = (p_0, u_0) \dots (p_m, u_m)$ is a \emph{run} on $w$, written $(p_0, u_0) \xrightarrow{w} (p_m, u_m)$, if $(p_i, u_i) \xrightarrow{b_i} (p_{i+1}, u_{i+1})$ for all $0 \le i < m$. It is \emph{accepting} if $(p_0, u_0) = (q_0, \emptyword)$ and $p_m \in F$.
We say $\A$ is \emph{complete} if every word $ u\in\finwords$ has a run in $\A$ from $(q_0, \emptyword)$.
The language recognized by $\A$ is $\lang(\A) = \{ w \in \Sigma^* \mid (q_0, \emptyword) \xrightarrow{w} (q_f, u), q_f \in F, u \in \Sigma^* \}$, which is a data language over $(\Sigma, R)$. Two \ra{s} $\A$ and $\A'$ are \emph{equivalent} if $\lang(\A) = \lang(\A')$.

\begin{figure}[tp]
\centering




\includegraphics[scale=0.8]{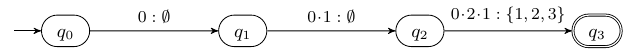}
\caption{A well-typed \dra $\mathcal{A}_{\textit{mid}}$ recognizing $L_{\textit{mid}}$ where sink state and its related transitions are omitted.
}
\label{fig:DRA_mid}
\end{figure}


Consider $L_{\textit{mid}} = \{a_1 a_2 a_3 \mid a_1 < a_3 < a_2\}$ over $(\mathbb{Z},<)$: the \dra in Fig.~\ref{fig:DRA_mid} has states $q_0, \dots, q_3$, with $q_0$ being the initial state and $q_3$ a final state, and recognizes $L_{\textit{mid}}$. Each transition is labeled with $\tau : E$. For example, the transition from $q_2$ to $q_3$ is labeled with $0 \mdot 2 \mdot 1 : \{1,2,3\}$, where $0 \mdot 2 \mdot 1$ is the word type $\tau$ and $\{1,2,3\}$ is $E$. On the word $w = 3 \cdot 9 \cdot 7$, the DRA has the accepting run $\pi = (q_0, \emptyword) \xrightarrow{3} (q_1,3) \xrightarrow{9} (q_2,3 \cdot 9) \xrightarrow{7} (q_3, \emptyword)$, so $w \in L_{\textit{mid}}$.

We denote by $|\A|$ the number of states in $\A$. An \ra $\A$ is \emph{state-minimal} if $|\A| \le |\A'|$ for every equivalent \ra $\A'$, and \emph{data-minimal} if it is a $k$-\ra and every equivalent $\A'$ is a $k'$-\ra with $k' \ge k$. $\A$ is \emph{minimal} if it is both state- and data-minimal. A state-minimal or data-minimal \ra recognizing $\lang(\A)$ always exists, but a minimal \ra (even a \dra) need not. Consider the following example.

\begin{example}\label{ex:inc_dec_sequences}
For each $n \in \mathbb{N}$, let $L_n$ be the language of words $w$ over $\alphabetQ$ that form a strictly increasing or decreasing sequence of length $n$.
For example, when $n=3$, the word $w = 2\cdot \text{-}1\cdot \text{-}3.5$ belongs to $L_n$, whereas $u = 1\cdot 2$ and $v = 2\cdot \text{-} 3\cdot 0$ do not.
Consider a $1$-\dra $\A_n$ that recognizes $L_n$.
It has two fashions---increasing and decreasing---and always stores the most recent input in its single register.
Given $w = a_1\dots a_m$, $\A_n$ enters the increasing (resp. decreasing) fashion if $a_2 > a_1$ (resp. $a_2 < a_1$).
For $2 < i < n$, it remains in the current fashion provided $a_i$ is larger (resp. smaller) than the value stored in the register; otherwise it rejects.
Length $m = n$ is verified using control states.
Since both fashions require $n-1$ intermediate states (plus initial and final states), $\A_n$ has $2n$ states.

While $\A_n$ is data-minimal for $L_n$, it is not state-minimal since an equivalent $(n-1)$-\dra $\A'_n$ with $|\A'_n| < |\A_n|$ exists.
This \dra stores the first $n-1$ symbols of the input in its registers and checks on the fly whether these values form a strictly increasing or strictly decreasing sequence.
It needs $n+2$ states: $n+1$ states to verify the word length and one rejecting state to make $\A'_n$ complete.
\qed
\end{example}



As shown in \Cref{ex:inc_dec_sequences}, a minimal DRA need not exist for every data language $L$. To address this, we introduce \emph{well-typeness}, which ensures the existence of a \emph{minimal well-typed} \dra for every {\dra}-recognizable language. Notably, the canonical DRAs of \cite{Ley:10:TR} also satisfy this condition. Restricting DRAs to be well-typed 
preserves expressive power, though it may increase the number of states.

\begin{definition}
Let $\A$ be an \ra over $(\Sigma, R)$. We call $\A$ \emph{well-typed}  if for every two configuration transitions $(p, u) \xrightarrow{a}_{\A} (q, v)$ and  $(p', u') \xrightarrow{a'}_{\A} (q, v')$ ending in the same state $q$, we have $v \sim_R v'$. 
\end{definition}

Intuitively, an \ra is well-typed if the $\sim_R$-type of its registers is uniquely determined by the current state. 
In the remainder of the paper, \emph{all DRAs} are assumed to be \emph{well-typed unless stated otherwise}.

\section{\dra{s} over Non-Dense Ordered Domains}
\label{sec:non-dense}



In this section, we will present the Myhill–Nerode theorem for DRAs over ordered domains, which forms the basis for learning and minimization. We then solve open problems for DRAs over \emph{non-dense} domains---most notably minimization and configuration reachability. Together, these results lay the foundation for a unified learning framework for DRAs over both dense and non-dense settings.

As with other state machines, minimization for \wdra{s} is tightly related to the Myhill–Nerode theorem, as shown in~\cite{Ley:10:TR}. We now introduce some notions needed to state this theorem for \wdra{s}.
For a language $L$ and two words $u,v$, a word $w$ is called an \emph{$L$-distinguishing extension} for $u$ and $v$ if exactly one of the words $u \cdot w$ and $v \cdot w$ belongs to $L$. For regular languages $L$, two words are Nerode-congruent when no $L$-distinguishing extension exists. 
For data languages, however, word classification additionally requires the notion of \emph{memorability} introduced in~\cite{Ley:10:TR}.
Consider the language $L_7$ in \Cref{ex:inc_dec_sequences}. Any \dra $\mathcal{A}$ recognizing $L_7$, after reading  $u = 2 \cdot 3 \cdot 6$, must store the last value $6$ in a register, since $\mathcal{A}$ must check whether the next input value is greater than $6$. Thus, $6$ is \emph{$L$-memorable} in $u$, whereas $2$ and $3$ are not. The formal definition is as follows:

\begin{definition}\label{def:memorable}
 Let $L$ be a language over $(\Sigma, R)$. 
 A symbol $a \in \Sigma$ is \emph{$L$-memorable} in a word $u \in \Sigma^*$ if there exist a word $u'$ of the same $\sim_R$-type as $u$, symbols $a', b' \in \Sigma$, a word $w \in \Sigma^{*}$ and an $R$-preserving mapping $\sigma$ with $\sigma(a') = b'$ and $\sigma(d) = d$ for all other $d \in u' \cdot w$, such that:
(1) $u \cdot a \sim_R u' \cdot a' \sim_R \sigma(u' \cdot a')$; and (2) $w$ is an $L$-distinguishing extension for $u'$ and $\sigma(u')$.
\end{definition}

Intuitively, only memorable symbols determine the membership of future extended words. If $\alphabet$ is dense, we can let $u' = u$, and replace $a$ in $u$ with a nearby symbol $b$, obtaining a word $\sigma(u) $ with $\sigma(u) \sim_R u$ and check whether this change affects the membership of its extensions in $L$, i.e., there is an $L$-distinguishing extension $w$ for $u$ and $\sigma(u)$.
In the non-dense setting, such a replacement $b$ may not exist. We therefore need to construct a word $u'$ of the same type whose data values have sufficient space to allow a valid replacement $b'$ and then test on $u'$. 

We write $mem_L(u) = a_1 \dots a_k$ for the sequence of $L$-memorable symbols in $u$, ordered so that for every $a_i, a_j$, $j > i$ if the last occurrence of $a_j$ in $u$ occurs after the last occurrence of $a_i$. 
Let $\mathcal{A}$ be a \dra recognizing $L$. 
If $(q_0, \emptyword) \xrightarrow{u}_{\mathcal{A}} (q, v)$ for some configuration $(q, v)$, then by definition, $mem_L(u)$ is a subsequence of $v$~\cite{Ley:10:TR}.
We are now ready to define the Nerode congruence for data languages.

\begin{definition}\label{def:eq-relation-data-languages}\emph{(cf.~\cite{Ley:10:TR})}
Given a data language $L$ over $(\Sigma, R)$, we define an equivalence relation $\cong_L$ on $\Sigma^*$, where for every $u,v\in\Sigma^*$, we write $u \cong_L v$ if:
\begin{itemize}
\item $mem_L(u)
\sim_R mem_L(v)$, and
\item for all $x,y \in \Sigma^*$, if $mem_L(u)\cdot x \sim_R mem_L(v) \cdot y$, then $u\cdot x \in L \Leftrightarrow v\cdot y \in L$.
\end{itemize}
\end{definition}

Intuitively, equivalent words 
$u$ and $v$ must have memorable words of the same type. Furthermore, when their memorable-word extensions share the same type, extending $u$ and $v$ by the corresponding suffixes yields the same membership in $L$, since only memorable words affect future membership.

Let $L$ be a data language over $(\Sigma, R)$ and $k = \max \left\{|mem_L(u) \mid u \in \finwords \right\}$. 
Let $[u]_{\cong_L}$ denote the equivalence class defined by $\cong_L$ where $u\in\finwords$ belongs to. 
We define the \emph{canonical $k$-RA} 
$\mathcal{C}_L = (Q, q_0, F, \Delta)$ for $L$ as follows:
\begin{itemize}
    \item $Q = \bigcup_{i=0}^k Q_i$ where $Q_i = \left\{[u]_{\cong_L} \subseteq \finwords \mid u \in \finwords, |mem_L(u)| = i\right\}$, 
    \item $q_0 = [\emptyword]_{\cong_L}$, $F = \left\{[u]_{\cong_L} \in Q \mid u \in L\right\}$, and
    \item $([u]_{\cong_L},\tau,E, [u']_{\cong_L}) \in \Delta$ if there exists a symbol $a \in \Sigma$ such that:
    \begin{itemize}
        \item $u \cdot a \cong_L u'$, $mem_L(u)\cdot a$ has type $\tau$, and
        \item $E$ is the set of indices $i$ in the sequence
        $a_1 \dots a_d = mem_L(u) \cdot a$ such that either 
        $a_i \notin mem_L(u\cdot a)$ or $a_i = a$ for $i < d$.
    \end{itemize}
\end{itemize}
In general, $\mathcal{C}_L$ may have infinitely many states or registers and may be nondeterministic. 
When $L$ is \dra-recognizable, $\mathcal{C}_L$ is exactly the minimal \wdra for $L$.  
This Myhill–Nerode theorem for \dra{s} was established in~\cite{Ley:10:TR}:

\begin{theorem}\label{thm:myhill}\emph{(cf.~\cite{Ley:10:TR})}
Let $L$ be a data language. Then $L$ is recognized by a \dra iff $\cong_L$ has finite index. 
Moreover, every \dra-recognizable language $L$ has a canonical automaton $\mathcal{C}_L$ which  is the unique minimal \wdra up to isomorphism.
\end{theorem}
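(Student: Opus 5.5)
The proof follows the classical Myhill--Nerode argument, adapted to memorability, in three parts: (a) a \dra for $L$ forces finite index of $\cong_L$; (b) finite index makes $\mathcal{C}_L$ a well-typed \dra recognizing $L$; (c) every well-typed \dra for $L$ maps onto $\mathcal{C}_L$, which yields minimality and uniqueness.

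\textbf{Direction ($\Rightarrow$).} Let $\A$ be a \dra with $k$ registers recognizing $L$, and define $u \approx v$ as follows. Let $(q_0,\emptyword)\xrightarrow{u}(p,x)$ and $(q_0,\emptyword)\xrightarrow{v}(p',y)$. Then $u \approx v$ if $p=p'$, $x\sim_R y$, and the positions of $mem_L(u)$ inside $x$ coincide with the positions of $mem_L(v)$ inside $y$. Here we use the fact quoted above that $mem_L(u)$ is a subsequence of the register content. This relation has finite index, bounded by $|Q|$ times the number of types of length $\le k$ times $2^k$.

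We show that $\approx$ refines $\cong_L$. Suppose $mem_L(u)\cdot z\sim_R mem_L(v)\cdot z'$. The runs of $\A$ on $z$ from $(p,x)$ and on $z'$ from $(p,y)$ may differ only where non-memorable register values are compared. The key step is to argue that changing the non-memorable values of $x$ does not affect acceptance. We do this by perturbing them one at a time via the definition of memorability: a value that is not memorable can be moved by an $R$-preserving shift without changing membership of any extension. To apply the definition we pass to a word $u'$ of the same type, and since $L$ is a data language we may move to a word with enough room between values, which handles the non-dense case. Applying these shifts one value at a time transforms $u\cdot z$ into a word whose run mirrors that of $v\cdot z'$.

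\textbf{Direction ($\Leftarrow$) and canonicity.} Assume $\cong_L$ has finite index. First we check that the memorable sequences are bounded in length: words with longer memorable sequences lie in distinct classes, so finite index forces $k<\infty$. Next we show that $\mathcal{C}_L$ is a \dra. The transition relation is well-defined and deterministic because $u\cong_L v$ together with $mem_L(u)\cdot a\sim_R mem_L(v)\cdot b$ implies $u a\cong_L v b$, and the sets $E$ coincide. Both facts follow from the compatibility of memorability with extension, namely that $mem_L(ua)$ is a subsequence of $mem_L(u)\cdot a$, since every memorable value of $ua$ is either $a$ or memorable in $u$. We then check, by induction on words, that $\mathcal{C}_L$ is well-typed and recognizes $L$: the register content after reading $u$ is exactly $mem_L(u)$.

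For minimality, given any well-typed \dra $\A$ for $L$, we map each reachable state $p$ to $[u]_{\cong_L}$ for any $u$ reaching $p$. This map is well-defined because, by well-typedness, the register type is fixed at $p$, so all words reaching $p$ are $\approx$-related up to the choice of memorable positions. One still has to check that those positions are determined by the state as well. The map is surjective, which gives state-minimality. Data-minimality follows from $|mem_L(u)|\le$ the number of registers. Finally, equality in both counts forces the map to be a bijection that respects transitions, which gives uniqueness up to isomorphism.

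\textbf{Main obstacle.} The hardest step is the perturbation argument in ($\Rightarrow$), i.e., showing that non-memorable register contents can be modified without changing membership, especially over $\mathbb{Z}$ where the substitute word $u'$ is needed. I would first prove a lemma: if $a$ is not memorable in $u$, then for every $\sigma$ that moves only $a$ and preserves the type of $u$ together with the extension, $u\cdot w\in L \iff \sigma(u)\cdot w\in L$. This comes directly from negating Definition~\ref{def:memorable}. I would then iterate the lemma over the non-memorable values.
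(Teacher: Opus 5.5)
This theorem is only cited from Ley's report, so there is no proof in the paper to compare with; judged on its own, your proposal has two genuine gaps.

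\textbf{The perturbation step.} Negating Definition~\ref{def:memorable} gives $u\cdot w\in L\Leftrightarrow\sigma(u)\cdot w\in L$ only for $\sigma$ that moves the non-memorable value $a$ to some $b$. That $b$ must have the \emph{same} order relation as $a$ to every other value of $u\cdot w$. Such a move can split $a$ off from equal letters of $w$, but it can never carry $a$ past or onto a letter of $w$. So no iteration of your lemma changes the strict order between a non-memorable register value and the letters of $z$. Yet that order is exactly what must change to make the run on $u\cdot z$ mirror the run on $v\cdot z'$. The hypothesis $mem_L(u)\cdot z\sim_R mem_L(v)\cdot z'$ says nothing about how $z$ sits relative to the non-memorable values in $x$.

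The missing idea is to read the non-memorability equivalence backwards. Let $c$ be a non-memorable value of the current prefix $\tilde u$, already split off from any equal letters of $z$. Let $d$ be the nearest letter of $z$ between $c$ and its intended new position, which lies in the same gap of $\tilde u$. Then $\tilde u[c\mapsto d]\sim_R\tilde u$, so $d$ is non-memorable there. Moving it back to $c$ is a licensed split, so you may merge $c$ onto $d$ and then split it off on the far side. Crossing the letters of $z$ one at a time this way proves the lemma you actually need. You must schedule the moves so the prefix never changes type, and over $\mathbb{Z}$ you must rescale first. The lemma is: $mem_L(u)\cdot x\sim_R mem_L(u)\cdot y$ implies $u\cdot x\in L\Leftrightarrow u\cdot y\in L$, i.e.\ reflexivity of $\cong_L$. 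With it, $\approx$ refines $\cong_L$ at once.

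The same lemma is needed in ($\Leftarrow$); the subsequence property of $mem_L(ua)$ does not give either of your two facts. Agreement of the surviving positions for $ua$ and $vb$ has to be derived from $u\cong_L v$. Condition~2 of $ua\cong_L vb$ only gives you $mem_L(ua)\cdot x\sim_R mem_L(vb)\cdot y$. You must first lift this, via the lemma, to the full words $mem_L(u)\cdot a$ and $mem_L(v)\cdot b$ before $u\cong_L v$ applies.

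\textbf{Uniqueness.} ``Equality in both counts'' does not force an isomorphism. Equal state counts and the same global register bound $k$ do not force the register content at each state to be $mem_L(u)$. Take $L=\{w\in\mathbb{Q}^3\mid w_2<w_3\}$. Here $\mathcal{C}_L$ has five states and one register, and $mem_L(a)=\varepsilon$, so the class of one-letter words lies in $Q_0$. A well-typed 1-register DRA with the same five states can instead store the first letter and discard it on the next transition (with $E=\{1\}$). It recognizes $L$ and matches $\mathcal{C}_L$ in both counts, but its transition labels differ, so it is not isomorphic to $\mathcal{C}_L$.

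Uniqueness therefore needs a per-state memory condition: the automaton stores exactly $mem_L(u)$ after reading $u$. This is stronger than the global data-minimality of Sect.~\ref{sec:pre}. Under that condition your surjection is indeed an isomorphism.

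The point you left open is easy. In a DRA, acceptance of $w$ from $(p,x)$ depends only on $p$ and the type of $x\cdot w$. Memorability can be tested with a replacement value arbitrarily close to the original. Hence the memorable positions of $x$ are determined by $p$ and the type of $x$.
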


With \Cref{thm:myhill}, the standard DFA minimization algorithm extends to \wdra{s}. Given a \wdra\ $\mathcal{A}$, we merge states $p$ and $q$ whenever the residual languages of the configurations $(p,u)$ and $(q,v)$, for some $u,v$, coincide.
The existence of such $u$ and $v$ also ensures that $p$ and $q$ are reachable from the initial configuration, with $u$ and $v$ serving as witnesses. Checking this language equivalence reduces to solving the \emph{configuration reachability problem} in the product automaton.
Consequently, the minimization algorithm\footnote{
 Complete reductions and pseudocode for minimizing \wdra{s} are provided in App.~\ref{app:min_alg}.} depends on solving 
the \textbf{configuration reachability problem}, which is interesting in its own right:
Given a configuration $(p, u)$ and a state $q$, determine whether there exists a word $w_q$ such that a run from $(p, u)$ reaches some configuration $(q,w)$ on input word $w_q$.

As shown in~\cite{Ley:10:TR}, the configuration reachability problem is trivially decidable for dense domains.
Given a configuration $(p, u)$ and a transition $(p, \tau, E, q)$, the density property ensures that there always exists a value $a$ such that $u\cdot a$ has type $\tau$.
Hence, with a complete \dra, checking whether a configuration $(p, u)$ can reach a state $q$ reduces to deciding whether there exists a sequence of transitions from $(p, u)$ that leads to the state $q$.
This can be checked easily.

The decidability of the configuration reachability problem in the non-dense setting, however, remains \emph{open} since the above approach fails.
Consider the \dra $\mathcal{A}_{mid}$ over $(\alphabetZ, <)$ in Fig.~\ref{fig:DRA_mid}: we have $(q_2, u) \xrightarrow{a} (q_3, \emptyword)$ when $u = 1\cdot 5$ and $a = 3$, but for $u = 1\cdot 2$, no value of $a$ produces such a run.
To address this gap, we provide the \emph{first}  decidable complexity result of this problem over non-dense domains.
\begin{restatable}{theorem}{thmReachNonDense}\label{thm:reach}
The configuration reachability problem for \dra{s} over $(\mathbb{Z},<)$ is decidable.
\end{restatable}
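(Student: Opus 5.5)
The plan is to reduce the problem to reachability in a \emph{finite} abstract transition graph, by keeping track of the order type of the registers together with their integer gaps, where each gap is capped at a bound depending only on the automaton and the input. I phrase the argument for a fixed target configuration $(q,v)$, so that the run must end with register content exactly $v$. The variant in the statement, where only the state $q$ is prescribed, is the special case with no target constants: I take $C$ below to consist of the values of $u$ only and accept any abstract state whose control component is $q$.

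Let $k$ be the number of registers of the given \dra $\A$, and let $C$ be the finite set of values occurring in $u$ or in $v$. I call these values the \emph{constants}. At any moment of a run, the register word $x$ can be merged with $C$ into a sorted sequence of distinct integers $c_1<\dots<c_m$, where $m\le k+|C|$. Each element is marked as a register position (with its index in $x$), as a constant, or as both. An \emph{abstract configuration} consists of:
\begin{itemize}
\item the control state;
\item this marked order pattern;
\item the gap vector $(c_{i+1}-c_i)_i$, with every entry capped at a bound $K$, together with a flag recording that the values below $c_1$ and above $c_m$ are unbounded.
\end{itemize}
Because constants are tracked with their \emph{exact} relative gaps up to $K$, and are not forgotten when they leave the registers, the concrete configuration can be recovered from the abstraction whenever all gaps between constants are below $K$. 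In particular, the target $(q,v)$ corresponds to a recognisable set of abstract configurations: control state $q$, register positions coinciding exactly with the constants coming from $v$, in the order given by $v$. I expect $K=2^{k+|C|+1}$ to suffice, although the exact constant is not important.

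The abstract transitions are defined by guessing the relative position of the new input $a$ among $c_1,\dots,c_m$. Either $a$ equals some $c_i$, or it falls strictly inside a gap (which requires that gap to be at least $2$), or it lies outside all of them. In the last two cases we also guess the two new capped sub-gaps, which must be consistent with the old capped gap. For each guess we check it against a transition $(p',\tau,E,q')$ of $\A$ and then apply the deletion set $E$. Deleted non-constant points disappear, and their two adjacent gaps are merged by capped addition; deleted register points that are constants stay in the pattern as pure constants.

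Soundness (every concrete step maps to an abstract step) is immediate from the definitions. The main obstacle is \emph{completeness}: an abstract path must lift to a concrete run, which amounts to showing that a gap of size at least $K$ is interchangeable with any larger gap. I would prove a \emph{gap-compression lemma}. Suppose two concrete configurations have the same pattern and their gap vectors agree after capping at $K$. Then for every run from one of them that ends in a configuration whose non-constant content has prescribed relative order and capped gaps, there is a corresponding run from the other. The proof is by induction on $m' = (k+|C|)-(\text{number of points currently inside the gap})$. Inside a gap of size at least $2^{m'+1}$, the next insertion can always be placed so that both resulting sub-gaps are at least $2^{m'}$ or equal to the required small value. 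Conversely, any run in a larger gap can be renamed by an order-preserving map on the interval that fixes the points outside it, because at any time at most $k+|C|$ points are alive in the interval.

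Making this renaming argument precise over the whole run, rather than step by step, is the delicate part. Points are repeatedly inserted and deleted, so one needs a \emph{uniform} invariant: every live gap is either exact, or at least $2^{j}$ where $j$ is the number of further points that may still be alive inside it. I would establish this invariant first and then derive the lemma from it.

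With the lemma in hand, the abstract graph is finite, with at most
\[
|Q|\cdot(\text{number of patterns})\cdot (K+1)^{k+|C|}
\]
nodes, and it is computable from $\A$, $u$ and $v$. Deciding configuration reachability then reduces to graph search from the abstraction of $(p,u)$ to the abstractions of the target, which proves the theorem.
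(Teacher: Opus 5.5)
\textbf{The gap-compression lemma is false, so the abstraction is not complete.} Your soundness direction is fine. The completeness direction fails, and the problem is not the choice of constant: \emph{no} cap $K$ that depends only on $k$ and $|C|$ makes the capped-gap graph exact. The issue is ``crawling''. A gap whose endpoint stays alive can be used up one unit at a time, by repeatedly inserting a point next to the previous one and then deleting the previous one. Only one point is ever alive inside the gap while this happens.

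Here is a concrete counterexample. Take a $3$-DRA with states $s_0,\dots,s_N \in Q_3$ and transitions $(s_i,\tau,\{3\},s_{i+1})$, where $\tau$ is the type of $0\cdot 3\cdot 1\cdot 2$. So the registers are $(l,r,x)$ with $l<x<r$, the input $a$ satisfies $x<a<r$, and $x$ is deleted. This restores the register type, so the automaton is well-typed.
\begin{itemize}
\item \emph{Concretely:} from $(s_0,\,0\cdot(N{+}1)\cdot 1)$, each step increases $x$ by at least one. So only $N-1$ steps are possible, and $s_N$ is unreachable.
\item \emph{Abstractly:} take $N>2K$, where $K=2^{k+|C|+1}$ with $k=|C|=3$. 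There are concrete steps from configurations with pattern $0,1,x,N{+}1$ and gaps $(1,\ge K,\ge K{+}1)$ to configurations with gaps $(1,\ge K,\ge K)$. By soundness, the abstract graph therefore contains an edge $(s_i,\alpha)\to(s_{i+1},\alpha)$ with the \emph{same} capped gap vector $\alpha$ for every $i$. Graph search then reports $s_N$ reachable.
\end{itemize}
Refining the ``consistency'' rule for splitting capped gaps cannot fix this, because these edges are forced by soundness.

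\textbf{Where your argument breaks, and how to repair it.} The step ``at any time at most $k+|C|$ points are alive in the interval'' only bounds the number of \emph{simultaneously} alive points. The room a run actually needs inside a gap is governed by chains $x_1<x_2<\cdots$ forced across time: each new point must lie between its predecessor and the surviving endpoint. The length of such a chain is bounded only by the length of the run. For the same reason, your invariant ``exact or at least $2^j$'' cannot be maintained: after an insert followed by a delete, the gap to the surviving endpoint strictly shrinks while $j$ is unchanged.

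Any correct bound must therefore depend on $|Q|$, or on the run length. That is what the paper's proof supplies:
\begin{itemize}
\item (C1) squeezes a run into a window of width $O(\kappa n)$ around the initial values;
\item (C2) shortens any run longer than $\textit{len}(\mathcal{A},v)$, a bound that involves $|Q|$, by pigeonholing repeated states and comparing the gap profiles of their register contents (Lemma~\ref{lem:lower_bound}).
\end{itemize}

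If you want to keep your gap-based viewpoint, the missing idea is monotonicity. Suppose $w\sim_< w'$ and every consecutive gap of $w'$ is at least the corresponding gap of $w$. Then a strictly increasing $\phi:\mathbb{Z}\to\mathbb{Z}$ with $\phi(w)=w'$ maps every run from $(s,w)$ to a run from $(s,w')$ whose gaps still dominate. Hence the set of configurations from which $q$ is reachable is upward closed in the order on (state, order type, gap vector). Backward reachability then terminates by Dickson's lemma, since minimal predecessors are computable. This is a well-structured-transition-system argument rather than a fixed finite abstraction. It handles the state-reachability question in the theorem. Your stronger exact-target variant is not upward closed, but the theorem does not need it.
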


As an immediate result of \Cref{thm:reach}, we obtain the following:

\begin{corollary}\label{coro:minimization-non-dense-decidable}
The minimization problem for \wdra{s} over $(\mathbb{Z},<)$ is decidable.
\end{corollary}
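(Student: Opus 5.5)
The plan is to reduce the corollary to \Cref{thm:reach} by following the minimization scheme sketched after \Cref{thm:myhill}. Given a well-typed \dra $\A$ over $(\mathbb{Z},<)$, we first make it complete by adding a rejecting sink with the appropriate register arities. We then remove every state that is unreachable from $(q_0,\emptyword)$; by \Cref{thm:reach} this reachability test is decidable. Because $\A$ is well-typed, the register type at each reachable state $q$ is fixed. Next we compute, for each register position, whether its value is memorable at $q$, meaning that replacing the value by another with the same type and a different future can change acceptance. This lets us determine the registers that the canonical automaton $\mathcal{C}_L$ keeps, and hence the sets $E$. Finally, we quotient the states by the relation ``there exist $u,v$ with $(q_0,\emptyword)\xrightarrow{u}(p,\cdot)$ and $(q_0,\emptyword)\xrightarrow{v}(q,\cdot)$ such that $u\cong_L v$''. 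By \Cref{thm:myhill}, the result is isomorphic to the unique minimal well-typed \dra $\mathcal{C}_L$.

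Each of these checks becomes a configuration reachability question in a product automaton. For merging, we build the synchronized product of $\A$ with itself. Its registers are the concatenation of both register words, its transitions are typed over the joint register contents, and its states are pairs together with a type describing how the two register contents interleave; this type is finite data. We ask whether the product can reach a pair in $F\times(Q\setminus F)$ or $(Q\setminus F)\times F$ starting from a pair of configurations whose memorable parts have matching types. Non-emptiness of the relevant initial pairs is again a reachability question, this time from $(q_0,\emptyword)$ in the product, since the product is fed the words $u$ and $v$ in sequence. Memorability of a register at $q$ is handled the same way: we duplicate the configuration, perturb one value (in $\mathbb{Z}$ we need a gap, so we use \Cref{def:memorable}'s $u'$ witness), and ask whether the product reaches a disagreement state.

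The main obstacle is that in $\mathbb{Z}$ the existence of the initial pair configurations depends on concrete gaps between values. I would handle this by guessing the initial product configuration as part of the reachability query. Concretely, a gadget prefix drives the product from $(q_0,\emptyword)$ so that the reached configuration is itself a product configuration, and we then apply \Cref{thm:reach} to the whole composite automaton. This requires checking that the product of well-typed \dra{s} over $(\mathbb{Z},<)$ is again a \dra over $(\mathbb{Z},<)$, possibly not well-typed; this should be harmless because \Cref{thm:reach} is stated for \dra{s} in general.

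There are finitely many pairs and register indices, and each test is decidable. The algorithm therefore terminates and outputs $\mathcal{C}_L$, so minimization is decidable.
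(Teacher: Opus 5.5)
Your proposal is correct and follows essentially the paper's route (Algorithm~\ref{alg:min} in the appendix): decide reachability of states via \Cref{thm:reach}, compute the memorable register positions, merge states whose representatives are $\cong_L$-equivalent, and reduce each of these tests to configuration reachability in a product automaton. The main difference is how you handle the gaps in $\mathbb{Z}$: the paper extracts explicit witness words $w_q$ from the bounded search behind \Cref{thm:reach} and spreads their values apart, whereas you guess the start configurations inside a composite automaton. In either version, \Cref{thm:reach} is proved for well-typed DRAs, so the duplicate-allowing product must first be converted by recording the joint register type in its states, as the paper remarks, rather than being covered by \Cref{thm:reach} directly.
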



In fact, a simpler route to achieve \Cref{coro:minimization-non-dense-decidable} is available. The next theorem shows that minimization and language equivalence for \wdra{s} over both dense and non-dense domains fit into a single unified framework.
That is, we can reduce these problems over non-dense domains to the problems over dense domains which are already known to be decidable \cite{Ley:10:TR}. However, this unification does \emph{not} extend to the configuration reachability problem.

\vspace*{-1mm}
\begin{theorem}\label{thm:Z_eq_Q}
Let $\mathcal{A}$ be a DRA, and let $L_{\mathbb{Q}}$ and $L_{\mathbb{Z}}$ be the data languages recognized by $\mathcal{A}$ over $(\mathbb{Q},<)$ and $(\mathbb{Z},<)$, respectively. For any $w \in \mathbb{Q}^*$ and $w' \in \mathbb{Z}^*$ with $w \sim_{<} w'$, it holds that $w \in L_{\mathbb{Q}}$ iff $w' \in L_{\mathbb{Z}}$. Furthermore, a DRA over $(\mathbb{Q},<)$ is the minimal \wdra recognizing $L_{\mathbb{Q}}$ iff the same DRA over $(\mathbb{Z},<)$ is the minimal \wdra recognizing $L_{\mathbb{Z}}$. This correspondence remains valid when either $\mathbb{Q}$ or $\mathbb{Z}$ is replaced by $\mathbb{R}$.
\end{theorem}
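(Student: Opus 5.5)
The first claim follows from the observation that a DRA run depends only on the $\sim_<$-type of the input together with the register contents. I would prove, by induction on the length of $w=b_1\dots b_m\in\mathbb{Q}^*$ and $w'=b'_1\dots b'_m\in\mathbb{Z}^*$ with $w\sim_< w'$, the following invariant. If $(q_0,\emptyword)\xrightarrow{b_1\dots b_i}(p_i,u_i)$ and $(q_0,\emptyword)\xrightarrow{b'_1\dots b'_i}(p'_i,u'_i)$, then $p_i=p'_i$. Moreover, there is an index sequence $j_1<\dots$ such that $u_i=b_{j_1}\cdots b_{j_\ell}$ and $u'_i=b'_{j_1}\cdots b'_{j_\ell}$ (registers store values at the same positions of the input).

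For the inductive step, note that $u_i\cdot b_{i+1}$ and $u'_i\cdot b'_{i+1}$ are subsequences of $w$ and $w'$ taken at the same positions, so they have the same type $\tau$. Determinism then forces the same transition $(p_i,\tau,E,p_{i+1})$ in both runs. The side conditions agree as well: memory overflow depends only on $|u_i|$, and the duplicate-removal condition depends only on equalities, which are type information. Removing the indices in $E$ preserves the positional correspondence. Existence of a run in one domain implies existence in the other by the same argument, since a transition applicable to type $\tau$ is applicable to any word of that type. Hence acceptance coincides. The same argument applies verbatim with $\mathbb{R}$ in place of either domain, since it uses nothing but types.

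For the minimality claim, I would transfer the Nerode congruence. By the first part, $L_{\mathbb{Z}}=L_{\mathbb{Q}}\cap\mathbb{Z}^*$, and every type realized over $\mathbb{Q}$ is realized over $\mathbb{Z}$ and conversely (finite words of a type embed order-preservingly into either domain). The key step, and the main obstacle, is showing that memorability agrees: for $u\in\mathbb{Z}^*$, the symbol $a$ is $L_{\mathbb{Z}}$-memorable in $u$ iff the position of $a$ is $L_{\mathbb{Q}}$-memorable in any $\tilde u\in\mathbb{Q}^*$ with $\tilde u\sim_< u$. Definition~\ref{def:memorable} is phrased entirely via types. A witness $(u',a',b',w,\sigma)$ over $\mathbb{Q}$ consists of finitely many values, and the condition $\sigma(u'\cdot a')\sim_< u'\cdot a'$ together with the distinguishing extension $w$ is a statement about the type of $u'\cdot a'\cdot b'\cdot w$. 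I would therefore pick integers realizing the same type of $u'a'b'w$ (possible because a finite linear order always embeds in $\mathbb{Z}$), which gives a $\mathbb{Z}$-witness by the first part. The converse direction holds since $\mathbb{Z}\subseteq\mathbb{Q}$. The $u'$ in the definition is exactly what supplies the "room" needed in $\mathbb{Z}$.

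With memorability matching position-wise, $mem_{L_{\mathbb{Z}}}(u)$ and $mem_{L_{\mathbb{Q}}}(\tilde u)$ occupy the same positions. Each side of the second clause of Definition~\ref{def:eq-relation-data-languages} is again a type-level statement, so it transfers by re-realizing finite words in the other domain. Consequently $u\cong_{L_{\mathbb{Z}}}v$ iff $\tilde u\cong_{L_{\mathbb{Q}}}\tilde v$, and the classes of the two congruences are in bijection, with matching register counts, final states and transitions $(\tau,E)$. Thus $\mathcal{C}_{L_{\mathbb{Z}}}$ and $\mathcal{C}_{L_{\mathbb{Q}}}$ are the same DRA. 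By Theorem~\ref{thm:myhill}, a DRA is the minimal well-typed DRA for $L_{\mathbb{Q}}$ iff it is isomorphic to this common canonical automaton, iff it is minimal for $L_{\mathbb{Z}}$. Well-typedness is itself type-level and so also transfers via the first part, since reachable configurations correspond positionally. Replacing either domain by $\mathbb{R}$ changes nothing in these arguments.
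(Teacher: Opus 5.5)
Your first part (runs correspond positionally, hence $L_{\mathbb{Z}}=L_{\mathbb{Q}}\cap\mathbb{Z}^*$ and $L_{\mathbb{Q}}$ is the type-closure of $L_{\mathbb{Z}}$) is correct. It is a spelled-out version of the paper's first step. Your memorability transfer is also fine, precisely because Definition~\ref{def:memorable} lets you replace $u$ by $u'$.

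The gap is the next step, the claim $u\cong_{L_{\mathbb{Z}}}v \iff \tilde u\cong_{L_{\mathbb{Q}}}\tilde v$. In Definition~\ref{def:eq-relation-data-languages} the words $u,v$ are fixed and only $x,y$ are quantified. So over $\mathbb{Z}$ the second clause is \emph{not} a type-level statement about $u,v$. Take an extension $\tilde x$ of $\tilde u$ that is realizable over $\mathbb{Q}$, for instance one placing a value strictly between two adjacent integers of $u$. It has no counterpart $x$ with $u x\sim_< \tilde u\tilde x$. Re-realizing $\tilde u\tilde x$ in $\mathbb{Z}$ replaces $u$ by some other $u^*\sim_< u$, and $\cong_{L_{\mathbb{Z}}}$ tells you nothing about $u^*$. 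Only the direction from $\mathbb{Q}$ to $\mathbb{Z}$ survives.

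Concretely, let $L$ consist of the words $a_1a_2y$ such that $a_1<a_2$, some letter of $y$ lies in $(a_1,a_2)$, and $y$ has an odd number of letters below $a_1$. This $L$ is recognized by a 2-register DRA. In both domains $mem(1\cdot 2)=mem(1\cdot 2\cdot 0)=1\cdot 2$. Over $\mathbb{Z}$ every extension of either word is rejected, since no integer lies in $(1,2)$. Hence $1\cdot 2\cong_{L_{\mathbb{Z}}}1\cdot 2\cdot 0$, whereas over $\mathbb{Q}$ the suffix $1.5$ separates them. Worse, $1\cdot 5\cong_{L_{\mathbb{Z}}}1\cdot 2\cdot 0\cong_{L_{\mathbb{Z}}}1\cdot 5\cdot 0$, yet the suffix $3$ separates $1\cdot 5$ from $1\cdot 5\cdot 0$. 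So $\cong_{L_{\mathbb{Z}}}$ is not even transitive, and $\mathcal{C}_{L_{\mathbb{Z}}}$ is not well defined. Your final appeal to Theorem~\ref{thm:myhill} over $\mathbb{Z}$ is exactly what this theorem is meant to avoid.

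The paper sidesteps congruences entirely, and you can too, using only your first part. Apply it to an arbitrary competitor DRA $\mathcal{B}$, not just to $\mathcal{A}$. Then $\mathcal{B}$ recognizes $L_{\mathbb{Q}}$ over $\mathbb{Q}$ iff it recognizes $L_{\mathbb{Z}}$ over $\mathbb{Z}$: one direction by intersecting with $\mathbb{Z}^*$, the other by type-closure. Well-typedness depends only on the transitions (the type of $\tau$ after deleting the positions in $E$), and the numbers of states and registers are syntactic. So the well-typed DRAs for $L_{\mathbb{Q}}$ and for $L_{\mathbb{Z}}$ are the same objects with the same sizes, and the minimal ones coincide. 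Existence over $\mathbb{Z}$ then follows from Theorem~\ref{thm:myhill} over the dense domain.
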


\begin{proof}
Given a \dra $\mathcal{A}$, we use $\mathcal{A}_{\mathbb{Q}}$ and $\mathcal{A}_{\mathbb{Z}}$ to represent the \dra{s} $\mathcal{A}$ over $(\mathbb{Q},<)$ and $(\mathbb{Z},<)$, respectively. 
For every $w \in \mathbb{Q}^*$, there exists $w' \in \mathbb{Z}^*$ s.t. $w' \sim_< w$, and vice versa. Therefore, $L(\mathcal{A}_{\mathbb{Z}}) = L(\mathcal{A}_{\mathbb{Q}}) \cap \mathbb{Z}^*$ and $L(\mathcal{A}_{\mathbb{Q}}) = \{ w \mid \exists w' \in L(\mathcal{A}_{\mathbb{Z}}) \wedge w' \sim_< w\}$.
This implies that $\mathcal{A}_{\mathbb{Q}}$ is the minimal \wdra recognizing $L(\mathcal{A}_{\mathbb{Q}})$ if and only if  $\mathcal{A}_{\mathbb{Z}}$ is the the minimal \wdra recognizing $L(\mathcal{A}_{\mathbb{Z}})$.
The argument remains valid if the ordered domain $\mathbb{Q}$ or $\mathbb{Z}$ is replaced by $\mathbb{R}$.
\qed
\end{proof}

\vspace*{-2mm}
\section{Active Learning of \dra{s}}
\label{sec:learning}

In this section, we present an algorithm for learning a DRA of an unknown data language $L$ using \emph{polynomially} many queries in the active learning framework.
In this framework, a \emph{learner} aims to infer an automaton recognizing $L$, while a \emph{teacher} knows $L$ and answers \emph{membership} queries $\MQ(u)$ (checking whether $u \in L$) and \emph{equivalence} queries $\EQ(\hypothesis)$ (checking whether $\hypothesis$ recognizes $L$).

The learner begins by issuing membership queries to populate an \emph{observation table} $\mathcal{T}$, from which it constructs a conjectured automaton $\hypothesis$.
An equivalence query then checks its correctness: if the teacher replies “Yes”, learning terminates; otherwise a counterexample $w \in L \ominus \lang(\hypothesis)$ is returned and used to refine $\hypothesis$.
This process repeats until a correct automaton is found.
While originally developed for DFAs, we adapt this framework to DRAs over ordered domains.
By \Cref{thm:Z_eq_Q}, it suffices to learn DRAs over $(\alphabetQ,<)$, since the case of $(\alphabetZ,<)$ reduces to it; See~\Cref{app:non-dense-learning} for more details on the reduction.

Unlike DFA learning, data languages lack known polynomial-time learning algorithms with respect to the number of queries, even for only equality tests on data~\cite{DBLP:conf/vmcai/HowarSJC12}.
This barrier can be overcome by using also \emph{memorability queries}~\cite{DBLP:conf/mfcs/BalachanderFGT25}, $\Mem(u)$, which return the memorable word $mem_L(u)$ of a word $u$.

We first present a DRA learner using membership, equivalence, and memorability queries, and then give improved complexity results for some decision problems of DRAs that are closely related to solving queries.


\subsection{Learning DRAs with Memorability Queries} \label{sec:learning-mem}


We fix a target DRA-recognizable language $L$ over $(\alphabet, R) = (\alphabetQ, <)$ in the whole section.
The learner is assumed to know $(\alphabet, R)$ and maintains an observation table to record its knowledge of $L$.

\begin{figure}[t]
    \centering
\includegraphics[scale=0.8]{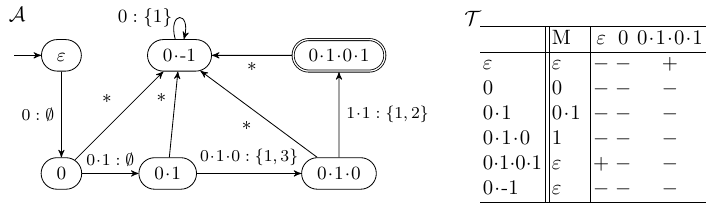}
    \caption{The example DRA $\target$ where the missing transitions are labeled with $*$, and the observation table $\obtable$ where the extension set $X$ are omitted.}
\label{fig:table-and-dra}
\end{figure}

\begin{definition}[Observation Table]\label{def:obtable}
%
An observation table $\obtable$ for a DRA over $(\Sigma, R)$ is a tuple $(U, X, S, M, f)$, where $U$, $X$, and $S$ are finite sets of data words: $U$ is the set of \emph{prefixes}, $X$ the \emph{extensions}, and $S$ the \emph{suffixes}.
The function $M: U \cup X \to \finwords$ records the memorable word of each word $u\in U\cup X$, i.e., $M(u) = mem_L(u)$.
The classification function $f : U \times S \to \{+, -\}$ labels each $u \cdot w$ as in $L$ or not, i.e, $f(u, w) = \MQ(u\cdot w)$.
We require (1) $U \cup X$ to be prefix-closed and (2) $\emptyword \in U$ and $\emptyword \in S$.
\end{definition}
Table $\obtable$ in Fig.~\ref{fig:table-and-dra} shows a final observation table for 
$L_{\mathit{rep}} = \{xyxy \in \alphabetQ^4 \mid x < y \}$.
The rows correspond to the prefix set
$U = \{\emptyword, 0, 0\mdot 1, 0\mdot 1\mdot 0, 0\mdot 1\mdot 0\mdot 1, 0\mdot \text{-}1\}$,
and the columns contain the memorable function $M$ and the suffix set
$S = \{\emptyword, 0, 0\mdot 1\mdot 0\mdot 1\}$.
The set $X$ is omitted here, as it is only used internally when constructing conjectured DRAs.
From this table, we can obtain the minimal DRA $\target$ (with respect to $\cong_L$), also shown in Fig.~\ref{fig:table-and-dra}.
Each row $u \in U$ naturally corresponds to a state of $\target$: $u$ serves as the representative of its $\cong_L$-equivalence class, and $\target$ reaches the associated state exactly after reading $u$.
For instance, the empty word $\emptyword$ is the initial state, whereas $0\mdot 1\mdot 0\mdot 1$ is the only final state.

In the remainder of the paper, we abusively use the word representatives in $U$ as state names.
While constructing the observation table, as in the classic $L^*$ algorithm for DFAs, we ensure that for every state $u \in U$, each one-letter extension $u \cdot b$ appears in the table, i.e., in $X$.  
To compute the one-letter extensions of $u$ (i.e., $u \cdot b$ for $b \in \alphabet$), we observe that  
(i) only the memorable word $M(u)$ is relevant, since the types of future words depend solely on its memorable letters; and  
(ii) we do not need to enumerate all $b \in \alphabet$, but only a finite set that covers all possible types of $M(u)\cdot b$.

Let $a_1 < \cdots < a_d$ be the ordered sequence of \emph{distinct} letters in $M(u)$. Then $b$ can take one of the four types of values: 
\begin{enumerate}
    \item $b \in M(u)$, i.e., it means that $b$ occurs also in $M(u)$;
    \item $b = a_1 -1$, i.e, we have that $b < a_i$ for all $1\leq i \leq d$;
    \item $b = a_d + 1$, i.e., it gives that $b > a_i$ for all $1\leq i \leq d$; and finally
    \item $b = (a_i + a_{i+1})/2$ for some $1\leq i<d$, i.e., $a_j <b < a_k$ for all $j \le i$ and $k \ge i+1$.
\end{enumerate}

Note that the fourth type of values will be ignored if $|M(u)| = 1$ and we set $b = 0$ when $u = \emptyword$.
If $R$ is the identity test, it suffices to add values $b \in M(u)$ or a fresh letter $b \notin M(u)$.
For example, if $M(u) = 0\mdot 1$, then $b \in \{0, 1, 2\}$.

To ensure that $U$ matches the state set of the target DRA, we must determine when two words $u_1, u_2 \in U$ are \emph{nonequivalent} under the Nerode congruence $\cong_L$ (\Cref{def:eq-relation-data-languages}).
Recall that $u_1 \not\cong_L u_2$ holds precisely when
(i) their memorable word types differ, i.e., $mem_L(u_1) \not\sim_R mem_L(u_2)$, or
(ii) there exist $x, y \in \finwords$ such that
$mem_L(u_1)\cdot x \sim_R mem_L(u_2)\cdot y$ but
$u_1 \cdot x \in L \not\Leftrightarrow u_2 \cdot y \in L$.
While testing whether two words have the same type is straightforward, finding witness words $x$ and $y$ is more challenging.
In contrast, the classical $L^*$ algorithm for DFAs~\cite{DBLP:journals/iandc/Angluin87} needs only a \emph{single} witness to distinguish two words.

Fortunately, since $\alphabet$ is dense, it suffices to find an order-preserving mapping $\sigma$ with $\sigma(mem_L(u_1)) = mem_L(u_2)$ and a \emph{single} $L$-distinguishing suffix $w$ that separates $\sigma(u_1)$ from $u_2$.
The following lemma justifies this reduction:

\begin{restatable}{lemma}{witnessNotEq}\label{lem:witness_of_not_eq}
Let $\alphabet$ be a dense set. For two words $u_1, u_2 \in \Sigma^*$ with $mem_L(u_1) \sim_R mem_L(u_2)$, $u_1 \not\cong_L u_2$ if and only if there exist a bijective order-preserving mapping $\sigma$ and a word $w$ such that $\sigma(mem_L(u_1)) = mem_L(u_2)$ and $ \sigma(u_1) \cdot w \in L \not \Leftrightarrow u_2 \cdot w \in L$.
\end{restatable}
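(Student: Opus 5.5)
We prove the two directions separately. Throughout we use that $L$ is a data language, hence closed under order-preserving bijections: $\sigma(z)\sim_< z$, so $\sigma(z)\in L \Leftrightarrow z\in L$. We also use that memorability is invariant under such bijections, i.e.\ $mem_L(\sigma(u)) = \sigma(mem_L(u))$. This follows directly from Definition~\ref{def:memorable} by conjugating the witness mapping with $\sigma$ and applying $\sigma$ to the distinguishing extension.

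\emph{($\Leftarrow$)} Suppose $\sigma$ and $w$ are given with $\sigma(mem_L(u_1)) = mem_L(u_2)$ and $\sigma(u_1)\cdot w\in L \not\Leftrightarrow u_2\cdot w\in L$. Put $x = \sigma^{-1}(w)$ and $y = w$. Then $mem_L(u_1)\cdot x = \sigma^{-1}(mem_L(u_2)\cdot w) \sim_< mem_L(u_2)\cdot y$. Moreover $u_1\cdot x = \sigma^{-1}(\sigma(u_1)\cdot w)$, so $u_1\cdot x\in L \Leftrightarrow \sigma(u_1)\cdot w\in L$. Hence $u_1\cdot x\in L\not\Leftrightarrow u_2\cdot y\in L$, and the second clause of Definition~\ref{def:eq-relation-data-languages} fails, giving $u_1\not\cong_L u_2$.

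\emph{($\Rightarrow$)} Since the memorable words have the same type, $u_1\not\cong_L u_2$ yields $x,y$ with $mem_L(u_1)\cdot x\sim_< mem_L(u_2)\cdot y$ and $u_1\cdot x\in L\not\Leftrightarrow u_2\cdot y\in L$. Take $\sigma = \sigma_{mem_L(u_1)\cdot x \rightarrow mem_L(u_2)\cdot y}$, the bijective order-preserving map from the preliminaries, which exists because $\Sigma$ is dense. It satisfies $\sigma(mem_L(u_1)) = mem_L(u_2)$ and $\sigma(x) = y$. Set $w = y$. Then $\sigma(u_1)\cdot w = \sigma(u_1\cdot x)$, which has the same membership as $u_1\cdot x$, and so differs in membership from $u_2\cdot w$.

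The only delicate point is that $\sigma$ must map all of $u_1$, not merely $mem_L(u_1)$. Bijectivity makes $\sigma$ defined on every letter, and order preservation on the whole of $u_1\cdot x$ holds since $\sigma$ is monotone on all of $\Sigma$. This is exactly where density is needed: over $\mathbb{Z}$, a monotone map fixing the pattern of $mem_L(u_1)\cdot x$ may fail to extend to the non-memorable letters of $u_1$. I expect no further obstacle. Note also that the argument does not actually require the invariance $mem_L(\sigma(u))=\sigma(mem_L(u))$; closure of $L$ under $\sigma$ suffices.
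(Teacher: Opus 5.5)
Your proof is correct and follows essentially the same route as the paper in both directions. For ($\Rightarrow$) you take $x,y$ from the definition of $\cong_L$, use density to obtain a bijective order-preserving $\sigma$ with $\sigma(mem_L(u_1)\cdot x)=mem_L(u_2)\cdot y$, and set $w=y$; for ($\Leftarrow$) you set $x=\sigma^{-1}(w)$ and $y=w$, making the membership transfer explicit via $u_1\cdot x=\sigma^{-1}(\sigma(u_1)\cdot w)$, and, as you note yourself, the invariance $mem_L(\sigma(u))=\sigma(mem_L(u))$ is never actually needed.
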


A further challenge is that infinitely many order-preserving mappings $\sigma$ may exist, making enumeration impossible.
Nevertheless, we prove that it suffices to find \emph{one} suitable mapping.
Notably, such a bijective order-preserving mapping $\sigma$ with $\sigma(mem_L(u_1)) = mem_L(u_2)$ can be obtained easily, as described in~Sect.~\ref{sec:pre}.

\begin{restatable}{lemma}{inequivalenceS}\label{lem:inequivalence-S}
Let $\alphabet$ be a dense set, and let $u_1, u_2$ be distinct words. 
If $\sigma$ is a bijective order-preserving mapping with $\sigma(mem_L(u_1)) = mem_L(u_2)$, then
$u_1 \not\cong_L u_2$ iff there exists $w \in \finwords$ such that
$\sigma(u_1)\cdot w \in L \not\Leftrightarrow u_2 \cdot w \in L$.
\end{restatable}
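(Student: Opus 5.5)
The plan is to prove both directions directly from \Cref{def:eq-relation-data-languages}, using two invariance facts about data languages over a dense domain. I expect the second fact to be the main obstacle.

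\textbf{(F1) Equivariance.} For every bijective order-preserving $\sigma$ and every word $z$, we have $z\in L \Leftrightarrow \sigma(z)\in L$, since $\sigma(z)\sim_R z$ and $L$ is a data language. Moreover, $mem_L(\sigma(u)) = \sigma(mem_L(u))$. For the latter, I would check \Cref{def:memorable}. Given witnesses $u', a', b', w, \sigma'$ for $a$ being memorable in $u$, the same witnesses show that $\sigma(a)$ is memorable in $\sigma(u)$. The requirement $u\cdot a\sim_R u'\cdot a'$ only depends on types, and $\sigma(u)\cdot\sigma(a)\sim_R u\cdot a$. For the converse, apply the same argument with $\sigma^{-1}$. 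The ordering by last occurrence is preserved because $\sigma$ acts positionwise.

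\textbf{(F2) Only memorable letters matter.} For every word $v$ and all $x,y$ with $mem_L(v)\cdot x\sim_R mem_L(v)\cdot y$, we have $v\cdot x\in L\Leftrightarrow v\cdot y\in L$. This is exactly reflexivity of $\cong_L$, which \cite{Ley:10:TR} establishes, so I would cite it. If a self-contained argument is needed, I would use density. Choose a bijective order-preserving $\rho$ that fixes $mem_L(v)$ pointwise and maps $x$ onto $y$; it exists because $x$ and $y$ are positioned identically relative to $mem_L(v)$. By (F1), $v\cdot x\in L\Leftrightarrow \rho(v)\cdot y\in L$. It then remains to pass from $\rho(v)$ back to $v$ while keeping the extension $y$ fixed. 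I would do this by moving the non-memorable letters of $v$ one at a time, each by a small perturbation. Each single step preserves membership of all extensions, because a letter that is not memorable is, in the dense case, exactly a letter whose nearby order-preserving replacement admits no distinguishing extension. The delicate part is arranging that the intermediate words keep the right type and that letters of $y$ lying between perturbed letters are not disturbed. I would handle this by routing each letter through a sufficiently fine sequence of values, chosen using density.

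\textbf{($\Leftarrow$)} Suppose $\sigma(u_1)\cdot w\in L\not\Leftrightarrow u_2\cdot w\in L$. Put $x=\sigma^{-1}(w)$ and $y=w$. Then $\sigma(mem_L(u_1)\cdot x)=mem_L(u_2)\cdot y$, so $mem_L(u_1)\cdot x\sim_R mem_L(u_2)\cdot y$. By (F1), $u_1\cdot x\in L\Leftrightarrow \sigma(u_1)\cdot w\in L$, and the right-hand side disagrees with $u_2\cdot y\in L$. This violates the second clause of \Cref{def:eq-relation-data-languages}, so $u_1\not\cong_L u_2$.

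\textbf{($\Rightarrow$)} If $u_1\not\cong_L u_2$, the first clause of \Cref{def:eq-relation-data-languages} holds by the hypothesis on $\sigma$. Hence there are $x,y$ with $mem_L(u_1)\cdot x\sim_R mem_L(u_2)\cdot y$ and $u_1\cdot x\in L\not\Leftrightarrow u_2\cdot y\in L$. I claim that $w=y$ works. By (F1), $u_1\cdot x\in L\Leftrightarrow \sigma(u_1)\cdot\sigma(x)\in L$. Also by (F1), $mem_L(\sigma(u_1))=\sigma(mem_L(u_1))=mem_L(u_2)$. Then
\[mem_L(\sigma(u_1))\cdot\sigma(x)\sim_R mem_L(u_1)\cdot x\sim_R mem_L(u_2)\cdot y = mem_L(\sigma(u_1))\cdot y,\]
so (F2) applied to $v=\sigma(u_1)$ gives $\sigma(u_1)\cdot\sigma(x)\in L\Leftrightarrow\sigma(u_1)\cdot y\in L$. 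Chaining these equivalences, $\sigma(u_1)\cdot y\in L\not\Leftrightarrow u_2\cdot y\in L$, as required.

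This argument never depends on which $\sigma$ was chosen, which is the point of the lemma. All the difficulty is isolated in (F2).
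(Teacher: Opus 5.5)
Your proposal is correct and is essentially the paper's argument: both directions rest on invariance of $L$ under bijective order-preserving maps together with reflexivity of $\cong_L$ (your (F2), which the paper likewise takes for granted via ``$u \cong_L u$''), and the paper merely packages this as \Cref{lem:witness_of_not_eq} plus the $\sigma$-independence result \Cref{lem:one_mapping}. One small difference is that the paper applies reflexivity at $u_1$ itself (to $x$ and $\sigma^{-1}(y)$) rather than at $\sigma(u_1)$, which makes your equivariance claim $mem_L(\sigma(u_1))=\sigma(mem_L(u_1))$ unnecessary, though your verification of that claim is sound.
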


To derive a conjectured DRA from the observation table $\obtable$, the table must be \emph{closed}. 
We introduce the notion of \emph{S-consistency}: a word $v \in X$ is said to be \scap a prefix $u \in U$ if $M(v) \sim_R M(u)$ and, for every $w \in S$, we have $\MQ(\sigma(v)\cdot w) = f(u, w)$, where $\sigma$ is an order-preserving mapping on $\Sigma$ s.t. $\sigma(M(v))=M(u)$. 
By \Cref{lem:inequivalence-S}, the table $\obtable$ is said to be closed if, for each extension $ua \in X$ of a word $u \in U$, there exists a prefix $u' \in U$ s.t. $ua$ is \sca to $u'$.
Intuitively, the target DRA $\target$ moves to state $u'$ at state $u$ when reading $a \in \alphabet$ if $ua$ is \scap $u'$.
Note that \scn is not an equivalence relation, as it does not need to be symmetric nor transitive.

Our algorithm relies on two key properties of \scn:
(i) if $u$ is not \scap $v$, then $u \not\cong_L v$;
(ii) if $u$ is not \scap $v$, then $u$ is not \spcap $v$ whenever $S \subseteq S'$.

\begin{figure}[t]
    \centering
    \includegraphics[scale=0.8]{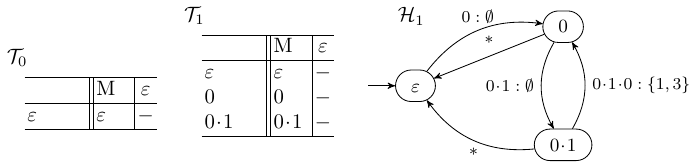}
    \caption{Unclosed table $\obtable_0$, closed table $\obtable_1$ and its induced DRAs $\hypothesis_1$.}
    \label{fig:running-example-tables-dras}
\end{figure}

If $\obtable$ is not closed, the learner invokes 
\textsf{CloseTable}$(\obtable,\MQ,\Mem)$, which repeatedly:
(i) finds a word $ua \in X$ with $u \in U$ that has no \sca $u' \in U$;
(ii) adds $ua$ to $U$ and adds also all its one-letter extensions $ua b$ to $X$;
(iii) fills the new row labeled with $ua$ by setting $f(ua, w) = \MQ(ua \cdot w)$ for all $w \in S$ and updates $M(ua b)=\Mem(uab)$ for all new extensions $uab \in X$.

    Consider the initial table $\obtable_0$ in Fig.~\ref{fig:running-example-tables-dras} for learning
    $L = \{xyxy \in \alphabetQ^4 \mid x < y\}$ with $X = \{0\}$ and $M(0) = 0$.
    To close the table, the learner checks whether $\emptyword \cdot 0$ is \scap $\emptyword$.
    Since $M(\emptyword \cdot 0) = 0$ differs in word type from $M(\emptyword) = \emptyword$, the check fails, and $0$ is added to $U$.
    We update $f$ with $f(0, \emptyword) = \MQ(0 \cdot \emptyword) = -$ for the new row $0$ and column $\emptyword$.
    Further, the one-letter extensions of $0$ are $\{0\cdot 0, 0\cdot \text{-}1, 0\cdot 1\}$, so $X$ becomes
    $\{0, 0\cdot 0, 0\cdot \text{-}1, 0\cdot 1\}$.
    Using memorability queries, we also update $M$ as $M(0\mdot 0) = \emptyword$, $M(0\mdot \text{-}1) = \emptyword$, $M(0\mdot 1) = 0\mdot 1$.
    Continuing in this way, the learner adds $0\mdot 1$ to $U$ and obtains the closed table $\obtable_1$ in Fig.~\ref{fig:running-example-tables-dras}.

With all extensions computed for each new state, the conjectured DRA $\hypothesis$ we construct will be complete.
Once $\obtable$ is closed, we can induce $\hypothesis$ using~\Cref{def:hypothesis-construction}.
\begin{definition}[DRA Construction]\label{def:hypothesis-construction}
    Let $\obtable = (U, X,S, M,  f)$ be a closed observation table 
    and $k = \max\left\{|M(u)| \mid u \in U\right\}$.
    We construct a $k$-DRA $\hypothesis=(Q, q_0, F, \Delta)$ from $\obtable$ as follows:
    \begin{itemize}
    \item $Q = \bigcup_{i=0}^k Q_i$ where $Q_i = \left\{ u \in U \mid |\Mem(u) | = i\right\}$, $q_0 = \emptyword \in Q_0$, $F = \left\{u \in U \mid f(u, \emptyword) = +\right\}$, and
    \item $(u, \tau, E, u') \in \Delta$ if there is an extension $u \mdot a \in X$ s.t. (1) $u' \in U$ is the first found word with which $u\mdot a$ is \sca, (2) $\tau = M(u)\cdot a = a_1 \cdots a_d$, and $E$ contains indices $i$ where $a_i \notin M(u\cdot a)$ or $a_i = a_d = a$ with $i < d$. 
    \end{itemize}
\end{definition}
It immediately follows from~\Cref{thm:myhill} that:
\begin{lemma}\label{lem:correctness-dra-construction}
If $U$ contains exactly the set of word representatives defined by $\cong_L$, then $\hypothesis$ is a minimal DRA recognizing the target language $L$.
\end{lemma}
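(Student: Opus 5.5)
The plan is to show that, under the hypothesis, the hypothesis $\hypothesis$ of \Cref{def:hypothesis-construction} is isomorphic to the canonical automaton $\mathcal{C}_L$. The claim then follows from \Cref{thm:myhill}, which says $\mathcal{C}_L$ is the unique minimal \wdra for $L$. We identify each row $u \in U$ with the class $[u]_{\cong_L}$. By hypothesis this map $U \to Q(\mathcal{C}_L)$ is a bijection. It respects the partition into $Q_i$, since $M(u)=mem_L(u)$ gives $|\Mem(u)| = |mem_L(u)|$. It sends $\emptyword$ to $[\emptyword]_{\cong_L}$. It also respects final states: $f(u,\emptyword)=+$ iff $u\in L$, and this holds iff $[u]_{\cong_L}\in F$, because $\cong_L$ refines $L$ (take $x=y=\emptyword$ in \Cref{def:eq-relation-data-languages}).

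Next we match the transitions. Fix $u\in U$ and an extension $u\cdot a\in X$. The letters added as one-letter extensions cover every $<$-type of $M(u)\cdot b$, as argued in the case analysis before the definition. Hence the types $\tau$ labelling transitions out of $u$ in $\hypothesis$ are exactly those labelling transitions out of $[u]_{\cong_L}$ in $\mathcal{C}_L$, except that $\mathcal{C}_L$ may also use other letters $a'$ with $mem_L(u)\cdot a'\sim_< mem_L(u)\cdot a$. For the latter we use the bijective order-preserving $\sigma$ from Sect.~\ref{sec:pre} that fixes $mem_L(u)$ and maps $a'$ to $a$. Since $u\cdot a'$ and $u\cdot a$ agree up to renaming of non-memorable letters, they lead to the same class and the same $E$. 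The $E$ sets coincide by definition, because both use $M(u\cdot a)=mem_L(u\cdot a)$.

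The remaining point is the target state. $\hypothesis$ sends $u\cdot a$ to the first $u'\in U$ with which $u\cdot a$ is \sca, while $\mathcal{C}_L$ sends it to $[u\cdot a]_{\cong_L}$. By the hypothesis, $[u\cdot a]_{\cong_L}$ has a representative $u^\ast\in U$. If $u\cdot a$ were \sca with some $u'\neq u^\ast$, we would need $u\cdot a\cong_L u'$; then $u'\cong_L u^\ast$, contradicting that $U$ contains one representative per class. So we must show that \scn with $u'$ implies $u\cdot a \cong_L u'$. This is the main obstacle, since property (i) only gives the converse direction (not \sca implies $\not\cong_L$).

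I would resolve it by reading the hypothesis "$U$ contains exactly the representatives" as including that $S$ already separates all rows. Concretely, distinct $u',u''\in U$ are never both \sca targets of the same extension. Otherwise I would argue that the first found \sca $u'$ must be $u^\ast$: if $u\cdot a$ were \sca with $u'\not\cong_L u\cdot a$, then $u'$ and $u^\ast$ would both survive every suffix in $S$, which conflicts with the table's defining invariant. Here \Cref{lem:inequivalence-S} justifies that a single canonical $\sigma$ suffices for these checks. With targets matched, $\hypothesis\cong\mathcal{C}_L$, and \Cref{thm:myhill} gives minimality and $L(\hypothesis)=L$.
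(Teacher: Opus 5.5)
Your overall route is the paper's. The paper's only justification is that the lemma ``immediately follows'' from Theorem~\ref{thm:myhill}, i.e.\ under $u\mapsto[u]_{\cong_L}$ the hypothesis $\hypothesis$ is $\mathcal{C}_L$. Your matching of the $Q_i$, the initial and final states, the transition types and the sets $E$ spells this out correctly.

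You are also right that the target-state step does not follow from the hypothesis on $U$. Your first resolution is the one that works: assume that no extension $u\mdot a\in X$ is $S$-consistent with a row other than its own representative $u^\ast$. Since $u\mdot a$ is always $S$-consistent with $u^\ast$ (Lemma~\ref{lem:inequivalence-S}), the first row found is then $u^\ast$. This is a condition on extensions against rows. It is not the same as $S$ separating the rows pairwise, because $S$-consistency is not transitive, so state it in the ``concretely'' form you gave.

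Your fallback (``otherwise \dots\ conflicts with the table's defining invariant'') is a genuine error, and you should drop it. The table has no such invariant; only closedness is required. Without the extra assumption the statement is actually false. Take $L_{\mathit{rep}}$ with the six rows of Fig.~\ref{fig:table-and-dra}, one per $\cong_L$-class, but with $S=\{\varepsilon\}$.
\begin{itemize}
\item The table is closed, since each extension is $S$-consistent with its own class representative.
\item The only extensions whose memorable word has length~$1$ are $0$ and $0\mdot1\mdot0$. Both are rejected, so each is $S$-consistent with both rows $0$ and $0\mdot1\mdot0$.
\item Scanning rows in the order in which the paper lists $U$, both extensions are sent to row $0$. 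So $0\mdot1\mdot0$ is never entered.
\item The only incoming transition of the accepting row $0\mdot1\mdot0\mdot1$ leaves $0\mdot1\mdot0$. Hence that row is unreachable and $L(\hypothesis)=\emptyset$.
\item With the opposite order, $\hypothesis$ accepts $0\mdot0\notin L$.
\end{itemize}

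The paper's own analysis expects this situation. Example~\ref{example:Sconsistency-not-equivalence} exhibits an extension $S$-consistent with two rows. The bound of $m(n-1)$ equivalence queries in Theorem~\ref{thm:correctness-learning-algorithm} explicitly counts counterexamples that only redirect a transition, which can occur after $U$ is already complete. So state the strengthened hypothesis explicitly; the paper's one-line proof tacitly relies on it as well.
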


Consider constructing the DRA $\hypothesis_1$ from the closed table $\obtable_1$ in Fig.~\ref{fig:running-example-tables-dras}.
Since the maximal length of a memorable word is $|0\mdot 1| = 2$, $\hypothesis_1$ is a $2$-DRA. The set of states is $Q = \{\emptyword\} \uplus \{0\} \uplus \{0\mdot 1\}$, with $q_0 = \emptyword$, and $F = \emptyset$ because no $u \in U$ satisfies $f(u, \emptyword) = +$.
Here, we focus on the transition from $0$ to $0\mdot 1$ in $\Delta$.
The one-letter extension $0\mdot 1$ of state $0$ is \scap $0\mdot 1$, so reading letter $1$ at $0$ leads to state $0\mdot 1$. We then set $\tau = M(0)\cdot 1 = 0\mdot 1$ and $E = \emptyset$, since both $0$ and $1$ belong to $M(0\mdot 1)$ and no letters need to be removed.

\begin{algorithm}[t]
\caption{Function $\textsf{cexAnalyze}(\obtable, \hypothesis, w, \MQ(\cdot))$ that analyses $w$ and returns a suffix $v$. The prerequisite is that $w \in L\ominus \lang(\hypothesis)$.}
\label{algo:cex-analysis}
    $u  \gets \emptyword, r \gets \emptyword$, $\textit{mq} \gets \MQ(w)$\;
    \While{True}{
        
        Let $a \mdot z := w$ where $a \in \alphabet$; \CommentSty{ //$w = \emptyword$ is impossible}\; 
        \CommentSty{//INVARIANT: $r = M(u)$}\;
        Let $(u, r) \xrightarrow{M(u)\cdot b: E} (u', r')$ be the transition taken in $\hypothesis$ when reading $a$\;
        \CommentSty{//INVARIANT: $r\cdot a = M(u)\cdot a\sim_R M(u)\cdot b$, $M(u') \sim_R M(ub) = r'$}\;
        Let $\sigma_{1}$ be an order-preserving mapping from $M(u)\cdot a$ to $M(u)\cdot b $\;
        \CommentSty{//Since $r = M(u)$, $r' = M(u\cdot b)$ is obtained by removing letters in positions $E$ from $r\cdot b$}\;
        Let $\sigma_2$ be an order-preserving mapping from $M(u\cdot b)$ to $M(u')$\;
        \If{$\MQ(u'\mdot \sigma_2(\sigma_{1}(z))) \not = \textit{mq}$}{
            \Return new suffix $\sigma_2(\sigma_{1}(z))$\;
        }
        
        $u \gets u', r \gets M(u'), w\gets \sigma_2(\sigma_{1}(z));$\CommentSty{   //$\MQ(u'\cdot \sigma_2(\sigma_1(z))) = \textit{mq}$} \;
    }
\end{algorithm}

Once the conjectured DRA $\hypothesis$ is constructed, the learner poses an equivalence query $\EQ(\hypothesis)$ to the teacher.
If the answer is positive, $\hypothesis$ is returned as the result. Otherwise, the teacher provides a counterexample $w \in \lang(\hypothesis)\ominus L$.
The learner analyzes $w$ to update the observation table, allowing it to correct $\hypothesis$ by adding new states or redirecting transitions.
The counterexample analysis procedure is described in \Cref{algo:cex-analysis}.
An example of counterexample analysis can be found in \Cref{app:examples}.

In \Cref{algo:cex-analysis}, the counterexample $w$ is processed letter by letter. In each iteration, we assume the current configuration is $(u, r)$ and $\MQ(u\cdot a z) = \textit{mq}$, where $w = a\cdot z$.
Reading letter $a$ at state $u$, $\hypothesis$ transitions to $u'$. However, this transition may have been constructed using a different letter $b$ such that $ub$ is \scap $u'$ and $a \neq b$, where $M(u)\cdot a \sim_R M(u)\cdot b$.

Our goal is to identify an extension $ub$ wrongly classified as equivalent to $u'$. We first normalize the remaining suffix $z$ for $ub$ via the mapping $\sigma_1$ from $M(u)\cdot a$ to $M(u)\cdot b$, preserving membership results: $\MQ(ub\cdot \sigma_1(z)) = \textit{mq} = \MQ(ua\cdot z)$, since both runs visit the same state sequence in the target DRA of $L$.
To verify whether $ub \cong_L u'$, we further normalize $\sigma_1(z)$ for $u'$ using $\sigma_2$ from $M(ub)$ to $M(u')$, yielding $v = \sigma_2(\sigma_1(z))$. If $\MQ(u'\cdot v) \neq \textit{mq}$, then $u' \not\cong_L ub$, according to \Cref{lem:inequivalence-S} since $\sigma_2(ub\cdot \sigma_1(z)) \sim_R \sigma_2(ub)\cdot v$, giving $\MQ(\sigma_2(ub)\cdot v) = \textit{mq}$.
Otherwise, we proceed to the next iteration with $u \gets u'$, $r \gets M(u')$, and $w \gets v$. Since $w \in L \ominus \lang(\hypothesis)$, before reaching the last state $u''$ where $\MQ(u''\cdot \emptyword) \neq \textit{mq}$, the membership results must change at some point. Hence, the \textbf{if} condition will eventually be satisfied, and a suffix $v$ will be returned.

The correctness of \Cref{algo:cex-analysis} is guaranteed by the following lemma.
\begin{restatable}{lemma}{correctnessCEXanalysis}\label{lem:correct-ness-cex-analysis} 
    Let $w$ be the counterexample provided by the teacher.
    \Cref{algo:cex-analysis} terminates after at most $|w|$ iterations and returns a suffix $v$ such that $ub \in X$ is not $S  {\uplus} \{v\}$-consistent with a word $u' \in U$ with which $ub$ is currently $S$-consistent.
\end{restatable}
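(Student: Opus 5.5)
The plan is an invariant argument on the loop of \Cref{algo:cex-analysis}. Write $w_0 = w$ for the counterexample and $w_j$ for the value of the variable $w$ at the start of iteration $j$. I will show by induction on $j$ that at the start of iteration $j$:
\begin{itemize}
\item $u \in U$;
\item $r = M(u)$;
\item $|w_j| = |w| - j$;
\item $\MQ(u\cdot w_j) = \textit{mq}$.
\end{itemize}

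The base case is immediate, since $u=\emptyword$ and $\MQ(\emptyword\cdot w) = \textit{mq}$. For the step, the new word $\sigma_2(\sigma_1(z))$ has length $|z| = |w_j|-1$, $u'\in U$, and $r$ is reset to $M(u')$. The membership equality is exactly the negation of the \textbf{if} condition. So if the loop does not return, the invariant propagates.

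Termination within $|w|$ iterations comes from a second invariant: $u$ is the state reached by $\hypothesis$ on the prefix of $w$ read so far, up to order-preserving renaming. The key point is that $\hypothesis$ is well-typed and its transitions depend only on the type of $M(u)\cdot a$. Consequently the renamed suffix $w_j$ drives $\hypothesis$ from $(u,M(u))$ along the same state sequence as the original suffix of $w$ from the original configuration. Hence $u\cdot w_j \in \lang(\hypothesis) \iff w\in\lang(\hypothesis)$, and the latter is $\neg\textit{mq}$ because $w$ is a counterexample.

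Suppose the loop never returned and reached $w_j=\emptyword$ after $|w|$ iterations. Then $f(u,\emptyword)=\MQ(u)=\textit{mq}$, while acceptance of $u$ in $\hypothesis$ is determined by $f(u,\emptyword)$ (the definition of $F$). So $u\in\lang(\hypothesis)\iff \textit{mq}$, a contradiction. Therefore some iteration $j<|w|$ returns. The guard in line 3 is also justified, since $w_j\neq\emptyword$ for $j<|w|$.

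At the returning iteration, set $v=\sigma_2(\sigma_1(z))$. I claim that $ub\in X$ is currently $S$-consistent with $u'$ but not $S\uplus\{v\}$-consistent with it.
\begin{itemize}
\item \emph{Current $S$-consistency.} This holds by \Cref{def:hypothesis-construction}, since the transition labelled $M(u)\cdot b$ was created from $ub$ with $u'$ the first $S$-consistent prefix.
\item \emph{Failure after adding $v$.} First, $\MQ(ub\cdot\sigma_1(z)) = \MQ(ua\cdot z) = \textit{mq}$. Here I use that $M(u)\cdot a\sim_R M(u)\cdot b$ and $\sigma_1$ is bijective order-preserving fixing the type relative to $M(u)$. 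Only memorable letters of $u$ matter (\Cref{def:eq-relation-data-languages}), so $ua\cdot z \cong$-extends like $ub\cdot\sigma_1(z)$; formally, $u\cdot a \cong_L u\cdot b$ up to $\sigma_1$, and $M(u)\cdot a\cdot z\sim_R M(u)\cdot b\cdot\sigma_1(z)$ gives equal membership. Next, applying the bijective order-preserving $\sigma_2$ preserves membership of the whole word, so $\MQ(\sigma_2(ub)\cdot v)=\textit{mq}\neq \MQ(u'\cdot v)$. Since $\sigma_2(M(ub))=M(u')$ is exactly the mapping used in the definition of $S$-consistency, the column $v$ witnesses failure.
\end{itemize}

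The main obstacle is the first of these two points, namely that $\MQ(u a z) = \MQ(u b\,\sigma_1(z))$ is justified, because $\sigma_1$ need not fix the non-memorable letters of $u$. I would handle it through \Cref{def:eq-relation-data-languages} applied to $u$ versus itself: since $M(u)\cdot(a z)\sim_R M(u)\cdot(b\,\sigma_1(z))$ by construction of $\sigma_1$, the second clause of $\cong_L$ with $u\cong_L u$ directly yields equal membership.

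A second subtlety is the choice of $\sigma_2$. The definition of $S$-consistency speaks of \emph{an} order-preserving $\sigma$. I would invoke \Cref{lem:inequivalence-S} to note that the verdict is independent of which bijective order-preserving mapping is chosen, so the witness $v$ is valid regardless.
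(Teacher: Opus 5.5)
Your proposal is correct and follows the paper's proof. Your second invariant is exactly Lemma~\ref{lem:same-run}, and your chain $\MQ(ua\cdot z)=\MQ(ub\cdot\sigma_1(z))=\MQ(\sigma_2(ub)\cdot v)\neq\MQ(u'\cdot v)$ is the paper's final step; your appeal to $u\cong_L u$ is just a more direct version of its argument via the register contents of the canonical DRA.

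Two small fixes. The termination claim should say that the run of $\hypothesis$ from $(u,M(u))$ on $w_j$ ends in $F$ iff $w\in\lang(\hypothesis)$; the literal statement $u\cdot w_j\in\lang(\hypothesis)$ is about a run from the initial configuration, which your invariant does not give. Also, independence from the chosen mapping is directly Lemma~\ref{lem:one_mapping}, not \Cref{lem:inequivalence-S}.
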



\begin{algorithm}[t]
\caption{The DRA learner}\label{algo:the-algorithm}
    Initialize table $\obtable = ( U, X, S, M, f)$ where $U = S = \{\emptyword\}$, $X = \{0\}$, $M(\emptyword) = \emptyword$, $M(0) = \Mem(0)$, and $f(\emptyword, \emptyword) = \MQ(\emptyword)$\;
    $\textsf{CloseTable}(\obtable, \MQ(\cdot), \Mem(\cdot))$ and let $\hypothesis$ be the DRA constructed from $\obtable$\;
    Let $(\textit{res}, w)$ be the response from the teacher on $\EQ(\hypothesis)$\;
    \While{$\textit{res} = {No}$}{
        $v \gets \textsf{cexAnalyze}(\obtable, \hypothesis, w, \MQ(\cdot))$\;
        Add $v$ to the suffix set $S$ in $\obtable$ and set $f(u, v) = \MQ(u\mdot v)$ for all $u \in U $\;
         $\textsf{CloseTable}(\obtable, \MQ(\cdot), \Mem(\cdot))$ and let $\hypothesis$ be the DRA constructed from $\obtable$\;
        Let $(\textit{res}, w)$ be the response from the teacher on $\EQ(\hypothesis)$\;
    }
    \Return $\hypothesis$\;
\end{algorithm}


We now formally define the learner in \Cref{algo:the-algorithm}.
To initialize the observation table $\obtable$, the learner inserts the empty word $\emptyword$ into $U$. 
The extension set is initialized as $X = \{0\}$, since $0$ has the same type as all one-letter words.
It fills entries via membership queries $\MQ(\cdot)$ and memorability queries $\Mem(\cdot)$, so $M(\emptyword) = \emptyword$ and $M(0) = \Mem(0)$. 
The function $\textsf{CloseTable}$ closes $\obtable$ with the help of $\MQ(\cdot)$ and $\Mem(\cdot)$. Conjectured DRAs $\hypothesis$ are then derived from $\obtable$ following \Cref{def:hypothesis-construction}.
On receiving a counterexample $w$, $\textsf{cexAnalyze}$ (\Cref{algo:cex-analysis}) returns a suffix $v$ that is added to $S$. New entries are filled via $\MQ(\cdot)$. The refinement loop of $\hypothesis$ terminates once the teacher returns a positive answer.

The soundness and completeness of \Cref{algo:the-algorithm} directly follow from \Cref{lem:inequivalence-S,lem:correctness-dra-construction,lem:correct-ness-cex-analysis}.

\begin{restatable}{theorem}{correctnessLearning}\label{thm:correctness-learning-algorithm}
    Let $\target$ be a complete canonical minimal \wdra of $L$ with $n$ states and $m$ transitions defined under $\cong_L$, and let $d$ be the maximum length of any counterexample returned by the teacher. It holds that:
\begin{itemize}
\item \Cref{algo:the-algorithm} terminates and returns a DRA $\hypothesis$ that is isomorphic to $\A$. 
\item \Cref{algo:the-algorithm} terminates after at most $m \times (n-1)$ equivalence queries.
\item \Cref{algo:the-algorithm} requires $m $ memorability queries and $\mathcal{O}(d \times m \times n + m^3 \times n^3)$ membership queries.
\end{itemize}
\end{restatable}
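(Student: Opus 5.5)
The plan is a potential-function argument in the style of the $L^*$ analysis, built on two invariants of the table. The first is that any two distinct rows $u_1,u_2\in U$ are $\cong_L$-inequivalent. This holds because a word enters $U$ only when it is not $S$-consistent with any existing prefix. By property (i) of $S$-consistency, together with \Cref{lem:inequivalence-S}, such a word lies in a new $\cong_L$-class. Hence $|U|\le n$ at all times, and each row maps injectively to a state of $\target$. The second invariant is that, by construction, every $u\in U$ has one extension $ub\in X$ per type of $M(u)\cdot b$. These extensions correspond bijectively to the outgoing transitions of the state $[u]_{\cong_L}$ in $\target$. Therefore $|X|\le m$, and since $M$ is only queried on new extensions, we need exactly one memorability query per extension, at most $m$ in total.

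For termination and the bound on equivalence queries, I would look at the pairs $(ub,u')$ with $ub\in X$, $u'\in U$, such that $ub$ is currently $S$-consistent with $u'$. By \Cref{lem:correct-ness-cex-analysis}, each counterexample yields a suffix $v$ that destroys at least one such pair for an extension $ub$ whose hypothesis transition pointed to $u'$. By property (ii) of $S$-consistency, a destroyed pair never revives once $S$ grows. Each extension $ub$ is $S$-consistent with its true class representative whenever that representative is in $U$, since \Cref{lem:inequivalence-S} guarantees that a true equivalence is never refuted. So each extension can lose at most $n-1$ wrong candidates before only its correct target remains. With at most $m$ extensions, this gives at most $m(n-1)$ failed equivalence queries.

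When no wrong candidate can be chosen first for any extension, and $U$ has reached all $n$ classes, every transition of $\hypothesis$ follows $\cong_L$. \Cref{lem:correctness-dra-construction} then gives $\hypothesis\cong\target$, and the next equivalence query succeeds. One subtlety is that new rows enter $U$ during closing and thereby add candidates. I would handle this by counting candidate pairs over the final $U$, each of which can be eliminated at most once.

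For membership queries there are two sources. The first is counterexample analysis: each run uses at most $d+1$ queries, over $\mathcal{O}(mn)$ rounds, giving $\mathcal{O}(dmn)$. The second is table filling and consistency checking. Here $|S|\le mn$ because each counterexample adds one suffix. Filling $f$ costs $|U||S|\le n\cdot mn$. Checking $S$-consistency of each extension against each row costs $|X||U||S|$ queries of the form $\MQ(\sigma(v)\cdot w)$, repeated after each of up to $mn$ refinements. This gives about $m\cdot n\cdot mn\cdot mn=m^3n^3$, provided each recheck is charged to a fresh round. The main obstacle I expect is making this last count rigorous without double counting, and in particular arguing that the initial $\textsf{CloseTable}$ rounds fit within the same bound. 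I would try caching all $\MQ$ answers, so that each triple (extension, row, suffix) is queried at most once. That would actually give $\mathcal{O}(m\cdot n\cdot mn)$, which is comfortably within the stated $\mathcal{O}(m^3n^3)$.
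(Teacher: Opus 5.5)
Your proposal is correct and follows the paper's proof essentially step for step. Memorability queries are counted as one per one-letter extension, and these extensions are in bijection with the $m$ transitions of $\target$. Each extension is corrected at most $n-1$ times, because each counterexample, via \Cref{lem:correct-ness-cex-analysis}, eliminates a wrong $S$-consistency pair that never revives as $S$ grows. Membership queries are split into table filling, at most $d+1$ per counterexample analysis, and $|X|\cdot|U|\cdot|S|$ consistency checks per round over $\mathcal{O}(mn)$ rounds.

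Your extra care only sharpens the paper's argument: true equivalences are never refuted, pairs are counted over the final $U$, and caching brings the consistency checks down to $\mathcal{O}(m^2n^2)$. Note that, exactly as in the paper, what you bound is the number of \emph{failed} equivalence queries, so the total is really $m(n-1)+1$. Also, a positive answer arriving early still yields a DRA isomorphic to $\target$, by \Cref{lem:refinement} and \Cref{thm:myhill}, since $|U|\le n$.
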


The returned DRA $\hypothesis$ recognizes $L$, as confirmed by the teacher.
\Cref{algo:the-algorithm} terminates because each counterexample either corrects a transition or adds a new state (\Cref{lem:correct-ness-cex-analysis}), 
and the number of equivalence classes under $\cong_L$ and transitions in target DRA $\target$ is finite.
Now we show that our algorithm runs in polynomial time in the number of queries.

In DFA learning, two words $u_1$ and $u_2$ are considered equivalent with respect to a set of suffixes $S$ if 
$\MQ(u_1 w) = \MQ(u_2 w)$ for all $w \in S$. 
Thus, when a transition $ub$ is misclassified as consistent with some $u'$, it can safely be added to~$U$.
In our setting, however, \scn does not guarantee this property, as it is not an equivalence relation.  
When \Cref{algo:cex-analysis} returns a distinguishing suffix $v$ separating $ub$ from the state $u'$ with which $ub$ is currently $S$-consistent, 
we add $v$ to the suffix set $S$ to correct the transition from $u$ over $b$.  
Consequently, $ub$ becomes not $S \uplus \{v\}$-consistent with $u'$.  
Each transition can be corrected at most $n-1$ times; 
that is, the target state of a transition over $b$ from $u$ may be misclassified no more than $n-1$ times, 
as once $ub$ is inconsistent with a state $u''$, it remains inconsistent with $u''$ thereafter.  
Hence, the learner asks at most $m \cdot (n-1)$ equivalence queries in total.
In practice, new states are typically discovered immediately after adding the suffix returned by the counterexample analysis.
The reasoning for membership and memorability queries can be found in \Cref{app:learning}.

\subsection{Complexity of Decision Problems}\label{sec:decision-problems}

This section studies the complexity of some key decision problems for \dra{s} over ordered data domains.
Understanding these complexities is essential, as they closely correspond to the queries required by our active learning algorithm, particularly when the teacher has the target data language in the form of a \dra.
All results in this section apply to \emph{both} well-typed and non-well-typed \dra{s}.


Let $\mathcal{A}_1 = (Q^1, q_0^1,F^1,\Delta^1)$ be a $k_1$-DRA  and $\mathcal{A}_2 = (Q^2, q_0^2,F^2,\Delta^2)$ be a $k_2$-DRA, respectively. 
The \emph{language intersection non-emptiness} asks whether there is a word $w \in L(\mathcal{A}_1) \cap L(\mathcal{A}_2)$. 
The \emph{language inclusion} problem asks whether $L(\mathcal{A}_1)\subseteq L(\mathcal{A}_2)$,  
equivalently whether $L(\mathcal{A}_1)\cap \overline{L(\mathcal{A}_2)}=\emptyset$,  
where $\overline{L(\mathcal{A}_2)}$ is obtained by complementing $\mathcal{A}_2$ via swapping accepting and non-accepting states since $\mathcal{A}_2$ is deterministic. 
\begin{restatable}{theorem}{thmInclusionPSPACE}\label{thm:pspace-inclusion}
For DRAs over $(\Sigma, =)$ and over $(\Sigma, <)$, the language intersection non-emptiness problem is PSPACE-complete. Consequently, the language inclusion problem is also PSPACE-complete.
\end{restatable}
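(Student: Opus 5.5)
The proof splits into a PSPACE upper bound for both domains and a PSPACE-hardness lower bound. Inclusion then follows at once: $L(\mathcal{A}_1)\subseteq L(\mathcal{A}_2)$ iff $L(\mathcal{A}_1)\cap\overline{L(\mathcal{A}_2)}=\emptyset$. The complement $\overline{L(\mathcal{A}_2)}$ is obtained by swapping final states, after first making $\mathcal{A}_2$ complete with a sink state, which costs only polynomially. Since PSPACE is closed under complement, inclusion is PSPACE-complete once intersection non-emptiness is.

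\textbf{Upper bound.} I would run a nondeterministic on-the-fly reachability search in the product of $\mathcal{A}_1$ and $\mathcal{A}_2$ and then apply Savitch's theorem. The key point is that configurations need not be stored concretely, only up to type. A product configuration is a pair $((p_1,u_1),(p_2,u_2))$ with $|u_1|\le k_1$ and $|u_2|\le k_2$. Whether the next letter $a$ fires given transitions depends only on the $\sim_R$-type of the combined word $u_1\cdot u_2\cdot a$. The successor type is then determined by that type together with the removal sets $E_1,E_2$.

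So the abstract state is $(p_1,p_2,\theta)$, where $\theta$ is the type of $u_1\cdot u_2$. For $R$ being $<$, $\theta$ is a weak ordering of at most $k_1+k_2$ positions; for $=$, it is a partition of those positions. Either one is encodable in $O((k_1+k_2)\log(k_1+k_2))$ bits, so the abstract state has polynomial size. The search guesses, step by step, the relative position of the next letter, obtaining a type of $u_1 u_2 a$ that extends $\theta$. It checks that each $\mathcal{A}_i$ has a transition whose type equals the projection onto $u_i\cdot a$ and that respects the overflow and no-duplication conditions. It then computes the successor type. The search accepts when both $p_1\in F^1$ and $p_2\in F^2$.

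Two facts justify the abstraction. The first is soundness and completeness over a dense domain. For $(\mathbb{Q},<)$, every guessed extension type is realizable by an actual letter, by density; this is the argument the paper attributes to~\cite{Ley:10:TR}. For $(\Sigma,=)$, a fresh letter always exists because $\Sigma$ is infinite. The second fact covers $(\mathbb{Z},<)$: there I would invoke \Cref{thm:Z_eq_Q}. It shows that emptiness of the intersection over $\mathbb{Z}$ coincides with that over $\mathbb{Q}$, since $w\in L_{\mathbb{Q}}$ iff any $\sim_<$-equivalent integer word lies in $L_{\mathbb{Z}}$, and such integer words always exist. This sidesteps the configuration-reachability subtlety over $\mathbb{Z}$.

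\textbf{Lower bound.} I would reduce from DFA intersection non-emptiness, which is PSPACE-complete for an unbounded number of DFAs (Kozen), or equivalently from acceptance by a linear-space-bounded Turing machine. This step is the main obstacle, because the problem as stated involves only two DRAs; the hardness therefore has to come from the registers. The plan is to encode the $n$ tape cells of a polynomial-space machine, or the states of $n$ DFAs, in $O(n)$ registers.

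For $(\Sigma,=)$, the input begins with a prefix $d_0 d_1$ of two distinct values that stand for the bits $0$ and $1$. Next come $n$ values for the cell contents, each equal to $d_0$ or $d_1$, and these are kept in registers. Because of the last-occurrence ordering, I would encode a cell position as a fixed block structure with separate registers for position markers. One DRA then checks that successive configurations are locally consistent, one cell per block. Single-cell updates are checked by comparing the current input to the stored register with equality tests.

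Alternatively, and more simply, the hardness can be placed entirely inside one automaton. Emptiness of a single DRA is already PSPACE-hard by the classical result for register automata with equality~\cite{Kaminski:94:TCS}, in the form known from Demri–Lazić. Intersecting with a trivial universal DRA then gives hardness of intersection non-emptiness. For $(\Sigma,<)$ the same construction applies verbatim, since equality tests are expressible by types under $<$. I would first try to cite this known emptiness hardness and verify that the DRA produced can be made deterministic and free of duplicates as required by the paper's model. If not, I would carry out the explicit linear-bounded-automaton encoding with registers holding the bit-values of the tape.
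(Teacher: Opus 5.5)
\paragraph{Upper bound.} Your upper bound is the paper's argument in slightly cleaner form: an on-the-fly NPSPACE search over the product that keeps only the joint type of $u_1\cdot u_2$, then Savitch, with \Cref{thm:Z_eq_Q} handling $\mathbb{Z}$. It is fine.

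\paragraph{The gap is the lower bound.} Your explicit encoding stores $n$ cell values in registers, each equal to $d_0$ or $d_1$. The semantics forbids this. If the input equals a stored value, the old copy must be deleted, so every reachable register content is duplicate-free and at any time at most one register holds $d_0$.

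Your fallback is to cite PSPACE-hardness of register-automaton emptiness and intersect with a universal DRA. This cannot work in this model at all:
\begin{enumerate}
\item Over $(\Sigma,=)$, every reachable register content at a state $p\in Q_i$ is a duplicate-free word of length $i$, and all such words have the same $\sim_=$-type.
\item Hence any transition $(p,\tau,E,q)$ whose prefix type is duplicate-free and which meets the overflow and no-duplication side conditions is enabled from every reachable configuration at $p$. A fresh value always exists since $\Sigma$ is infinite.
\item So emptiness of a single DRA is plain reachability in its transition graph, which is in NL.
\end{enumerate}
The Demri--Lazi\'c hardness you have in mind encodes bits by equalities among registers, which this model rules out, so it does not transfer. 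The same observation defeats any encoding that puts all the information into one automaton's registers, including your Kozen-style variant.

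\paragraph{The missing idea.} The exponential information must sit in the \emph{relationship} between the two automata's registers. This is exactly what the paper's reduction exploits, following Murawski, Ramsay and Tzevelekos. Two $(|M|+1)$-DRAs read the same input. Tape cell $i$ holds $0$ iff $\rho_1(i)=\rho_2(i)$, and holds $1$ iff $\rho_1(i)\notin\rho_2$. Machine state and head position are kept in the control states. A step of $M$ that rewrites cell $i$ is simulated in three stages:
\begin{enumerate}
\item The automata re-read the stored values of cells $1,\dots,i-1$. This moves them to the end, because registers are ordered by last occurrence.
\item They drop the entry for cell $i$. They then read either one shared fresh value (new bit $0$) or two distinct fresh values, one kept by each automaton (new bit $1$).
\item They re-read cells $i+1,\dots,|M|$ to restore the order.
\end{enumerate}
Without such a two-automaton encoding you have no PSPACE-hardness for intersection non-emptiness, and hence none for inclusion.

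\paragraph{A minor point on the upper bound.} For non-well-typed DRAs over $<$, explicitly completing $\mathcal{A}_2$ can require up to $i!\,(2i+1)$ order types per state in $Q_i$. It is simpler to treat missing transitions as moves to an implicit sink inside the on-the-fly search.
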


{\color{black}
For the PSPACE upper bound, intersection non-emptiness is reduced to reachability in the product of the two DRAs, where it suffices to distinguish register valuations by their \emph{relative} data types. If an accepting run exists, there is one of length at most $|Q^1|\cdot |Q^2|\cdot (k_1+k_2)!$, so it can be guessed symbol by symbol while storing only the current product configuration and a counter. PSPACE-hardness already holds for DRAs with identity tests, via a reduction from linear-bounded deterministic Turing machines.
}

Note that by \Cref{thm:Z_eq_Q}, the above result holds for both dense and non-dense domains.

Given a DRA $\mathcal{A}$ over $(\Sigma, R)$ and a configuration $(q, v)$ of $\mathcal{A}$, we define the language of $\mathcal{A}^{(q, v)}$ as $L(\mathcal{A}^{(q, v)}) = \{ w \in \Sigma^* \mid \exists\, q_f \in F, u \in \Sigma^*.~(q, v) \xrightarrow{w}_\mathcal{A} (q_f, u) \}$.
Let $(q_1, v_1)$ and $(q_2, v_2)$ be two configurations of $\mathcal{A}_1$ and $\mathcal{A}_2$ respectively. 
The \emph{configuration equivalence problem} asks whether $L(\mathcal{A}_1^{(q_1, v_1)}) = L(\mathcal{A}_2^{(q_2, v_2)})$.
The \emph{language equivalence problem} asks whether $L(\mathcal{A}_1) = L(\mathcal{A}_2)$. 
It is a special case of the configuration equivalence problem, that is, it is equivalent to the configuration equivalence problem whether $L(\mathcal{A}_1^{(q_0^1, \varepsilon)}) = L(\mathcal{A}_2^{(q_0^2, \varepsilon)})$ where $(q_0^1, \varepsilon)$ and $(q_0^2, \varepsilon)$ are the initial configurations of $\mathcal{A}_1$ and $\mathcal{A}_2$ respectively.

\begin{restatable}{theorem}{thmLangEquiv}\label{thm:equiv-pspace}
    Both the configuration equivalence problem and the language equivalence problem are in PSPACE for DRAs over dense domain $(\Sigma, <)$. 
\end{restatable}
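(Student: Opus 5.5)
We reduce configuration equivalence to the complement of a reachability question in a product automaton, and decide that question in NPSPACE. Savitch's theorem then gives PSPACE, and language equivalence is the special case with initial configurations $(q_0^1,\varepsilon)$ and $(q_0^2,\varepsilon)$. Concretely, $L(\mathcal{A}_1^{(q_1,v_1)}) \neq L(\mathcal{A}_2^{(q_2,v_2)})$ iff some word $w$ leads the pair of configurations to a pair $(p_1,u_1),(p_2,u_2)$ with exactly one of $p_1,p_2$ final. Since both automata are deterministic, we may first make them complete by adding a non-accepting sink state in polynomial space. Then every word has a unique run in each automaton, and inequivalence is exactly reachability of a ``mismatch'' pair.

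The key step is to abstract the infinite configuration space. We call a product configuration $((p_1,u_1),(p_2,u_2))$ \emph{abstract-equal} to $((p_1,u'_1),(p_2,u'_2))$ if $u_1\cdot u_2 \sim_< u'_1\cdot u'_2$. This means the joint order type of the concatenated register contents agrees, including equalities across the two automata. We claim that over a dense domain, abstract-equal product configurations have the same set of reachable abstract successors, and agree on whether a mismatch is reachable. For the proof, take the bijective order-preserving map $\sigma = \sigma_{u_1u_2\rightarrow u'_1u'_2}$ from Sect.~\ref{sec:pre}. By induction on the length of $w$, the run of the product on $w$ from the first configuration is mapped by $\sigma$ onto the run on $\sigma(w)$ from the second. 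Each transition depends only on the type of $u\cdot a$, so applying $\sigma$ preserves which transition fires and preserves the removed index set $E$. Bijectivity lets us transport in both directions. Density is used to guarantee that every joint type of $u_1u_2\cdot a$ consistent with $u_1u_2$ is realised by some $a\in\Sigma$. Hence abstract successors can be enumerated symbolically by choosing the position of $a$ among the at most $k_1+k_2$ stored values: equal to one of them, or in one of the at most $k_1+k_2+1$ gaps.

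The NPSPACE procedure stores one abstract product configuration. It consists of two states and the joint order type of $u_1u_2$, which can be written as a canonical representative over $\{1,\dots,k_1+k_2\}$ using $O(\log|Q^1|+\log|Q^2|+(k_1+k_2)\log(k_1+k_2))$ bits. The initial abstract configuration is computed from $v_1\cdot v_2$. At each step the procedure guesses the relative position of the next symbol and checks that the induced types of $u_1\cdot a$ and $u_2\cdot a$ match existing transitions. Checking those types means projecting the joint type, which is polynomial. It then applies the two transitions and renormalises the joint type. The number of abstract configurations is at most $|Q^1|\cdot|Q^2|\cdot (k_1+k_2+1)^{k_1+k_2}$, so a counter of polynomially many bits bounds the search. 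The procedure accepts when it reaches a mismatch pair; acceptance certifies inequivalence, and by the claim every inequivalence is detected. Since NPSPACE $=$ coNPSPACE $=$ PSPACE, equivalence is in PSPACE. This mirrors the upper-bound argument sketched for Theorem~\ref{thm:pspace-inclusion}, with the initial configurations now carrying data.

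The main obstacle is the soundness of the abstraction, namely the transport claim above. The delicate points are that the two automata's registers may share values, so the joint type rather than the two separate types must be tracked, and that the non-duplication convention (condition (2) on $E$) must be respected when the guessed $a$ equals a stored value. I would handle both by working uniformly with the single word $u_1\cdot u_2\cdot a$ and its type, from which both automata's transition choices are read off.
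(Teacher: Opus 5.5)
Your proposal is correct and follows essentially the paper's route: you reduce (in)equivalence to reachability in the product automaton, abstract product configurations by the joint order type $u_1\cdot u_2$ of the register contents, and guess a witness symbol by symbol with a polynomial-size counter, using density to realise each guessed position and Savitch's theorem. The differences are cosmetic or in your favour: you search directly for a mismatch pair rather than splitting into two inclusions via complementation. Your explicit completion step (best done on the fly) and your bound $(k_1+k_2+1)^{k_1+k_2}$ on joint types (which counts cross-automaton equalities) are more careful than the paper's complementation by swapping final states and its $(k_1+k_2)!$ count.
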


The configuration equivalence problem is referred to as the residual equivalence problem in~\cite{DBLP:conf/mfcs/BalachanderFGT25}, and is shown to be in PTIME for well-typed DRAs over unordered (identity) data domains $(\Sigma, =)$~\cite[Theorem~8]{DBLP:conf/mfcs/BalachanderFGT25}.

Given a $k$-DRA $\mathcal{A}$ over $(\Sigma, <)$, a word $u \in \Sigma^{*}$ and $a \in u$, the \emph{memorability problem} asks whether $a$ is $L$-memorable in $u$, that is, whether $a \in mem_L(u)$.
We show this problem is in PSPACE, which improves upon the NEXPTIME upper-bound for the memorability problem established in~\cite[Proposition~3]{Ley:10:TR}.

\begin{restatable}{theorem}{thmMemPSPACE}\label{thm:mem-pspace}
    The memorability problem is in PSPACE for DRAs over dense domain $(\Sigma, <)$. 
\end{restatable}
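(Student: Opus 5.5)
Over a dense domain the definition of memorability simplifies: we may take $u' = u$, so $a$ is $L$-memorable in $u$ iff there is a symbol $b$ and an order-preserving $\sigma$ with $\sigma(a)=b$ and $\sigma(d)=d$ for all other $d$ in $u$, such that $\sigma(u)\sim_< u$ and some $w$ distinguishes $u$ and $\sigma(u)$. The plan is to guess $b$ and then decide configuration inequivalence, using \Cref{thm:equiv-pspace}.

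\textbf{Step 1 (finitely many candidates for $b$).} Run $\mathcal{A}$ on $u$ from $(q_0,\emptyword)$ to obtain $(q,v)$; this takes polynomial time. If $a\notin v$, then $a$ cannot be memorable, since $mem_L(u)$ is a subsequence of $v$; answer no. Otherwise, a valid $b$ must lie strictly between the predecessor and the successor of $a$ among the distinct values of $u$. All such $b$ yield words $\sigma(u)$ of the same $\sim_<$-type as $u\cdot b$ relative to $u$. In fact there are only three relevant cases, $b<a$, $b>a$, and $b=a$ (trivial), within that gap. Using density and the bijective maps $\sigma_{u\to v}$ from Sect.~\ref{sec:pre}, two choices of $b$ on the same side of $a$ give words $\sigma(u)$, $\sigma'(u)$ that are related by an order-preserving bijection fixing $u$ except at that position. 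Since whether $u$ and $\sigma(u)$ have a distinguishing extension is invariant under applying a global order-preserving bijection to both, it suffices to test one concrete $b$ on each side, e.g.\ the midpoint of the gap. These values have polynomial size.

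\textbf{Step 2 (reduction to configuration equivalence).} Run $\mathcal{A}$ on $\sigma(u)$ to reach $(q',v')$. A distinguishing extension for $u$ and $\sigma(u)$ exists iff $L(\mathcal{A}^{(q,v)})\neq L(\mathcal{A}^{(q',v')})$. Because $\sigma(u)\sim_< u$ and $\mathcal{A}$ is deterministic, we get $q'=q$ and $v'$ equal to $v$ with $a$ replaced by $b$. We then invoke the PSPACE procedure of \Cref{thm:equiv-pspace} with $\mathcal{A}_1=\mathcal{A}_2=\mathcal{A}$. There are two candidates for $b$, each requiring one PSPACE call, and PSPACE is closed under complement and under polynomially many calls, so the whole procedure is in PSPACE. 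Note that \Cref{thm:equiv-pspace} applies to non-well-typed DRAs as well.

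\textbf{Main obstacle.} The hardest part is justifying Step~1 rigorously. We must show that the existence of a distinguishing $w$ for the pair $(u,\sigma(u))$ depends only on the joint order type of $u\cdot b$, and not on the particular value of $b$. The argument is to take an order-preserving bijection $\pi$ of $\mathbb{Q}$ mapping $u\cdot b$ to $u\cdot b'$ and fixing $u$'s values, which exists by density and piecewise-linear interpolation. Then $w$ distinguishes $(u,\sigma(u))$ iff $\pi(w)$ distinguishes $(u,\sigma'(u))$, because $L$ is a data language closed under $\sim_<$.
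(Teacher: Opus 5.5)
Your reduction is the paper's. You replace $a$ by a value $b$ in its gap, observe that the run on $u[a\mapsto b]$ ends in $(q,v[a\mapsto b])$, and decide inequivalence of the two configurations with \Cref{thm:equiv-pspace}.

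The only real difference is how $b$ is chosen. The paper picks one arbitrary fresh $b$ and appeals to \Cref{lem:alternative-mem}. You instead prove directly, via an order-preserving bijection fixing the values of $u$ and sending $b$ to $b'$, that only the side of $a$ on which $b$ lies matters, and then you test both sides. Your invariance argument is self-contained and correct. By contrast, the paper's proof of \Cref{lem:alternative-mem} mishandles witnesses containing $a$ when $b$ and $b'$ lie on opposite sides of $a$. The price of your route is a second oracle call, and that call is unnecessary by the same trick. For $b<a<b'$ in the gap, take a bijection fixing the other values of $u$ with $b\mapsto a$ and $a\mapsto b'$. It maps the pair $(u[a\mapsto b],u)$ onto $(u,u[a\mapsto b'])$, so both sides always give the same answer.

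The step you should not wave through is your opening reformulation. In \Cref{def:memorable} the map $\sigma$ must fix every other symbol of $u'\cdot w$, not just of $u'$. So an admissible witness $w$ contains neither $b$ nor any value strictly between $a$ and $b$. Your version drops this, and the direction ``configurations inequivalent $\Rightarrow$ $a$ memorable'' depends on it. For example, with $L=\{xy\mid x<y\}$, $u=0$ and $b=1$, every distinguishing $w$ is a single value in $(0,1]$, so it is not an admissible witness for that $b$.

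The fact you need is true, and the repair is short:
\begin{itemize}
\item Let $x$ travel from $a$ to $b$. The type of $u[a\mapsto x]\cdot w$ changes only when $x$ meets or leaves a value $c$ of $w$.
\item Hence membership flips between $x=c$ and some $x$ adjacent to $c$.
\item A bijection $\pi$ fixing the other values of $u$ with $\pi(c)=a$ turns $(c,x,w)$ into $(a,\pi(x),\pi(w))$. This is a witness of the restricted form; it may contain $a$ itself, which the definition allows.
\end{itemize}
The paper relies on the same fact only through the statement of \Cref{lem:alternative-mem}, whose proof argues just the other direction. With this lemma added, your proof is complete.
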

{\color{black}Over a dense domain, memorability reduces to configuration equivalence: replacing $a$ by a fresh value while preserving the order type yields two configurations that are inequivalent exactly when $a$ is memorable.}

\paragraph*{Experiments.} We implemented the proposed active learning algorithm in Python\footnote{\url{
liyong31/RALearning.git}} and evaluated it on the running examples in this paper as well as randomly generated DRAs. The results confirm that the numbers of membership, equivalence, and memorability queries grow polynomially with the size of the target automaton, in line with the theoretical bounds, with membership queries dominating in practice. We also compared our approach with \ralib~\cite{DBLP:journals/fac/CasselHJS16}; \ralib~uses tree-based queries with black-box approximate equivalence checking, whereas our method uses word-based queries with white-box exact equivalence checking. The experiments show that both approaches are effective in their respective settings. See \Cref{app:expr} and \Cref{app:experiments} for more details.


\section{Conclusion and Discussion}
\label{sec:conclusion}
\modify{
In this paper, we studied \dra{s} over ordered domains. We showed that, with respect to minimization, \dra{s} over dense and non-dense domains can be treated within a unified framework. This yields decidability of the minimization problem for \dra{s} over non-dense domains, resolving a previously open question \cite{Ley:10:TR}.

Building on this unified framework, we developed and implemented an active learning procedure for \dra{s} that makes use of membership, equivalence, and memorability queries. Our experimental results demonstrate that the learning framework presented in this paper complements existing approaches for \dra{s} over ordered domains.
}


A natural question arises: can memorability queries be removed to obtain an algorithm similar to the classic $L^*$?
We believe this is possible by using the notion of \emph{abstract suffixes} introduced in~\cite{DBLP:conf/vmcai/HowarSJC12}.
Instead of treating the suffix $v$ returned by \Cref{algo:cex-analysis} as a concrete word, we treat it as an \emph{abstract} suffix representing all possible suffixes $z$ for a prefix $u$ over a set $D_{u,v}$ with $z \sim_R v $.
Let $v$ contain $\ell$ distinct letters. 
For each prefix $u = b_1 \dots b_m \in U$, let $b'_1 < \dots < b'_m$ be the sorted values in $u$, $b'_0 =b'_1-1$, and $b'_{m+1}=b'_m+1$.
Then, 
$D_{u,v} = 
\bigcup_{i=0}^{m}\bigcup_{j=1}^{\ell+1} \{b'_i + \frac{j}{\ell+1}\times (b'_{i+1} - b'_i)\}$.
The suffixes $z$ cover all possible extensions of $u$ that have the same word type as $v$, analogous to our previous computation for the one-letter extensions of the memorable word $M(u)$.
Since $M(u)$ is unknown, we have to consider all possible suffixes of the same word type to infer $M(u)$, instead of a single suffix $v$ from a counterexample.

Checking \scn and the corresponding counterexample analysis procedures also become more involved.
We leave a full learning algorithm using only membership and equivalence queries as future work.
Unsurprisingly, this approach requires an exponential number of membership queries, as also in~\cite{DBLP:conf/vmcai/HowarSJC12}, to fill the observation table since all concrete suffixes $z$ over $D_{u, v}$ must be enumerated and added to the suffix set for $u$.

{\color{black}We also note that the register automata underlying \ralib~operate on data words over $A \times D$, where $A$ is a finite action alphabet and $D$ is an infinite data domain, whereas our model considers words directly over $D$. Hence, the two models are not directly equivalent, since the action-labelled model can distinguish inputs carrying the same data value but different actions. A natural extension of the model considered here is to $A \times D$, with word types preserving the finite action sequence while comparisons and renamings apply only to the data components. We conjecture that this action-extended variant has the same expressive power as the corresponding action-labelled register automata. Establishing a formal correspondence between the two models, and determining whether our learning and complexity bounds carry over to this extension, are interesting directions for future work.}

\paragraph*{Acknowledgements.}
We would like to thank the anonymous reviewers for their suggestions that helped improve the paper.
This work was supported in part by the National Key R\&D Program of China (Grant No. 2025YFE0220300), the Beijing Natural Science Foundation Project No. IS26039, CAS Project for Young Scientists in Basic Research (Grant No. YSBR-040), 
and the Engineering and Physical Sciences Research Council  (EPSRC) through grant EP/X042596/1.
\newline\protect\includegraphics[height=8pt]{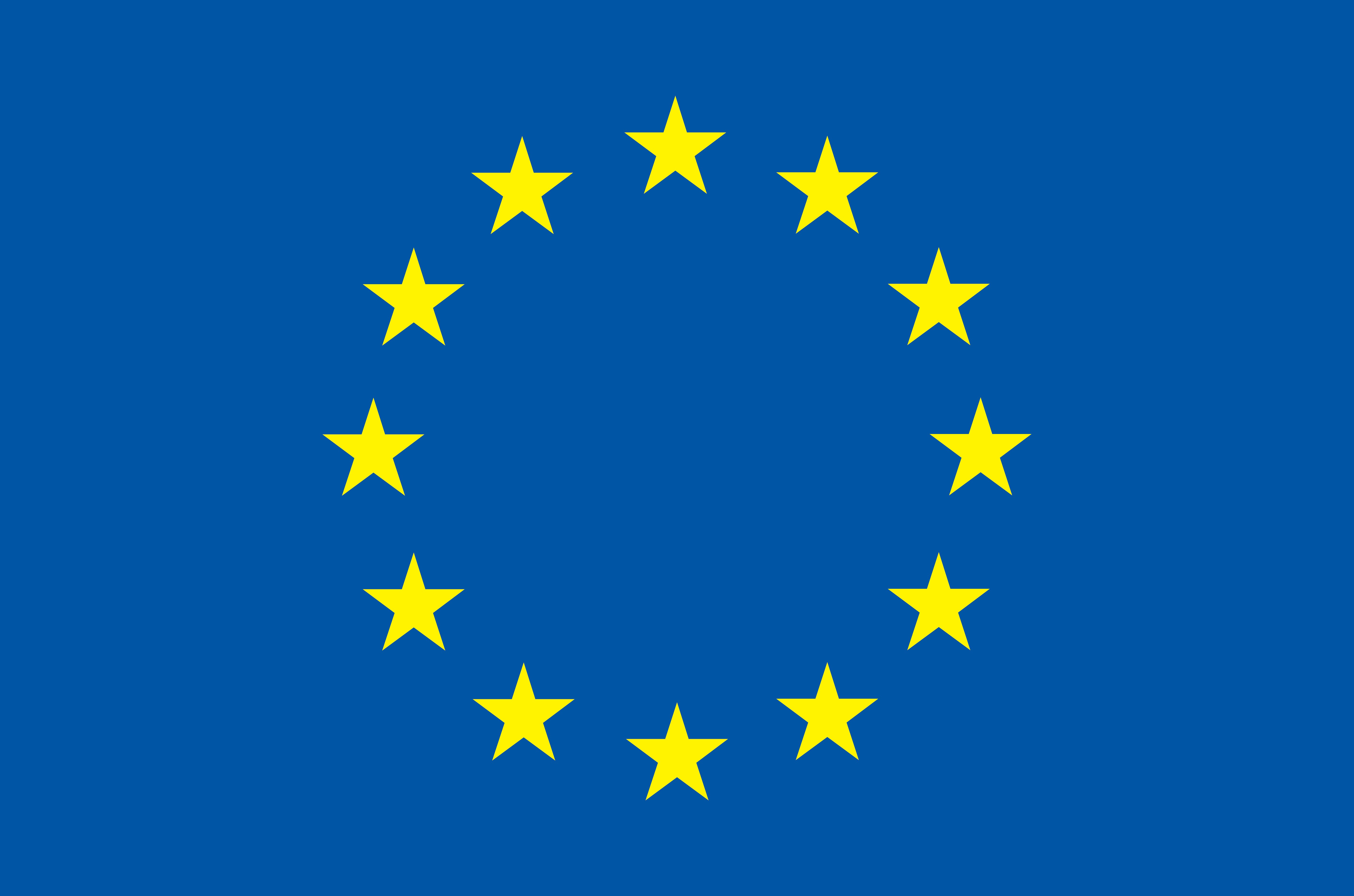} This project is part of the European Union’s Horizon Europe programme under the Marie Skłodowska-Curie grant agreement No. 101208673.

\newpage

\bibliographystyle{splncs04}
\bibliography{ref}

\newpage
\appendix
\section{Missing Proofs of \Cref{sec:non-dense}}\label{app:non-dense}
\subsection{Proof of \Cref{thm:reach}}\label{App:thm_reach_proof}

\thmReachNonDense*
\begin{proof}
Let $\mathcal{A} = (Q, q_0, F, \Delta)$ be a well-typed \dra over $(\mathbb{Z},<)$ with $\kappa$ registers, $(p,v)$ be a configuration and $q$ a state, and let $\min(v)$ and $\max(v)$ denote the smallest and largest integers in $v$, respectively. Suppose $\pi = (p,v) \xrightarrow{u} (q,w)$
for some words $u,w$, where the step-by-step transitions of $\pi$ are:
\[
(p_0,v_0) \xrightarrow{a_1} (p_1,v_1) \xrightarrow{a_2} \dots \xrightarrow{a_n} (p_n,v_n).
\]

The statement of the theorem holds if the following claims are true:
\begin{enumerate}[(C1)]
\item We can obtain a run $\pi_{\mathrm{squeeze}}$ from $\pi$, where
\[
\pi_{\mathrm{squeeze}} = (p_0,x_0) \xrightarrow{b_1} (p_1,x_1) \xrightarrow{b_2} \dots \xrightarrow{b_n} (p_n,x_n)
\]
for some $x_0,\dots,x_n$ and $b_1,\dots,b_n$. Moreover, the range of the values occurring in $x_0,\dots,x_n$ is bounded between $\min(v)-3\kappa\cdot n$ and $\max(v)+3\kappa\cdot n$.

\item If $n > (\sum_{i=0}^\kappa (\max(v)-\min(v)+2)^{i})\cdot |Q| \equiv \textit{len}(\mathcal{A},v)$, then there exists a run $\pi_{\mathrm{short}}$ from $(p,v)$ to $(q,w')$ for some $w'$ with $|\pi_{\mathrm{short}}| < |\pi|$.
\end{enumerate}
With (C1) and (C2), to verify whether there exists a run from $(p,v)$ to $(q,w)$ for some $w$, it suffices to examine runs of length at most $\textit{len}(\mathcal{A},v)$ and over configurations in $Q \times X$, where $X$ consists of words of length between 0 and $\kappa$ and over symbols in $\{\min(v)-\textit{len}(\mathcal{A},v),\dots,\max(v)+\textit{len}(\mathcal{A},v)\} \subset \mathbb{Z}$. Since there are only finitely many such runs, the verification process is decidable.

To derive (C1) and (C2), we show that one can effectively obtain an order-preserving mapping $\sigma$ such that $\sigma(v) = v$ and 
\[
\pi' = (p,\sigma(v)) \xrightarrow{\sigma(u)} (q,\sigma(w))
\]
is a valid run satisfying the following two properties:
(P1) If a value between $\min(v) - 3\kappa \cdot n$ and $\max(v) + 3\kappa \cdot n$ is missing in $u$, then $\max(\sigma(u)) - \min(\sigma(u)) \;<\; \max(u) - \min(u)$.
(P2) If $n > \textit{len}(\mathcal{A}, v)$, then a subrun of $\pi'$ can be removed while the remaining part is still a valid run, a property analogous to the pumping lemma for DFAs.

Consider (C1). Suppose there exist three values $a$, $a'$, and $a''$, satisfying $\max(v) < a < a+1=a' < a'+1=a'' \leq \max(v) + 3\kappa \cdot n$, such that none of $a$, $a'$, or $a''$ appear in any $v_i$ for $i \geq 1$. Let $\sigma^{\textit{shift}}_a$ be the order-preserving partial mapping that maps each value $b$ to itself if $b < a$, and to $b-d$ if $b > a''$, where $d$ is the difference between $a''$ and the minimum value greater than $a''$ among all components in $v_1,\dots,v_n$ and $a_1,\dots,a_n$. To ensure that $\sigma_a^{\textit{shift}}$ is well-defined, we also require $\sigma_a^{\textit{shift}}(b) = a'$ for $b \in \{a, a', a''\}$. Then $\sigma^{\textit{shift}}_a(v) = v$, and
\[
(p_0,\sigma^{\textit{shift}}_a(v_0)) \xrightarrow{\sigma^{\textit{shift}}_a(a_1)} \dots \xrightarrow{\sigma^{\textit{shift}}_a(a_n)} (p_n,\sigma^{\textit{shift}}_a(v_n))
\]
is still a valid run from $(p,v)$ to $(q,w')$ for some $w'$. Moreover, the range of the values occurring in $\sigma^{\textit{shift}}_a(v_1),\dots,\sigma^{\textit{shift}}_a(v_n)$ is strictly smaller than the range of $v_1,\dots,v_n$.
This technique applies symmetrically when $\min(v) - 3\kappa \cdot n \leq a < \min(v)$. By repeatedly applying this process at most $\kappa\cdot n$ times, we obtain the run $\pi_{\mathrm{squeeze}}$ satisfying (C1).

Next, consider (C2). By the pigeonhole principle, if $|\pi| > |Q|$, then along the run there must be a repeated state. Hence, there exists a decomposition $u = u_1 u_2 u_3$ and a state $r$ such that
\begin{equation}\label{app:eq:depump}
\pi = (p,v) \xrightarrow{u_1} (r,x) \xrightarrow{u_2} (r,y) \xrightarrow{u_3} (q,w)
\end{equation}
for some words $x,y$. We could immediately delete the subrun $(r,x) \xrightarrow{u_2} (r,y)$ if $x = y$, and the remaining run $(p,v) \xrightarrow{u_1} (r,x) \xrightarrow{u_3} (q,w)$ would still be valid. However, in general we do not have $x = y$.

More generally, for each $\ell \in \mathbb{N}$, if $|\pi| > \ell \cdot |Q|$, we obtain a decomposition
\[
\pi = (p,v) \xrightarrow{u_1} (r,x_1) \xrightarrow{u_2} (r,x_2) \dots 
\xrightarrow{u_\ell} (r,x_\ell) \xrightarrow{u_{\ell+1}} (q,w)
\]
for some $x_1,\dots,x_\ell$ and $u_1,\dots,u_{\ell+1}$.
For each $i$, let $y_i$ be the sorted version of $x_i$. Then $y_1,\dots,y_\ell$ are strictly increasing sequences of the same length~$\theta$ for some~$\theta$.
Let $y_i = a^{[i]}_1 \dots a^{[i]}_\theta$ for each $i = 1,\dots,\ell$.
If there exist two distinct indices $i<j$ such that, for every $s = 1,\dots,\theta-1$,  
\begin{equation}\label{eq:not_less}
a^{[i]}_{s+1}-a^{[i]}_s \geq a^{[j]}_{s+1}-a^{[j]}_s,
\end{equation}
then by letting $\sigma$ be the mapping $\sigma(a) = $ 
\begin{enumerate}[$\bullet$]
    \item $a + (a^{[i]}_1 - a^{[j]}_1)$ if $a< a^{[j]}_1$.
    \item $a + (a^{[i]}_s - a^{[j]}_s)$ if $a^{[j]}_s \leq a \leq a^{[j]}_{s+1}$ for $s=1,\dots, \theta-1$.
    \item $a+ (a^{[i]}_\theta - a^{[j]}_\theta)$ if $a \geq a^{[j]}_\theta$.
\end{enumerate}
Hence, we have that $\sigma$ is order-preserving, and
\[
(r,\sigma(x_j))
\xrightarrow{\sigma(u_j)} (r,\sigma(x_{j+1})) 
\xrightarrow{\sigma(u_{j+1})} \dots (r,\sigma(x_\ell)) 
\xrightarrow{\sigma(u_{\ell+1})} (q,\sigma(w))
\]
is a valid run from $(r,\sigma(x_j))$ to $(q,\sigma(w))$ on $\sigma(u_j\cdots u_{\ell+1})$, where $\sigma(x_j)=x_i$.
Accordingly,
\[
(p,v) \xrightarrow{u_1} \dots (r,x_i)
\xrightarrow{\sigma(u_j)} (r,\sigma(x_{j+1})) 
\xrightarrow{\sigma(u_{j+1})} \dots (r,\sigma(x_\ell)) 
\xrightarrow{\sigma(u_{\ell+1})} (q,\sigma(w))
\]
is a shorter valid run from $(p,v)$ to $(q,\sigma(w))$.

The following property of finite sequences will be used later:
\begin{lemma}\label{lem:lower_bound}
Let $\delta_1,\dots, \delta_\ell$ be $\ell$ pairwise distinct strictly increasing sequences over $\mathbb{N}$, where each $\delta_i = d^{[i]}_1 \dots d^{[i]}_\theta$ for some fixed $\theta$.
Given a constant $d \in \mathbb{N}$, if $\ell > (d+2)^\theta$, then there exist two distinct indices $i,j$ such that for every $s$, both of values $d^{[i]}_s$ and $d^{[j]}_s$ are greater than $d$ whenever they are not equal.
\end{lemma}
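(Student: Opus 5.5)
The plan is a direct pigeonhole argument on a finite ``profile'' of each sequence. Define a truncation map $t_d : \mathbb{N} \to \{0,1,\dots,d\} \cup \{\top\}$, where $\top$ is a fresh symbol meaning ``greater than $d$'', by
\[
t_d(x) = x \text{ if } x \le d, \qquad t_d(x) = \top \text{ if } x > d .
\]
Assign to each sequence $\delta_i = d^{[i]}_1 \dots d^{[i]}_\theta$ its profile
\[
P(\delta_i) = \bigl(t_d(d^{[i]}_1), \dots, t_d(d^{[i]}_\theta)\bigr) \in \bigl(\{0,\dots,d\}\cup\{\top\}\bigr)^\theta .
\]
The codomain has exactly $(d+2)^\theta$ elements. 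Since $\ell > (d+2)^\theta$, the pigeonhole principle gives two distinct indices $i \neq j$ with $P(\delta_i) = P(\delta_j)$.

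Next I verify that such $i,j$ satisfy the required property. Fix a position $s$ and suppose $d^{[i]}_s \neq d^{[j]}_s$. Because the profiles agree at $s$, either both values are $\le d$ or both are $> d$. In the first case $t_d$ is the identity on both, so equal profiles force $d^{[i]}_s = d^{[j]}_s$, contradicting the assumption. Hence the second case holds: both values exceed $d$, as claimed. Since $s$ was arbitrary, the pair $(i,j)$ witnesses the lemma.

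I do not expect a real obstacle here. The only point needing care is the exact count of the profile space, $d+1$ small values plus one symbol $\top$, giving $(d+2)^\theta$, which matches the hypothesis. The strict monotonicity and pairwise distinctness of the sequences are not needed for this argument; they only matter where the lemma is applied in (C2), to the normalized difference or offset sequences derived from $y_1,\dots,y_\ell$, to obtain inequality~\eqref{eq:not_less}. For the lemma itself, pigeonhole alone suffices, and it yields two distinct indices regardless of distinctness.
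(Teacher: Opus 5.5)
Your proof is correct, and it is cleaner than the paper's, though both rest on the same counting idea. The paper also counts words of length $\theta$ over the $(d+2)$-letter alphabet $\{0,\dots,d,d+1\}$, with $d+1$ standing for ``larger than $d$''. However, it attaches such a word $e_{i,j}$ to each \emph{pair} of sequences. The code records $0$ where $d^{[i]}_s = d^{[j]}_s$, the smaller of the two values where they differ and that value is at most $d$, and $d+1$ otherwise. The paper then just asserts that $\ell > (d+2)^\theta$ forces some $e_{i,j}$ to lie in $\{0,d+1\}^\theta$. No counting argument over pairs yields this, and the letter $0$ is even ambiguous when a differing minimum equals $0$. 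You instead attach a truncated profile to each \emph{single} sequence, so the hypothesis $\ell > (d+2)^\theta$ feeds directly into the pigeonhole principle. Two sequences with equal profiles are exactly a pair whose code $e_{i,j}$ would lie in $\{0,d+1\}^\theta$. So your argument supplies the justification missing from the paper's key step. As you note, it uses neither strict monotonicity nor pairwise distinctness.
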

\begin{proof}
    Consider the sequences over $\{0,1,\dots,d,d+1\}$ of length $\theta$. There are exactly $(d+2)^\theta$ such sequences. Among those sequences, there is at least one over exactly $\{0,d+1\}$.
    Given two distinct $\delta_i$ and $\delta_j$, we let $e_{i,j} = f_1\dots f_\theta$ a sequence over $\{0,1,\dots,d,d+1\}$, where $f_s = 0$ if $d^{[i]}_s=\delta^{[j]}_s$; $= t$ if $d^{[i]}_s\neq\delta^{[j]}_s$ and $\min(d^{[i]}_s,\delta^{[j]}_s) = t$; $= d+1$, otherwise. If $\ell > (d+2)^\theta$, there exists $e_{i,j}$ over exactly $\{0,d+1\}$.
    Accordingly, if $\ell > (d+2)^\theta$ there are at least two $\delta_i$ and $\delta_j$ such that $\delta_i\neq \delta_j$ and for each $s$, both of values $d^{[i]}_s$ and $d^{[j]}_s$ are greater than $d$ whenever they are not equal.
\qed
\end{proof}

Now assume that for every distinct pair of $i$ and $j$, there exists $s$ such that  
\begin{equation}\label{eq:ineq_index}
a^{[i]}_{s+1} - a^{[i]}_s \;\neq\; a^{[j]}_{s+1} - a^{[j]}_s.
\end{equation}
By \Cref{lem:lower_bound}, if $\ell > (\max(v)-\min(v)+2)^\theta$, then there exist indices $i,j$ such that both differences $a^{[i]}_{s+1}-a^{[i]}_s$ and $a^{[j]}_{s+1}-a^{[j]}_s$ exceed $\max(v)-\min(v)$ whenever, for every distinct pair $i,j$, there exists an index $s$ satisfying Eq.~\ref{eq:ineq_index}.

We define $\iota$ as an index between $1$ and $\theta$, where:
\begin{enumerate}[$\bullet$]
    \item $\iota = 1$ if $a^{[i]}_1 > \max(v)$,
    \item $\iota = \theta$ if $a^{[i]}_\theta < \min(v)$,
    \item otherwise, $\iota$ is the smallest index satisfying $\min(v) \leq a^{[i]}_\iota \leq \max(v)$.
\end{enumerate}

Let $d_1 \dots d_{\theta-1}$ be the sequence defined by  
\[
d_s = \max\{a^{[i]}_{s+1}-a^{[i]}_s, a^{[j]}_{s+1}-a^{[j]}_s\}
\]
for all $s=1,\dots,\theta-1$.
We define $\sigma$ as follows:
\begin{enumerate}[$\bullet$]
    \item $\sigma(a) = a + a^{[i]}_\iota + (d_\iota + \dots + d_{s-1})$  
          if $a^{[i]}_s \leq a \leq a^{[i]}_{s+1}$ and $s \geq \iota$,
    \item $\sigma(a) = a + a^{[i]}_\iota - (d_{\iota-1} + \dots + d_{s+1})$  
          if $a^{[i]}_s \leq a \leq a^{[i]}_{s+1}$ and $s < \iota$.
\end{enumerate}
Then the following properties hold:
\begin{itemize}
    \item $\sigma$ is order-preserving,
    \item $\sigma(a^{[i]}_{s+1}) - \sigma(a^{[i]}_s) = d_s$ for all $s$,
    \item $\sigma(v) = v$, since there is at most one index $s$ satisfying
          Eq.~\ref{eq:ineq_index} and for that index we have 
          $\min(v) \leq a^{[i]}_s \leq \max(v)$.
\end{itemize}
Accordingly,
\[
\pi_{\mathrm{pre}}
    = (p,\sigma(v))
      \xrightarrow{\sigma(u_1)}
      \dots
      \xrightarrow{\sigma(u_i)}
      (r,\sigma(x_i))
\]
is a valid run from $(p,v)$ to $(r,\sigma(x_i))$ on $\sigma(u_1)\cdots\sigma(u_i)$.
Hence, $\pi_{\mathrm{pre}}$ together with the run $(r,x_j) \xrightarrow{u_{j+1}\cdots u_{\ell+1}} (q,w)$ satisfies Eq.~\ref{eq:not_less}.
\qed
\end{proof}

\subsection{Minimization Algorithm for \wdra{s}}\label{app:min_alg}

\begin{algorithm}[tb]
\caption{Minimization algorithm for \wdra{s}}
\label{alg:min}
\SetKwInOut{Input}{input}
\Input{A \wdra $\mathcal{A}=(Q,q_0,F,\Delta)$ recognizing $L$ over $(\Sigma,R)$}

\For{$q \in Q$}{
	Obtain a word $w_q$ such that $(q_0,\epsilon)\xrightarrow{w_q}_{\mathcal{A}} (q,w)$ for some $w$\;
}
$flag \gets \top$\;
\While{$flag$}{
	$flag \gets \bot$\;
	\For{$p \in Q$}{
		\For{$q \in Q\setminus\{p\}$}{
			\If{$w_p \cong_L w_q$}{
				Merge $p$ and $q$.\;
				$flag \gets \top$\;
			}
		}
	}
}
\For{$q \in Q$}{
	Modify transitions starting from $q$.\;
}
return $\mathcal{A}$\;
\end{algorithm}

By \Cref{thm:myhill}, the minimization problem for \wdra{s} is decidable, provided that the canonical automaton of every given \wdra can be effectively constructed. For DFAs, the canonical automaton can be obtained by merging states that correspond to the same residual language. The same idea applies to \dra{s}, as guaranteed by the following lemma:

\begin{lemma}[\cite{Ley:10:TR}]\label{lem:refinement}
Let $\mathcal{A}$ be a \wdra recognizing $L$. For any two words $u,v$, if
$(q_0,\epsilon) \xrightarrow{u} (q,x)$ and 
$(q_0,\epsilon) \xrightarrow{v} (q,y)$
for some configurations $(q,x)$ and $(q,y)$, then $u \cong_L v$.
\end{lemma}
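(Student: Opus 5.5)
We prove Lemma~\ref{lem:refinement} by showing that the two defining conditions of $\cong_L$ (\Cref{def:eq-relation-data-languages}) both follow from the fact that $u$ and $v$ lead to the same state $q$. The plan has three steps. First, relate the memorable words of $u$ and $v$ to the register contents $x$ and $y$. Second, transport the relevant suffixes along an order-preserving renaming. Third, use determinism so that the transported runs end in the same state with corresponding contents.

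\textbf{Step 1 (memorable words).} Since $\mathcal{A}$ is well-typed and both runs end in $q$, we have $x \sim_R y$. As recalled before \Cref{def:eq-relation-data-languages}, $mem_L(u)$ is a subsequence of $x$ and $mem_L(v)$ is a subsequence of $y$. The key claim is that the \emph{positions} of $x$ occupied by $mem_L(u)$ coincide with the positions of $y$ occupied by $mem_L(v)$. To prove it, fix a (partial) order-preserving bijection $\rho$ on the data values with $\rho(x)=y$. By determinism, for every suffix $z$ the run from $(q,x)$ on $z$ and the run from $(q,y)$ on $\rho(z)$ visit the same states, provided $\rho$ is extended to the values of $z$. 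Over $(\mathbb{Q},<)$ this extension is always possible; over $(\mathbb{Z},<)$ we invoke \Cref{thm:Z_eq_Q}. Hence $L(\mathcal{A}^{(q,x)}) = \rho^{-1}(L(\mathcal{A}^{(q,y)}))$.

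\textbf{Step 2 (memorability is a residual property).} Show that whether the $i$-th register value of $x$ is memorable in $u$ depends only on the residual language $L(\mathcal{A}^{(q,x)})$. The values of $u$ outside $x$ no longer influence the future, so replacing such a value changes nothing. Memorability of $x_i$ therefore reduces to the question of whether perturbing $x_i$ (in the sense of \Cref{def:memorable}) alters the residual. Combined with Step~1, this gives that $x_i$ is memorable in $u$ if and only if $y_i$ is memorable in $v$. Consequently $mem_L(v)=\rho(mem_L(u))$, and in particular $mem_L(u)\sim_R mem_L(v)$. This step is the main obstacle. The difficulty is that \Cref{def:memorable} quantifies over a different word $u'$ of the same type together with a mapping $\sigma$, and over $\mathbb{Z}$ the replacement value may not exist. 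I would first work over $\mathbb{Q}$, taking $u'=u$, and carefully check that $\sigma$ fixes all non-register values of $u$ without loss. I would then transfer the result to $\mathbb{Z}$ via \Cref{thm:Z_eq_Q}.

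\textbf{Step 3 (second condition of $\cong_L$).} Take $x',y'$ with $mem_L(u)\cdot x' \sim_R mem_L(v)\cdot y'$. Since $u$ and $v$ reach $q$, it suffices to compare $x'$ from $(q,x)$ with $y'$ from $(q,y)$. The non-memorable register values can be perturbed freely without affecting membership; this is the content of non-memorability, iterated over all such values. So we may replace $x$ by $x''$ with $x''\sim_R x$ and $x''$ agreeing with $x$ on the memorable positions, such that $x''\cdot x' \sim_R y\cdot y'$. Then determinism together with Step~1 yields $u x'\in L \Leftrightarrow v y'\in L$. The iterated perturbation needs a small induction on the number of non-memorable registers.
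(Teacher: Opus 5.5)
The paper does not prove this lemma; it imports it from \cite{Ley:10:TR}. Your argument therefore has to stand on its own.

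Steps 1 and 2 are fine over $\mathbb{Q}$. Residuals transport along order-preserving bijections, and well-typedness gives $x\sim_R y$. Whether register position $i$ is memorable depends only on $q$ and the type of the register content, because the replacement value can be pushed close enough to $x_i$ to avoid the non-register values of $u$.

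\textbf{The gap is in Step 3.} Write $\mathcal{R}(z)=L(\mathcal{A}^{(q,z)})$. You need $x'\in\mathcal{R}(x)\Leftrightarrow x'\in\mathcal{R}(x'')$ for some $x''$ that differs from $x$ only in non-memorable positions; for example $x''=\theta(y)$ with $\theta$ order-preserving and $\theta(mem_L(v)\cdot y')=mem_L(u)\cdot x'$. This is not ``the content of non-memorability''. In \Cref{def:memorable} the witness $\sigma$ fixes every value of $u'\cdot w$ other than $a'$, so the replacement stays in the same cell as $a'$ relative to the test suffix $w$. With $w=x'$, non-memorability only lets you split a register value off an equal value of $x'$; moves inside a cell of $x\cdot x'$ are already harmless by transport. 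But $x''$ may put a non-memorable value on the other side of, or onto, values of $x'$. For instance, take one non-memorable register holding $a$, with $x'=c>a$ and $y'=c'<a$. You must show $c\in\mathcal{R}(a)\Leftrightarrow c'\in\mathcal{R}(a)$, and applying non-memorability once per register with test suffix $x'$ gives nothing here.

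\textbf{The missing idea is to use a different test suffix.} Suppose $a=x_j$ is non-memorable, $a\notin x'$, and $c\in x'$ is the neighbour of $a$ in $x\cdot x'$, say $a<c$. Put $w=x'[c\mapsto a]$ (every $c$ replaced by $a$) and take $b^-<a<b^+$ adjacent to $a$. Non-memorability gives $w\in\mathcal{R}(x[a\mapsto b^-])\Leftrightarrow w\in\mathcal{R}(x[a\mapsto b^+])$. Transport identifies the left side with $x'\in\mathcal{R}(x)$. It identifies the right side with $x'\in\mathcal{R}(x[a\mapsto b])$ for $b$ just above $c$. Hence the register value can cross $c$ without changing membership, and merging it onto $c$ is handled the same way.

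Consequently your induction must run over these elementary crossings and merges, not over registers. After the first move the register content is no longer $x$. Each later move therefore needs Step 2 (or the quantification over $u'$ in \Cref{def:memorable}) to know the position is still non-memorable there.

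\textbf{A smaller point on the transfer to $\mathbb{Z}$.} \Cref{thm:Z_eq_Q} only relates membership. You also need $mem_{L_{\mathbb{Z}}}(u)=mem_{L_{\mathbb{Q}}}(u)$ for integer words $u$. This follows by mapping the finitely many values of a rational memorability witness order-preservingly into $\mathbb{Z}$.
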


The pseudocode for \wdra minimization is given in Algorithm~\ref{alg:min}. In the algorithm, we first compute a representative word $w_q$ for each state $q$. Then, for any pair of states $p,q$, we merge them whenever $w_p \cong_L w_q$. After this step, the automaton is already state-minimal. However, it may still fail to be data-minimal. Therefore, in a final step, we use $mem_L(w_q)$ for each state $q$ to adjust transitions outgoing from $q$ so as to achieve data-minimality.

The decidability of the following three problems ensures that Algorithm~\ref{alg:min} is an effective minimization procedure:
\begin{enumerate}[$\bullet$]
\item \textbf{Representative problem:} Given a state $q$, compute a representative word $w_q$ such that $(q_0,\epsilon)\xrightarrow{w_q} (q,w)$ for some word $w$, whenever such a representative exists.
\item \textbf{Memorability problem:} Given a word $u$, compute $mem_L(u)$.
\item \textbf{Word-equivalence problem:} Given two words $u,v$, verify whether $u \cong_L v$.
\end{enumerate}

Clearly, the representative problem is decidable if the configuration reachability problem is decidable and a witness word $w_q$ can be effectively extracted.

As for DFAs, the equivalence problem of two \dra{s} reduces to the reachability problem for their product automaton. 
\begin{definition}\label{def:product-automaton}
Given two \dra $\mathcal{A}_1 $ and $ \mathcal{A}_2$, the product automaton $\mathcal{A}_1 \times \mathcal{A}_2$ is a tuple $\big(Q^{\times}, q_0^{\times}, F^{\times}, \Delta^{\times}\big)$ 
where:
\begin{itemize}
\item $Q^\times = Q^1 \times Q^2$ is a set of states.
\item $(q_0^1, q_0^2) \in Q^{\times}$ is the \emph{initial state}.
\item $F^\times = F^1 \times F^2$ is the set of \emph{final states}.
\item $\Delta^\times$ is a finite set of \emph{transitions} of the form $\big( (p_1, p_2), \allowbreak (\tau_1, \tau_2), \allowbreak (E_1,E_2), \allowbreak (q_1, q_2) \big)$, 
where 
$(p_i, \tau_i, E_i, q_i) \in \Delta^i$ for $i \in \{1,2\}$.
\end{itemize}
\end{definition}

This product register automaton differs from the RAs in \Cref{def:RA}: it is not well-typed, since its registers may contain duplicate values. However, it can be translated into a \dra satisfying \Cref{def:RA} by introducing additional states that simulate the register configuration so as to prevent duplicate values.
Thus, the equivalence problem for \dra{s} from given configurations is decidable whenever the configuration reachability problem is decidable.

Given a word $u$, if $(q_0,\epsilon)\xrightarrow{u}_{\mathcal{A}}(q,w)$, then $mem_L(u)$ is a subsequence of $w$, obtained by deleting all non-memorable symbols. If every extension run $(q,w)\xrightarrow{s}(r,x)$ from $(q,w)$ satisfies $r\notin F$, then $mem_L(u)=\epsilon$. Otherwise, suppose $(q,w)\xrightarrow{s}(r,x)$ for some $r\in F$, and let $w'=\sigma(w)$ for some order preserving mapping $\sigma$ which maps $a$ to a different value $b$. If the \dra started from $(q,w)$ is not equivalent to that started from $(q,w')$, then $a$ is memorable; otherwise, it is not.
When $\Sigma$ is dense, such a $w'$ always exists. This need not hold for non-dense alphabets. For instance, if $w=2\cdot 3\cdot 4$ over $\mathbb{Z}$ and $a=3$, then there is no $b\neq 3$ such that $\sigma$ preserves the ordering relations in $w$. In contrast, if $w=1\cdot 3\cdot 5$ and $a=3$, we may choose $b=4$.

Recall that $w$ arises from a run $(q_0,\epsilon)\xrightarrow{u}(q,w)$, and if $u\cong_L u'$ with $(q_0,\epsilon)\xrightarrow{u'}(q,w')$, then the $i$th symbol of $w$ is memorable in $u$ iff the $i$th symbol of $w'$ is memorable in $u'$. Since $L$ is a data language, for every $u\sim_R u'$, we have $u\cong_L u'$. For any sequence $u$, we can effectively compute a sequence $u'$ such that $u'\sim_R u$ and any two distinct symbols in $u'$ differ by more than $1$. For example, if $u=1\cdot 2\cdot 3$, we may take $u'=1\cdot 3\cdot 5$. Hence the memorability problem reduces to the configuration equivalence problem, and thereby to the configuration reachability problem.

Suppose there exists an $R$-preserving mapping $\sigma$ such that 
$\sigma(mem_L(u)) = mem_L(v)$ for given words $u$ and $v$, and let $u' = \sigma(u)$. Then $u\cong_L v$ iff for all words $s$, $u'\cdot s\in L \Longleftrightarrow v\cdot s\in L$. Let $(q_{u'},w_{u'})$ and $(q_v,w_v)$ be the configurations reached from $(q_0,\epsilon)$ upon reading $u'$ and $v$, respectively. Therefore, $u\cong_L v$ iff the \dra $\mathcal{A}$ starting from $(q_{u'},w_{u'})$ is equivalent to the one starting from $(q_v,w_v)$. Consequently, the word-equivalence problem, and thus the minimization problem for \wdra{s}, is decidable because the configuration reachability problem is decidable.

\section{More Examples on the Learning Algorithm}
\label{app:examples}

\begin{figure}
    \centering
    \includegraphics[]{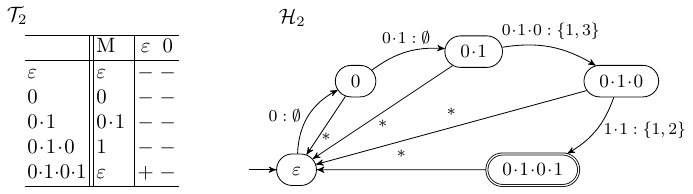}
    \caption{Closed Table $\obtable_2$ and its derived DRA $\hypothesis_2$}
    \label{fig:table-2-and-dra-2}
\end{figure}

\begin{example}
Let us reconsider the examples in Fig.~\ref{fig:running-example-tables-dras}. Assume that the teacher returns counterexample $w = 4\mdot 5\mdot 4\mdot 5$ to the learner on equivalence query $\EQ(\hypothesis_1)$.
Clearly, $w \in L$ but $w \notin \lang(\hypothesis_1)$.
Initially, we have $u = \emptyword$, $r = \emptyword$ and $\textit{mq} = +$.
In the first iteration, we decompose $w = a\cdot z$ with $a = 4$ and $z = 5\mdot 4\mdot 5$.
Reading $a = 4$, the transition $(\emptyword, \emptyword) \xrightarrow{0:\emptyset} (0, 4)$ is taken when reading $a$, so $b = 0$.
By applying $\sigma_1$ which maps $4$ to $0$, we obtain $\sigma_1(z) = 1\mdot 0\mdot 1$.
Furthermore, $\sigma_2$ maps $M(u\cdot b) = 0$ to $M(u') = M(0) = 0$, hence $\sigma$ is an identity map.
Thus, $\sigma_2(\sigma_1(z)) = 1\mdot 0\mdot 1$ and $\MQ(u'\cdot \sigma_2(\sigma_1(z))) = \MQ(0\mdot 1\mdot 0\mdot 1) = +$.
We then set $u = 0, r = 0$ and $w = 1\mdot 0\mdot 1$ and go to next iteration.
In the second iteration, we again write $w = a\cdot z$ with $a = 1$ and $z = 0\mdot 1$.
The transition $(0, 0) \xrightarrow{0\cdot 1: \emptyset} (0\mdot 1, 0\mdot 1)$ is taken.
Since $M(u)\cdot a = M(u)\cdot b = M(u')$, $\sigma_1$ and $\sigma_2$ are both identity mappings.
Thus, $\sigma_2(\sigma_1(z)) = 0\mdot 1$ and $\MQ(u' \mdot \sigma_2(\sigma_1(0\mdot 1))) = \MQ(0\mdot 1 \mdot 0\mdot 1) = +$.
We set $u = 0 \mdot 1, r = 0\mdot 1$ and $w = 0\mdot 1$ and go to the next iteration.
In this last iteration, the same reasoning applies: $a = 0$ and $z = 1$ and the transition $(0\mdot 1, 0\mdot 1) \xrightarrow{0\mdot 1\mdot 0: \{1,3\}} (0, 1)$ is taken; so $b = 0 $.
Since $a = b$ and thus $M(u)\cdot a = M(u)\cdot b$, $\sigma_1$ is again the identity mapping.
We have $M(u\cdot b) = M(0\mdot 1\mdot 0) = 1$ and $M(u') = M(0) = 0$, giving $v = \sigma_2(\sigma_1(z)) = 0$.
Thus, $\MQ(u'\cdot v) = \MQ(0\mdot 0) = -$.
Therefore the returned word is $v = 0$ which will be added into $S$.
After closing the updated table, we will obtain the table $\obtable_2$ and derive a new conjecture DRA $\hypothesis_2$, as shown in Fig.~\ref{fig:table-2-and-dra-2}.
After refining $\hypothesis_2$ with counterexample $1\mdot 0 \mdot 1\mdot 2 \mdot 1\mdot 2$, we will get the final table $\obtable$ and construct from $\obtable$ the correct conjecture DRA $\target$ in Fig.~\ref{fig:table-and-dra}. \qed
\end{example}

\section{More Details on Learning DRAs over Non-dense Domains}
\label{app:non-dense-learning}

We mentioned in the main content that one can learn DRAs over $(\alphabetZ, <)$ with our DRA learner for $(\alphabetQ, <)$. Here we provide more details on that.

Assume that the teacher answers only membership and memorability queries on words over $\alphabetZ$, and equivalence query over a DRA over $\alphabetZ$.
Let $L_{\alphabetZ}$ be our target language over $\alphabetZ$ and $L_{\alphabetQ}$ be its corresponding language over $\alphabetQ$.
We can just add an interface between our DRA learner of dense domains and the teacher for non-dense domains.
\begin{itemize}
    \item For asking a membership query for a word $u$ over $\alphabetQ$, we can use a bijective order-preserving mapping $\sigma$ that maps a word over $\alphabetQ$ to a word over $\alphabetZ$.
    For instance, when $u = 1\cdot 2\cdot \frac{1}{2}$, we have $\sigma(u) = 2\cdot 4\cdot 1$.
    We then return the response on $\MQ(\sigma(u))$.
    Since $u \sim_R \sigma(u)$, they should have the same membership in the target language $L_{\alphabetQ}$.

    \item For asking a memorability query for a word $u$ over $\alphabetQ$, we also ask the memorability query $\Mem(\sigma(u))$ instead and return $\sigma^{-1}(\Mem(\sigma(u)))$.
    Since $u \sim_R \sigma(u)$ over the dense domains, we know that the memorable words for $\sigma(u)$ have the same positions as the memorable words for $u$.

    \item To resolve the equivalence query on the conjectured DRA $\hypothesis$ over $\alphabetQ$, we convert it to a DRA $\hypothesis'$ over $\alphabetQ$ for all word types of transitions using $\sigma$ and ask equivalence qeury $\EQ(\hypothesis')$.
    If the answer is ``Yes", we output the DRA $\hypothesis'$ over $\alphabetZ$, otherwise, just return the counterexample $w$ to our DRA learner, since it is also a valid counterexample over $\alphabetQ$.
\end{itemize}

By Theorem~\ref{thm:Z_eq_Q}, this learning algorithm will output a correct DRA of the target language $L_{\alphabetZ}$.

\section{Missing Proofs of \Cref{sec:learning-mem}}
\label{app:learning}

\witnessNotEq*

\begin{proof}
We first prove the ``only if'' direction. 
Suppose that $u_1 \not\cong_L u_2$. Since $mem_L(u_1) \sim_R mem_L(u_2)$, by Definition~\ref{def:eq-relation-data-languages}, there exist words $x, y$ such that $mem_L(u_1) \cdot x \sim_R mem_L(u_2) \cdot y$, but exactly one of the words $u_1 \cdot x$ and $u_2 \cdot y$ is in $L$. Because $\Sigma$ is dense, the relation $mem_L(u_1) \cdot x \sim_R mem_L(u_2) \cdot y$ implies the existence of an order-preserving mapping $\sigma$ such that $\sigma(mem_L(u_1) \cdot x) = mem_L(u_2) \cdot y$. By letting $w = \sigma(x) = y$, we obtain that either $\sigma(u_1) \cdot w \in L$ or $u_2 \cdot w \in L$.

Next, we prove the ``if'' direction. 
Suppose there exist a word $w$ and an order-preserving mapping $\sigma$ such that $\sigma(mem_L(u_1)) = mem_L(u_2)$ and $\sigma(u_1) \cdot w \in L \not\Leftrightarrow u_2 \cdot w \in L$. 
Let $x = \sigma^{-1}(w)$ and $y = w$. Since $\sigma$ is order-preserving and $\Sigma$ is dense, we have $mem_L(u_1) \cdot x \sim_R mem_L(u_2) \cdot y$. Moreover, exactly one of the words $u_1 \cdot x$ and $u_2 \cdot y$ is in $L$. Hence, $u_1 \not\cong_L u_2$.
\qed
\end{proof}

To order to prove Lemma~\ref{lem:inequivalence-S}, it suffices to prove the following Lemma~\ref{lem:one_mapping}, since Lemma~\ref{lem:inequivalence-S} is a direct consequence of Lemma~\ref{lem:one_mapping}.

\begin{restatable}{lemma}{oneMapping}\label{lem:one_mapping}
    Let $L$ be a data language over $(\Sigma, R)$ where $\alphabet$ is dense. For a word $u \in \Sigma^*$ and two bijective order-preserving mappings $\sigma, \sigma'$ with $\sigma(mem_L(u)) = \sigma'(mem_L(u))$, it holds that $\sigma(u) \cdot w \in L \Leftrightarrow \sigma'(u) \cdot w \in L$ for all $w \in \Sigma^*$.
\end{restatable}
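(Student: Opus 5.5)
The plan is to use the canonical automaton $\mathcal{C}_L$ of \Cref{thm:myhill} and to argue that $\sigma(u)$ and $\sigma'(u)$ drive $\mathcal{C}_L$ into literally the same configuration. Throughout, $L$ is the fixed \dra-recognizable target, so $\mathcal{C}_L$ is a finite \wdra recognizing $L$.

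First I reduce to a single mapping. Let $u_1 = \sigma(u)$ and $\rho = \sigma' \circ \sigma^{-1}$. Since $\sigma$ and $\sigma'$ are bijective and order-preserving, so is $\rho$, and $\sigma'(u) = \rho(u_1)$. The hypothesis $\sigma(mem_L(u)) = \sigma'(mem_L(u))$ says precisely that $\rho$ fixes every letter of $m := \sigma(mem_L(u))$.

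Next I need $mem_L(u_1) = m$. Because $L$ is a data language and $\sigma$ is a bijective order-preserving map, a letter $a$ is $L$-memorable in $u$ iff $\sigma(a)$ is $L$-memorable in $\sigma(u)$. To see this, transport the witnesses $u', a', b', w$ and the mapping of \Cref{def:memorable} through $\sigma$; all conditions there are stated in terms of $\sim_R$-types and membership in $L$, and both are invariant under $\sigma$. The ordering by last occurrence is also preserved by $\sigma$, so $mem_L(u_1) = \sigma(mem_L(u)) = m$. By the same argument applied to $\rho$, we get $mem_L(\rho(u_1)) = \rho(m) = m$.

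The main step is the following claim: if $(q_0,\emptyword) \xrightarrow{x}_{\mathcal{C}_L} (q, r)$, then $r = mem_L(x)$. I expect this to be the real obstacle. I would prove it by induction on $|x|$ directly from the construction of $\mathcal{C}_L$. On reading $a$ from a configuration whose register content is $mem_L(x)$, the transition acts on the word $mem_L(x)\cdot a$ and deletes exactly the indices of letters that are not in $mem_L(x\cdot a)$, together with the earlier duplicate of $a$. What remains is $mem_L(x\cdot a)$ in last-occurrence order. For this I need the standard fact, recalled before \Cref{def:eq-relation-data-languages}, that $mem_L(x\cdot a)$ is a subsequence of $mem_L(x)\cdot a$. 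I also need that the transition chosen is the one determined by the type of $mem_L(x)\cdot a$, which uses determinism and well-definedness of $\mathcal{C}_L$ as given by \Cref{thm:myhill}.

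Finally comes the invariance of runs. Transition applicability in a \dra depends only on the $\sim_R$-type of $r\cdot a$, and removal of the positions in $E$ commutes with applying $\rho$ letterwise. Hence the run of $\mathcal{C}_L$ on $\rho(u_1)$ is the image under $\rho$ of the run on $u_1$. If $(q_0,\emptyword)\xrightarrow{u_1}(q,m)$, then $(q_0,\emptyword)\xrightarrow{\rho(u_1)}(q,\rho(m)) = (q,m)$; this is consistent with the previous step, which already gives register content $m$ for both words. Both words therefore reach the identical configuration $(q,m)$. By determinism, for every $w\in\Sigma^*$ the runs on $u_1\cdot w$ and $\rho(u_1)\cdot w$ end in the same state, so $\sigma(u)\cdot w\in L \Leftrightarrow \sigma'(u)\cdot w \in L$.
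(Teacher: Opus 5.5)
Your argument is correct under the assumption you fix, that $L$ is DRA-recognizable. It takes a different and heavier route than the paper, and it proves a narrower statement than the one given.

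\textbf{The paper's route.} The paper never uses an automaton. It instantiates reflexivity $u \cong_L u$ from Definition~\ref{def:eq-relation-data-languages} with $x=\sigma^{-1}(w)$ and $y=\sigma'^{-1}(w)$. It observes that $mem_L(u)\cdot\sigma^{-1}(w) \sim_R \sigma(mem_L(u))\cdot w = \sigma'(mem_L(u))\cdot w \sim_R mem_L(u)\cdot\sigma'^{-1}(w)$. This gives $u\cdot\sigma^{-1}(w)\in L \Leftrightarrow u\cdot\sigma'^{-1}(w)\in L$. It then transports back through $\sigma$ and $\sigma'$ using only that $L$ is a data language.

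\textbf{Your route.} You show that $\sigma(u)$ and $\sigma'(u)$ drive $\mathcal{C}_L$ into literally the same configuration. This rests on two facts: the register content of $\mathcal{C}_L$ after $x$ is exactly $mem_L(x)$, and runs commute with order-preserving renamings.

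\textbf{What each buys.} Both proofs outsource the same core fact, that non-memorable values do not influence the future, to the cited theory. The paper does so through reflexivity of $\cong_L$; you do so through Theorem~\ref{thm:myhill}. Your route yields a stronger, concrete conclusion: identical configurations in the canonical automaton. It also makes explicit the invariant ``registers $=$ memorable word'', which the paper uses informally in the proof of Lemma~\ref{lem:same-run}. The cost is scope: your argument needs $\mathcal{C}_L$ to be a finite deterministic DRA recognizing $L$. The lemma, however, is stated for an arbitrary data language over a dense domain. The paper itself notes that $\mathcal{C}_L$ may then have infinitely many states or registers and be nondeterministic, so Theorem~\ref{thm:myhill} gives you nothing in that case, whereas the paper's short argument covers it. For the application in Lemma~\ref{lem:inequivalence-S}, where $L$ is the fixed DRA-recognizable target, your restriction is harmless.

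\textbf{Two points of precision.}
\begin{itemize}
\item The fact recalled before Definition~\ref{def:eq-relation-data-languages} says that $mem_L(x)$ is a subsequence of the register content of any DRA for $L$ after reading $x$. It does not directly say that $mem_L(x\cdot a)$ is a subsequence of $mem_L(x)\cdot a$. You obtain the latter by applying the recalled fact to $\mathcal{C}_L$ itself, combined with your induction hypothesis. That combination is what closes the inductive step, so state it that way.
\item Your Step~2 needs no transport of witnesses. By Definition~\ref{def:memorable}, whether $a$ is memorable in $u$ depends only on the $\sim_R$-type of $u\cdot a$, so invariance under order-preserving bijections is immediate.
\end{itemize}
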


\begin{proof}
Since $\sigma$ and $\sigma'$ are bijective, their inverses $\sigma^{-1}$ and $\sigma'^{-1}$ exist and are also order-preserving.
Let $u$ be a word.
By the assumption $\sigma(mem_L(u)) = \sigma'(mem_L(u))$, it follows that for every word $w \in \finwords$,  $mem_L(u) \cdot \sigma^{-1}(w) \sim_R \sigma(mem_L(u) \cdot \sigma^{-1}(w)) = \sigma(mem_L(u)) \cdot w = \sigma'(mem_L(u)) \cdot w \sim_R \sigma'^{-1}(\sigma'(mem_L(u)) \cdot w) = mem_L(u) \cdot \sigma'^{-1}(w)$ holds.
That is, $mem_L(u)\cdot \sigma^{-1}(w) \sim_R mem_L(u) \cdot \sigma'^{-1}(w)$ holds for every $w \in \finwords$.
By definition of $\cong_L$, since $mem_L(u)\cdot \sigma^{-1}(w) \sim_R mem_L(u) \cdot \sigma'^{-1}(w)$, we have $u \cdot \sigma^{-1}(w) \in L \Leftrightarrow u \cdot \sigma'^{-1}(w) \in L$ for all $w \in \finwords$ because $ u \cong_L u$ holds.
It then follows that, $\sigma(u) \cdot w \in L \Leftrightarrow \sigma'(u) \cdot w \in L$ for all $w \in \finwords$ since $\sigma$ and $\sigma'$ are order-preserving mappings.
\qed
\end{proof}

\inequivalenceS*
\begin{proof}
    It is a direct consequence of Lemma~\ref{lem:one_mapping}.
    \qed
\end{proof}

The following lemma, which captures the key properties of \scn, follows directly from its definition.
\begin{lemma}\label{lem:S-equivalence-props}
Let $S \subseteq \Sigma^{*}$ be a set of words.
We have:
\begin{enumerate}
    \item if $u$ is not \scap $v$, then $u \not\cong_L v$;
    \item if $u$ is not \scap $v$, then $u$ is not \scap $v$ whenever $S \subseteq S'$.
\end{enumerate}
\end{lemma}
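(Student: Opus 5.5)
Both items will be derived directly from the definition of $S$-consistency: $u$ is $S$-consistent with $v$ iff $M(u)\sim_R M(v)$ and, for the fixed bijective order-preserving $\sigma$ with $\sigma(M(u))=M(v)$, we have $\MQ(\sigma(u)\cdot w)=\MQ(v\cdot w)$ for every $w\in S$. By negating this we get the hypothesis in both items. Either (a) $M(u)\not\sim_R M(v)$, or (b) $M(u)\sim_R M(v)$ and there is some $w\in S$ with $\sigma(u)\cdot w\in L\not\Leftrightarrow v\cdot w\in L$. We treat the two cases separately.

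For item 1, case (a) is immediate. We have $M(u)=mem_L(u)$ and $M(v)=mem_L(v)$, so the first clause of Definition~\ref{def:eq-relation-data-languages} fails, and hence $u\not\cong_L v$. In case (b) the memorable words have the same type, $\sigma$ is a bijective order-preserving mapping with $\sigma(mem_L(u))=mem_L(v)$, and $w$ witnesses $\sigma(u)\cdot w\in L\not\Leftrightarrow v\cdot w\in L$. This is exactly the right-hand side of Lemma~\ref{lem:inequivalence-S}, or equivalently of Lemma~\ref{lem:witness_of_not_eq}, so $u\not\cong_L v$. If $u=v$, Lemma~\ref{lem:inequivalence-S} does not apply as stated, but this case cannot occur. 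Then $\sigma(mem_L(u))=mem_L(u)$, so Lemma~\ref{lem:one_mapping}, applied with $\sigma$ and the identity, shows that $\sigma(u)\cdot w$ and $u\cdot w$ agree on $L$ for every $w$.

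For item 2 (reading the second $S$ as $S'$), monotonicity holds because the consistency condition is a conjunction indexed by $S$. In case (a) the type mismatch does not depend on the suffix set, so $u$ is not $S'$-consistent with $v$ either. In case (b) the witness $w\in S$ also lies in $S'\supseteq S$, so the same failing conjunct appears in the $S'$-condition.

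The one step that needs care is that the failure in case (b) must not depend on which order-preserving $\sigma$ the table uses. If the learner tests with some $\sigma$ and a later check uses a different $\sigma'$, the witness must survive. Lemma~\ref{lem:one_mapping} settles this: any two bijective order-preserving maps that agree on $mem_L(u)$ give the same membership for $\sigma(u)\cdot w$. So $S$-consistency is well defined independently of $\sigma$, and the argument above goes through. Beyond this, the proof is a routine unfolding of definitions.
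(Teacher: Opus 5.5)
Your proposal is correct and follows the route the paper intends. The paper only asserts that the lemma ``follows directly from its definition'': item~1 comes from the type clause of $\cong_L$ together with Lemma~\ref{lem:inequivalence-S}, and item~2 from monotonicity of a condition quantified over $S$. Your extra care about the case $u=v$ and about independence from the choice of $\sigma$ (via Lemma~\ref{lem:one_mapping}) is sound and makes explicit what the paper leaves implicit.
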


We present examples below to show that \scn is \emph{not} an equivalence relation: it is neither symmetric nor transitive.
\begin{example}\label{example:Sconsistency-not-equivalence}
    Consider the language $L_4$ where $n = 4$ for the language in \Cref{ex:inc_dec_sequences}, that is, $L_4 = \{w \in \alphabetQ \mid |w| = 4 \text{ and } w \text{ is strictly increasing or decreasing}\}$.

    \begin{center}
    \begin{tabular}{l||l|c|c}
     & $M$ & $\varepsilon$ & $1 \cdot 2$ \\
    \hline
    $\varepsilon$ & $\varepsilon$ & $-$ & $-$ \\
    $0$ & $0$ & $-$ & $-$ \\
    $0 \cdot 1$ & $1$ & $-$ & $-$\\
    \hline\hline
    \textcolor{gray}{$0 \cdot 1 \cdot 2$} & \textcolor{gray}{$2$} & \textcolor{gray}{$-$} & \textcolor{gray}{$-$} \\
    \end{tabular}
    \end{center}
    
    During the learning process, the above observation table may arise. 
    The prefixes of the first three rows belong to $U$, whereas the last row, $0 \cdot 1 \cdot 2$, belongs to $X$, 
    the only one-letter extension in $X$ shown in this table. 
    We denote by $u_i$ the prefix corresponding to the $i$th row.
    This table witnesses \emph{non-symmetry}: $u_2$ is \scap $u_3$, but $u_3$ is not \scap $u_2$.
    In addition, the one-letter extension $0 \cdot 1 \cdot 2$ (in $X$) is \scap both $u_2$ and $u_3$,
    witnessing \emph{non-transitivity} of \scn.
\end{example}


In order to prove Lemma~\ref{lem:correct-ness-cex-analysis}, we first prove the following Lemma.
\begin{lemma}\label{lem:same-run}
Let $w = a_1\cdots a_d$ and $\pi = (u_0, r_0) \xrightarrow{a_1} \cdots \xrightarrow{a_d} (u_d, r_d)$ be the run of $\hypothesis$ on $w$.
Let $u'_0, \cdots, u'_d$ be the sequence of states visited in Algorithm~\ref{algo:cex-analysis} if we remove the \textbf{if} statement (i.e., we do not use the early return).
It then holds that $u'_i = u_i$ for all $0 \leq i \leq d$.
\end{lemma}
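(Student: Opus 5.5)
The plan is an induction on $i$ that tracks two things together. The first is the state sequence. The second is the relationship between the actual configuration $(u_i,r_i)$ of $\hypothesis$ and the pair (state $u'_i$, remaining normalized suffix) maintained by Algorithm~\ref{algo:cex-analysis}. Write $w^{(i)}$ for the value of the variable $w$ at the start of iteration $i+1$, so that $w^{(0)} = w$ and $|w^{(i)}| = d-i$. I would prove the invariant below, in which the mapping $\rho_i$ is only an auxiliary device:
\begin{equation*}
u'_i = u_i \quad\text{and there is an order-preserving bijection } \rho_i \text{ with } \rho_i(r_i) = M(u_i),\ \rho_i(a_{i+1}\cdots a_d) = w^{(i)} .
\end{equation*}
In other words, $r_i\cdot a_{i+1}\cdots a_d \sim_R M(u_i)\cdot w^{(i)}$. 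The base case $i=0$ is immediate: $u'_0=u_0=\emptyword$, $r_0 = M(\emptyword) = \emptyword$, and we take $\rho_0 = \mathrm{id}$.

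For the inductive step, assume the invariant at $i$ and write $w^{(i)} = a\cdot z$ with $a = \rho_i(a_{i+1})$. First, $r_i\cdot a_{i+1} \sim_R M(u_i)\cdot a$, and $\hypothesis$ is deterministic and well-typed, with its transitions from $u_i$ labelled by types $\tau = M(u_i)\cdot b$ (Definition~\ref{def:hypothesis-construction}). Hence the transition the algorithm takes from $(u_i, M(u_i))$ on $a$ is the same transition $(u_i,\tau,E,u_{i+1})$ that $\hypothesis$ takes from $(u_i,r_i)$ on $a_{i+1}$. This gives $u'_{i+1} = u_{i+1}$. Next, $r_{i+1}$ is obtained from $r_i\cdot a_{i+1}$ by deleting the positions in $E$, and $M(u_i\cdot b)$ is obtained from $M(u_i)\cdot b$ by deleting the same positions $E$. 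This is exactly how $E$ is defined in Definition~\ref{def:hypothesis-construction}, together with the comment in the algorithm.

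Then I compose: $\sigma_1$ maps $M(u_i)\cdot a$ to $M(u_i)\cdot b$, and $\sigma_2$ maps $M(u_i b)$ to $M(u_{i+1})$. Set $\rho_{i+1} = \sigma_2\circ\sigma_1\circ\rho_i$, where we choose the bijective canonical maps from Sect.~\ref{sec:pre} so that everything composes. Then $\rho_{i+1}(a_{i+2}\cdots a_d) = \sigma_2(\sigma_1(z)) = w^{(i+1)}$. Moreover, $\rho_{i+1}$ sends $r_{i+1}$, which is a subsequence of $r_i a_{i+1}$ at the surviving positions, to the corresponding surviving subsequence of $M(u_i)b$, namely $M(u_ib)$, and then on to $M(u_{i+1})$. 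This re-establishes the invariant.

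The main obstacle is making the composition legitimate. The mapping $\sigma_1$ is only pinned down on $M(u_i)\cdot a$, and $\sigma_2$ only on $M(u_ib)$, while $\rho_i$ must act on letters of $r_i$ and on the remaining suffix simultaneously. The fix I would try first is to work with types rather than with concrete maps. The only facts needed are that order-preserving maps preserve $\sim_R$ of concatenations, and that the deleted positions are determined by $E$. Since every map involved is order-preserving on the whole of $\Sigma$, the type of (register content)$\cdot$(remaining suffix) is preserved at each step, whatever extensions are chosen. With that in hand, the state equality $u'_{i+1}=u_{i+1}$ follows purely from determinism and well-typedness of $\hypothesis$, so the invariant, and hence the lemma, holds for all $0\le i\le d$.
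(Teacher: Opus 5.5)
Your proof is correct and follows essentially the same route as the paper. Both argue by induction, maintaining $u'_i=u_i$ together with the type invariant $r_i\cdot a_{i+1}\cdots a_d \sim_R M(u_i)\cdot w^{(i)}$ (the paper's $r'_i a'_{i+1} z_i \sim_R r_i\, w[i+1\cdots]$ with $r'_i=M(u'_i)$). Determinism of $\hypothesis$ identifies the two transitions, and type equivalence is preserved because $\sigma_1,\sigma_2$ preserve types and deleting the same positions $E$ on both sides preserves $\sim_R$. The purely type-based version you fall back on is exactly how the paper argues. Your composition worry is moot, since order-preserving maps in the paper are total homomorphisms on $\finwords$ and therefore compose.
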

\begin{proof}
    Let $r'_0, \cdots, r'_d$ and $a'_1 z_0,\cdots, a'_d z_{d-1}$ be the corresponding register values and the remaining suffixes visited in Algorithm~\ref{algo:cex-analysis} where $z_d = \emptyword$ and $a'_1z_0 = w$.
    Let $r'_0 b_1, \cdots, r'_d b_d$ be the selected word type visited in Algorithm~\ref{algo:cex-analysis}.
    We first prove the following invariants in the whole loop by induction:
    \begin{enumerate}
        \item[(i)] $r'_i \cdot a'_{i+1} \sim_R M(u'_i)\cdot b_{i+1}$ for all $0 \leq i < d$.
        \item[(ii)] $r'_i a'_{i+1} z_i \sim_R r'_i b_{i+1} \sigma_1(z_i)$ for all $0 \leq i < d$.
    \end{enumerate}
For the base case, obviously, when $i = 0$, we have $u'_0 = r'_0$ and $z_0 = w[2\cdots]$.
It follows that $M(u'_0) = \emptyword$ and thus $r'_0\cdot a'_{1} \sim_R M(u'_0)\cdot b_1$ holds directly.
Moreover, $\sigma_1$ is in fact an order-preserving mapping from $a'_{1} = a_1$ to $b_1$.
Hence, $\sigma_1(a'_{1}z_0) = b_1\cdot \sigma_1(z_0) \sim_R a'_{1}z_0$.

For the induction step, assume that it holds for all $i \le \ell$.
According to Algorithm~\ref{algo:cex-analysis}, $r'_{\ell+1} = M(u'_{\ell+1}) $ and $z_{\ell + 1} = \sigma_2(\sigma_1(z_\ell))$.
Since when reading $a'_{\ell+2}$ at configuration $(u'_{\ell+1}, r'_{\ell+1}=M(u'_{\ell+1}))$, the transition with the word type $M(u'_{\ell+1})\cdot b_{\ell+2}$ is selected, it indicates that $r'_{\ell+1}\cdot a'_{\ell+2} = M(u'_{\ell+1})\cdot a'_{\ell+2} \sim_R M(u'_{\ell+1})\cdot b_{\ell+2}$.
Recall that current $\sigma_1$ is an order-preserving mapping from $r'_{\ell+1} \cdot a'_{\ell+2}$ to $r'_{\ell+1}\cdot b_{\ell+2}$.
Therefore, it holds that $r'_{\ell+1}\cdot a'_{\ell+2} \cdot z_{\ell+1} \sim_R \sigma_1(r'_{\ell+1}\cdot a'_{\ell+2} \cdot z_{\ell+1}) = r'_{\ell+1}\cdot b_{\ell+2}\cdot \sigma_1(z_{\ell+1})$.

Since $r'_i a'_{i+1} z_i \sim_R r'_i b_{i+1} \sigma_1(z_i)$ for all $0 \leq i < d$, we have that $u'_i a'_{i+1} z_i \in L$ if and only if, $u'_i b_{i+1} \sigma_1(z_i) \in L$.
The reason is the following.
Assume that $\target$ is the minimal well-typed DRA of $L$ defined by $\cong_L$.
After reading $u'_i$, $\target$ would reach a configuration $(q, r)$ where $r = r'_i$.
Since $r'_i a'_{i+1} z_i \sim_R r'_i b_{i+1} \sigma_1(z_i)$, if from $(q, r)$, $a'_{i+1} z_i$ is accepted, $\target$ also accepts $b_{i+1} \sigma_1(z_i)$.

    Next, we prove by induction that $u'_i = u_i$ and $r'_i a'_{i+1} z_i \sim_R r_i w[i+1\cdots]$ for all $0\leq i < d$.

    For the base case, obviously, when $i = 0$, we have $u'_0 = u_0 = \emptyword$.
    Moreover, $a'_{1}z_1  = w$.
    Since $r'_0 = r_0 = M(u'_0)  = \emptyword$, the claim holds.
    
    Now, for the induction step, assume that $u'_\ell = u_\ell$, $r'_\ell\cdot a'_{\ell + 1} z_{\ell} \sim_R r_\ell \cdot w[\ell+1\cdots]$.
    We want to prove that the claim also holds at $\ell + 1$.
    Let $(u_\ell, M(u_\ell)\cdot a_{\ell+1}, E_\ell, u_{\ell+1})$ be the transition taken in $\hypothesis$ when reading $a_{\ell +1} = w[\ell+1]$ at $(u_{\ell}, r_{\ell})$ and $(u'_{\ell}, M(u'_{\ell})\cdot b_{\ell+1}, E'_\ell, u'_{\ell+1})$ be the transition taken in Algorithm~\ref{algo:cex-analysis} when reading $a'_{\ell+1} $ at $(u'_{\ell}, r'_{\ell})$.
    By induction hypothesis, we know that $u_{\ell} = u'_{\ell}$ and $r'_{\ell} a'_{\ell+1} z_{\ell} \sim_R r_{\ell}\cdot w[\ell+1\cdots]$.
    It follows that $r'_{\ell} \cdot a'_{\ell+1} \sim_R r_{\ell}\cdot w[\ell+1] = r_{\ell}\cdot a_{\ell+1}$ (the prefixes also have the same type).
    On the other hand, $M(u_\ell)\cdot a_{\ell+1} = r_{\ell} \cdot a_{\ell +1} \sim_R r'_{\ell}\cdot a'_{\ell+1} \sim_R r'_{\ell}\cdot b_{\ell+1}$ hold.
    So, $M(u_\ell)\cdot a_{\ell+1} \sim_R M(u'_{\ell})\cdot b_{\ell+1}$.
    We then know that the transition $(u_\ell, M(u_\ell)\cdot a_{\ell+1}, E_\ell, u_{\ell+1})$ and the transition $(u'_{\ell}, M(u'_{\ell})\cdot b_{\ell+1}, E'_\ell, u'_{\ell+1})$ are the same transition since $u'_{\ell} = u_{\ell}$ and $\hypothesis$ is deterministic.
    Therefore, $u_{\ell + 1} = u'_{\ell+1}$.
    In fact, by Claim (ii), we have $r'_{\ell} b_{\ell+1} \sigma_1(z_\ell) \sim_R r'_{\ell}\cdot a'_{\ell+1}\cdot z_{\ell}$.
    Since $r'_{\ell}\cdot a'_{\ell+1}z_{\ell} \sim_R r_{\ell} \cdot w[\ell+1\cdots]$, we also have that $r'_{\ell} b_{\ell+1} \sigma_1(z_{\ell}) \sim_R r_{\ell} \cdot w[\ell+1\cdots]$.
    By removing the letters in the same positions from $r'_{\ell}\cdot b_{\ell+1}$ and $r_{\ell} \cdot a_{\ell+1}$, we know that $M(r'_{\ell}\cdot b_{\ell+1})\cdot \sigma_1(z_{\ell}) \sim_R M(r_{\ell} \cdot a_{\ell+1})\cdot w[\ell+2\cdots]$.
    Further, we have $M(r'_{\ell}\cdot b_{\ell+1})\cdot \sigma_1(z_{\ell}) \sim_R \sigma_2(M(r'_{\ell}\cdot b_{\ell+1})\cdot \sigma_1(z_{\ell})) = M(u'_{\ell+1}) \cdot \sigma_2(\sigma_1(z_{\ell})) = r'_{\ell+1} \cdot a'_{\ell+2}  z_{\ell+1}$ and $M(r_{\ell} \cdot a_{\ell+1})\cdot w[\ell+2\cdots] = r_{\ell+1}\cdot w[\ell+2\cdots]$.
    Therefore, it holds that $u'_{i} = u_{i}$ and $r'_{i}a'_{i+1}z_i \sim_R r_{i}\cdot w[i+1]$ for all $0\leq i < d$.
    
    With similar argument, we can also prove that $u'_d = u_d$. Therefore, we have completed the proof.
\qed
\end{proof}

\correctnessCEXanalysis*
\begin{proof}
By \Cref{lem:same-run}, the state sequence over $w$ and that in Algorithm~\ref{algo:cex-analysis} coincide.
Therefore, for the last state $u_d$, we have $\MQ(u_d) \neq \textit{mq}$.
Therefore, the membership results must flip at some point before reaching the last state $u_d$ and the if condition will be satisfied. 
Hence, \Cref{algo:cex-analysis} terminates after at most $|w|$ iterations.

Assume it returns at the $i$th iteration with 
$v = \sigma_{2}(\sigma_{1}(z_{i-1}))$ and some $u_i'$ such that  
\[
  \MQ(u_i' \cdot v) \neq \MQ(u_{i-1}' \cdot a_i' z_{i-1}),
\]
where the one-letter extension $u_{i-1}' b_i$ is currently \scap $u_i'$.
Since
\[
  \sigma_{2}(\sigma_{1}(M(u_{i-1}' a_i'))) 
  = \sigma_{2}(M(u_{i-1}' b_i)) 
  = M(u_i'),
\]
we obtain
\[
  \sigma_{2}(\sigma_{1}(M(u_{i-1}' a_i'))) \cdot \sigma_{2}(\sigma_{1}(z_{i-1}))
  = M(u_i') \cdot v.
\]

Moreover, by proof of \Cref{lem:same-run},  
\[
  u_{i-1}' a_i' z_{i-1} \in L 
  \iff 
  u_{i-1}' b_i \sigma_{1}(z_{i-1}) \in L,
\]
and hence,
\[
  \MQ(u_{i-1}' a_i' z_{i-1})
  = \MQ(u_{i-1}' b_i \sigma_{1}(z_{i-1}))
  = \MQ(\sigma_{2}(u_{i-1}' b_i) \cdot v)
\]

Thus, $ \MQ(\sigma_{2}(u_{i-1}' b_i) \cdot v) \neq \MQ(u_i' \cdot v)$, and $u_{i-1}' b_i$ is not $S \mathbin{\uplus} \{v\}$-consistent with $u_i'$, witnessed by the word $v$ and the mapping $\sigma_{2}$. 
\qed
\end{proof}

We now provide the missing proof for Theorem~\ref{thm:correctness-learning-algorithm}.

\correctnessLearning*

\begin{proof}

When \Cref{algo:cex-analysis} returns a distinguishing suffix $v$ separating $ub$ from the state $u'$ with which $ub$ is currently $S$-consistent, 
we add $v$ to the suffix set $S$ to correct the transition from $u$ over $b$.  
Consequently, $ub$ becomes not $S \uplus \{v\}$-consistent with $u'$.  
Each transition can be corrected at most $n-1$ times; 
that is, the target state of a transition over $b$ from $u$ may be misclassified no more than $n-1$ times, 
as once $ub$ is inconsistent with a state $u''$, it remains inconsistent with $u''$ thereafter by \Cref{lem:S-equivalence-props}.
Each counterexample will correct at least one transition.
Hence, the learner asks at most $m \cdot (n-1)$ equivalence queries in total.
It is worth noting that in our experiments, new states will usually be discovered after adding the suffix returned from the counterexample analysis procedure.

For memorability queries, we only need it for every outgoing transition.
The reason is as follows:
first, for the initial state $\emptyword$, $M(\emptyword)$ is trivially $\emptyword$, so there is no need to ask memorability queries for the empty word.
The only place we need to ask memorability queries is when we obtain a new one-letter extension and add it to the extension set $X$.
(All new states come from $X$, so their memorable words have been queried when first added into $X$.)
Since the number of all one-letter extensions is equivalent to the number of transitions, the number of memorability queries is $m$.
So, the learner needs $m$ memorability queries.

For membership queries, we need (1) $\bigO(n \times n\times m)$ queries to fill the table entries since there are at most $m\times n$ counterexamples/suffixes, 
(2) $\bigO(d \times n \times m)$ membership queries to analyze the counterexamples where each analysis costs at most $d$ membership queries and we need to analyze counterexamples at most $m\times n$ times, and (3) $ \bigO(m^3\times n^3)$ membership queries to look for the \scn class for each extension in $X$ when constructing the conjectured DRA where $|X| \leq m$, each transition will cost at most $n \times n \times m $ times during a DRA construction with $n$ being the maximum row number and $m\times n$ being maximum column number and we need to construct conjectured DRAs at most $m\times n$ times.
Therefore, we need in total $\bigO(d\times m\times n + m^3 \times n^3)$ membership queries.
\qed
\end{proof}

\section{Preliminary Experiments}
\label{app:expr}
We implemented the proposed active learning algorithm in Python\footnote{\url{https://github.com/liyong31/RALearning.git}} and evaluated it on all examples presented in this paper, as well as on randomly generated DRAs. All experiments were conducted on a machine running Ubuntu 24.04 LTS, equipped with an Intel i7-11800H CPU and 16\,GB of RAM.

\noindent\textbf{Query Complexity.}
Due to space constraints, we report results only for the family of DRAs introduced in \Cref{ex:inc_dec_sequences}, namely the languages $L_n$ for $n = 1, \ldots, 25$. The minimal \wdra recognizing $L_n$ has three states and three transitions when $n = 1$, and $2n$ states and $6n - 6$ transitions when $n > 1$. As shown in \Cref{fig:Ln}, the numbers of all query types grow polynomially with the size of the target DRA (measured by $2n$ states and $6n - 6$ transitions). Among them, the number of $\MQ$s grows the fastest, followed by $\EQ$s and then $\Mem$ queries, in line with the bounds in \Cref{thm:correctness-learning-algorithm}. Additional results are reported in \Cref{app:experiments}.

\begin{figure}[t]
    \centering
    \begin{subfigure}{0.49\textwidth}
        \centering
        \includegraphics[width=1\linewidth]{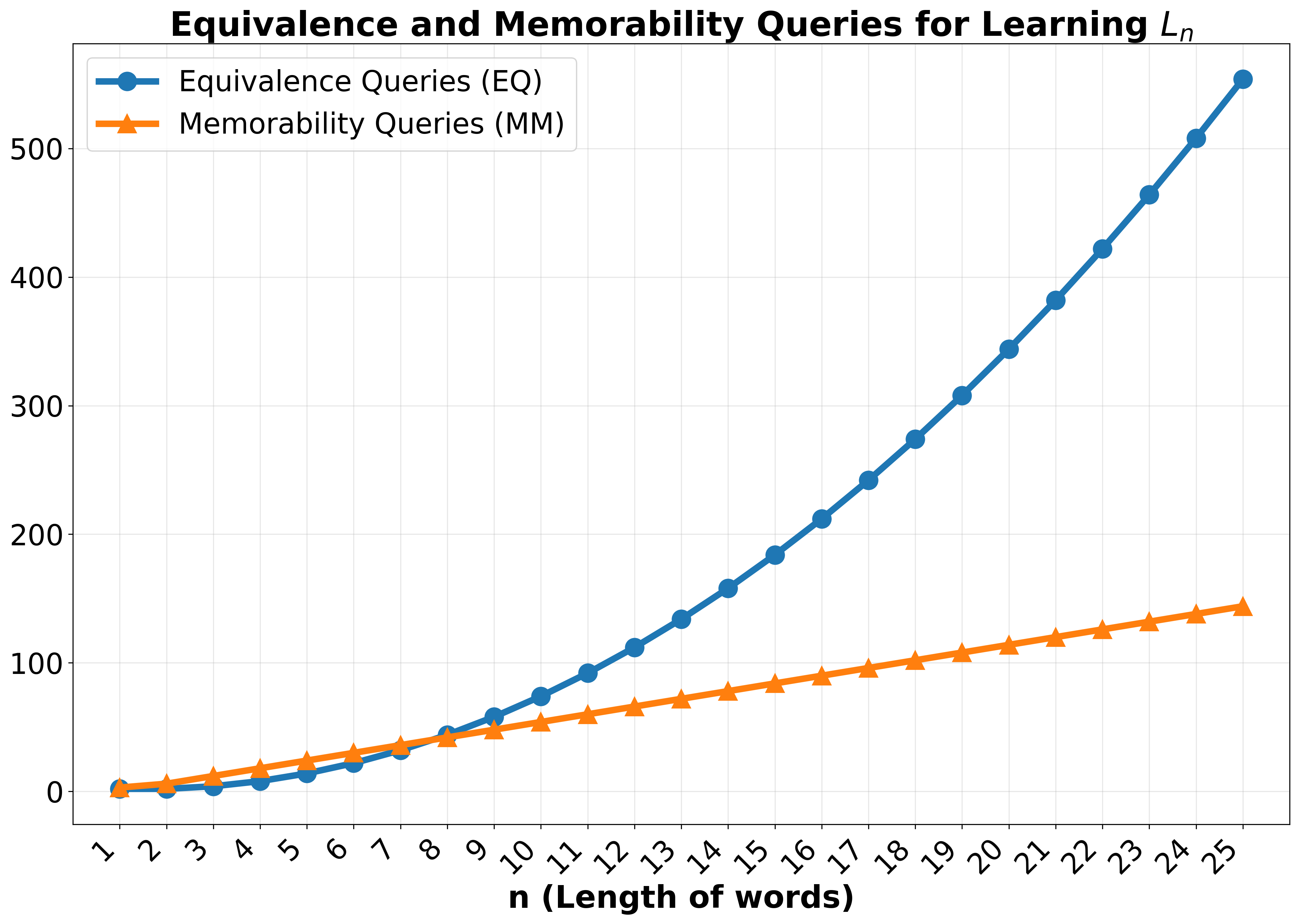}
    \end{subfigure}
    \hfill
    \begin{subfigure}{0.49\textwidth}
        \centering
        \includegraphics[width=1\linewidth]{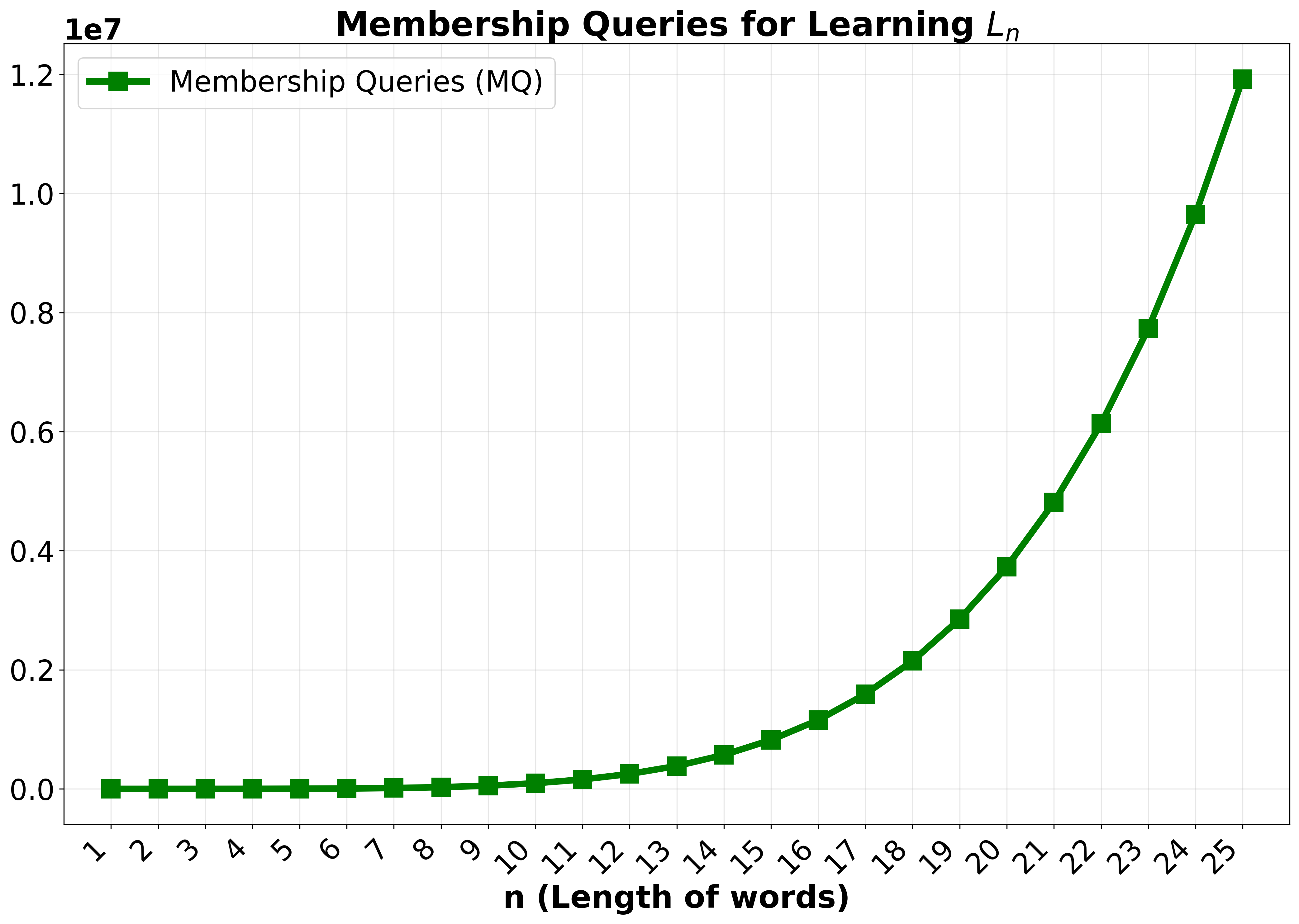}
    \end{subfigure}
    \caption{The x-axis is the the length of word $n = 1, \ldots, 25$ for $L_n$. Left: the number of $\EQ$s and $\Mem$s for each $n$. Right: the number of $\MQ$s.}
    \label{fig:Ln}
\end{figure}

\noindent\textbf{Evaluation of \ralib\ and our approach.}
We also conducted experiments on $L_n$ using \ralib~\cite{DBLP:journals/fac/CasselHJS16, Cassel2015RALibA}, using the \texttt{iosimulator} backend and default settings on $L_n$ for $2 \leq n \leq 10$, without imposing a time limit. However, a direct comparison is not straightforward: \ralib\footnote{\url{https://github.com/LearnLib/ralib}} targets register extended finite state machines (EFSMs)~\cite{DBLP:journals/fac/CasselHJS16}, which incorporate both inputs and outputs, whereas our model does not support outputs. Moreover, EFSMs characterize safety languages and do not explicitly distinguish between accepting and rejecting states.

To enable comparison, we translate our register automata into EFSMs by introducing outputs: \texttt{OOK} for transitions that can reach accepting states and \texttt{ONO} for transitions reaching a rejecting sink. Since EFSMs separate input and output states, this translation increases the number of states, as reflected in \Cref{tab:experiments}, where the input models of \ralib~are larger.
Thus, the comparison with \ralib~is not meant as a direct performance evaluation, but as a demonstration that our approach effectively learns models in a complementary setting. While \ralib\ learns EFSMs using tree queries ($\TQ$s) and approximate equivalence checking, our method uses word-based membership queries and exact equivalence checking.

Table~\ref{tab:experiments} summarizes the results. \ralib\ requires fewer $\TQ$s because tree queries compactly encode multiple words, whereas our method operates on individual words for $\MQ$s. As a result, our total runtime is dominated by $\MQ$s, while \ralib\ spends more time on $\EQ$s. Despite the larger number of $\MQ$s, our approach scales predictably with the size of the target model, as also reflected by the linear growth in memorability ($\Mem$) queries.
Furthermore, \ralib\ employs approximate equivalence checking with probabilistic guarantees, resulting in a small and constant number of $\EQ$s.
Consequently, the learned models produced by \ralib\ do \emph{not} always accept exactly the same language as the target DRAs, as observed in our experiments.
In contrast, our exact equivalence checking ensures correctness but requires more $\EQ$s as the model size increases.
Moreover, the well-typed register model used in our work appears to enable fast equivalence checking in practice.

We also considered register benchmarks from the \ralib~GitHub repository, which contains a collection of models related to those in the Automata Wiki~\cite{Neider2019}\footnote{\url{https://automata.cs.ru.nl/}}. However, to the best of our knowledge, there is no established translation from EFSMs to our register automata model.
To adapt these benchmarks, we combined each EFSM input transition and its corresponding output transition into a single transition.
Since our model does not support multiple parameters as in EFSMs, we instead introduced additional states to process parameters sequentially. 
All states were made accepting.
Nevertheless, certain EFSM constructs remain challenging to encode. For example, a state may have transitions such as \texttt{I\_get(p)} and \texttt{I\_put(p)}, where \texttt{I\_get} and \texttt{I\_put} are input symbols and $p$ is a parameter. In our translation, input action names are abstracted away, which prevents distinguishing such cases based solely on parameters. As a result, this may introduce nondeterminism in the translated models.
After excluding nondeterministic instances, we observed that, with few exceptions such as the alternative bit protocol and password models, the models learned by our tool often accept universal languages and thus collapse to a single state. As these cases provide limited insight for evaluation, we omit them from our experimental results.
\begin{table}[t]
\centering
\caption{Experimental results for \ralib{} and our approach on $L_n$. DRA sizes are reported as numbers of states, and runtimes are given in seconds. All runtimes are total operation times.}
\label{tab:experiments}
\scalebox{0.73}{
\begin{tabular}{lrrrrrrrrrrrrrr}
\toprule
\multirow{2}{*}{Model}
& \multicolumn{2}{c}{Input DRA size}
& \multicolumn{2}{c}{Learned DRA size}
& \multicolumn{2}{c}{\#$\TQ/\MQ$}
& \multicolumn{2}{c}{\#$\EQ$}
& \multicolumn{2}{c}{$\TQ/\MQ$ Time (s)}
& \multicolumn{2}{c}{$\EQ$ Time (s)}
& \multicolumn{2}{c}{Our $\Mem$} \\
\cmidrule(lr){2-3}
\cmidrule(lr){4-5}
\cmidrule(lr){6-7}
\cmidrule(lr){8-9}
\cmidrule(lr){10-11}
\cmidrule(lr){12-13}
\cmidrule(lr){14-15}
& \multicolumn{1}{c}{\ralib} & \multicolumn{1}{c}{Ours}
& \multicolumn{1}{c}{\ralib} & \multicolumn{1}{c}{Ours}
& \multicolumn{1}{c}{\ralib} & \multicolumn{1}{c}{Ours}
& \multicolumn{1}{c}{\ralib} & \multicolumn{1}{c}{Ours}
& \multicolumn{1}{c}{\ralib} & \multicolumn{1}{c}{Ours}
& \multicolumn{1}{c}{\ralib} & \multicolumn{1}{c}{Ours}
& \multicolumn{1}{c}{\#$\Mem$} & \multicolumn{1}{c}{Time} \\
\midrule
$L_2$ & 9 & 4 & 8 & 4 & 34 & 39 & 2 & 2 & $<$1 & $<$1 & 2.19 & $<$1 & 6 &   $<$1\\
$L_3$ & 16 & 6 & 10 & 6 & 49 & 156 & 2 & 4 & $<$1 & $<$1 & 2.57 & $<$1 & 12 &   $<$1\\
$L_4$ & 22 & 8 & 12 & 8 & 68 & 662 & 2 & 8 & $<$1 & $<$1 & 2.91 & $<$1 & 18 &  $<$1 \\
$L_5$ & 28 & 10 & 14 & 10 & 91 & 2,173 & 2 & 14 & $<$1 & $<$1 & 3.07 & $<$1 & 24 &  $<$1 \\
$L_6$ & 34 & 12 & 16 & 12 & 118 & 5,885 & 2 & 22 & $<$1 & $<$1 & 3.37 & $<$1 & 30 &  $<$1 \\
$L_7$ & 40 & 14 & 18 & 14 & 149 & 13,653 & 2 & 32 & $<$1 & $<$1 & 3.59 & $<$1 & 36 &  $<$1 \\
$L_8$ & 46 & 16 & 20 & 16 & 184 & 28,192 & 2 & 44 & $<$1 & 1.25 & 3.76 & $<$1 & 42 &  $<$1 \\
$L_9$ & 52 & 18 & 22 & 18 & 223 & 53,312 & 2 & 58 & $<$1 & 2.58 & 4.04 & $<$1 & 48 &   $<$1\\
$L_{10}$ & 58 & 20 & 24 & 20 & 266 & 94,412 & 2 & 74 & $<$1 & 5.11 & 3.70 & $<$1 & 54 &  $<$1 \\
\bottomrule
\end{tabular}
}
\end{table}

\section{Additional Experiments}\label{app:experiments}


\paragraph*{\textbf{The Family of Languages $L_{\textit{n}}$.}}

In addition to the figures in the main paper, we report in the following table the number of equivalence queries ($\EQ$), membership queries ($\MQ$) and memorability queries.
We also report the total time, which includes the time for the teacher to solve the three types of queries and the total membership query resolving time to learn each automaton.
We can see that the time consumed by resolving membership queries in the teacher dominates runtime in the whole procedure for learning the target automata.

\renewcommand{\arraystretch}{1.2} 
{\small
\setlength{\tabcolsep}{3pt}
\begin{longtable}{
    S[table-format=1.0] 
    S[table-format=2.0]
    S[table-format=5.2]
    S[table-format=7.2]
    S[table-format=9.2]
    S[table-format=7.2]
    r 
    r 
}
\toprule
\textbf{$n$} &
\multicolumn{1}{r}{\textbf{States}} & 
\multicolumn{1}{r}{\textbf{Transitions}} &
\multicolumn{1}{c}{\textbf{EQ}} & 
\multicolumn{1}{c}{\textbf{MQ}} & 
\multicolumn{1}{r}{\textbf{Mem}} & 
\multicolumn{1}{r}{\textbf{MQ time (s)}} &
\multicolumn{1}{r}{\textbf{Total Time (s)}} \\
\midrule
\endfirsthead

\toprule
\textbf{$n$} &
\multicolumn{1}{r}{\textbf{States}} & 
\multicolumn{1}{r}{\textbf{Transitions}} &
\multicolumn{1}{c}{\textbf{EQ}} & 
\multicolumn{1}{c}{\textbf{MQ}} & 
\multicolumn{1}{r}{\textbf{Mem}} & 
\multicolumn{1}{r}{\textbf{MQ time (s)}} & 
\multicolumn{1}{r}{\textbf{Total Time (s)}} \\
\midrule
\endhead

\midrule \multicolumn{8}{r}{\textit{Continued on next page}} \\
\endfoot

\bottomrule
\endlastfoot

1  & 3  & 3   & 2   & 24        & 3   & {<1} & {<1} \\
2  & 4  & 6   & 2   & 39        & 6   & {<1} & {<1} \\
3  & 6  & 12  & 4   & 156       & 12  & {<1} & {<1} \\
4  & 8  & 18  & 8   & 662       & 18  & {<1} & {<1} \\
5  & 10 & 24  & 14  & 2173      & 24  & {<1} & {<1} \\
6  & 12 & 30  & 22  & 5885      & 30  & {<1} & {<1} \\
7  & 14 & 36  & 32  & 13653     & 36  & {<1} & {<1} \\
8  & 16 & 42  & 44  & 28192     & 42  & 1.25 & 1.67 \\
9  & 18 & 48  & 58  & 53312     & 48  & 2.58 & 3.23 \\
10 & 20 & 54  & 74  & 94412     & 54  & 5.11 & 6.08 \\
11 & 22 & 60  & 92  & 157820    & 60  & 9.10 & 10.45 \\
12 & 24 & 66  & 112 & 250559    & 66  & 15.39 & 17.32 \\
13 & 26 & 72  & 134 & 384197    & 72  & 25.21 & 28.00 \\
14 & 28 & 78  & 158 & 570135    & 78  & 38.99 & 42.70 \\
15 & 30 & 84  & 184 & 822663    & 84  & 59.74 & 64.97 \\
16 & 32 & 90  & 212 & 1155200   & 90  & 88.16 & 95.11 \\
17 & 34 & 96  & 242 & 1590956   & 96  & 129.28 & 138.45 \\
18 & 36 & 102 & 274 & 2148216   & 102 & 181.08 & 193.26 \\
19 & 38 & 108 & 308 & 2852259   & 108 & 256.15 & 271.12 \\
20 & 40 & 114 & 344 & 3730170   & 114 & 347.37 & 368.70 \\
21 & 42 & 120 & 382 & 4814609   & 120 & 473.87 & 499.51 \\
22 & 44 & 126 & 422 & 6133333   & 126 & 615.41 & 647.93 \\
23 & 46 & 132 & 464 & 7730804   & 132 & 823.10 & 863.34 \\
24 & 48 & 138 & 508 & 9644939   & 138 & 1061.34 & 1111.84 \\
25 & 50 & 144 & 554 & 11922628  & 144 & 1369.29 & 1429.55 \\
\end{longtable}
}

\paragraph*{\textbf{The Language $L_{\textit{mid}}$.}}
Recall that $L_{\mathit{mid}} = \{\,a_1 a_2 a_3 \mid a_1 < a_3 < a_2\,\}$ over $(\mathbb{Z},<)$.
In our experiments, we learn $L_{\mathit{mid}}$ over $(\mathbb{R},<)$ instead. 
The minimal well-typed DRA for $L_{\mathit{mid}}$ has 5 states and 11 transitions.
Learning this automaton requires 2 $\EQ$s, 72 $\MQ$s, and 11 memorability queries. 
The final observation table is shown below.

\begin{table}[h]
\centering
\begin{tabular}{l||l|l|c}
& $M$ & $\varepsilon$ & $0 \cdot 1 \cdot 0.5$  \\
\hline
$\varepsilon$ & $\varepsilon$ & $-$ & $+$ \\
$0$ & $0$ & $-$ & $-$ \\
$0\cdot 1$ & $0\cdot 1$ & $-$ & $-$ \\
$0\cdot 1\cdot 0.5$ & $\varepsilon$ & $+$ & $-$ \\
$0\cdot -1$ & $\varepsilon$ & $-$ & $-$ \\
\hline
\end{tabular}
\end{table}

The DRA learned is shown in \Cref{fig:Lmid}. 
Notice that this figure is generated by our tool, and the position index in $E$ starts from $0$ rather than $1$.
\begin{figure}[ht]
    \centering
    \includegraphics[width=\linewidth]{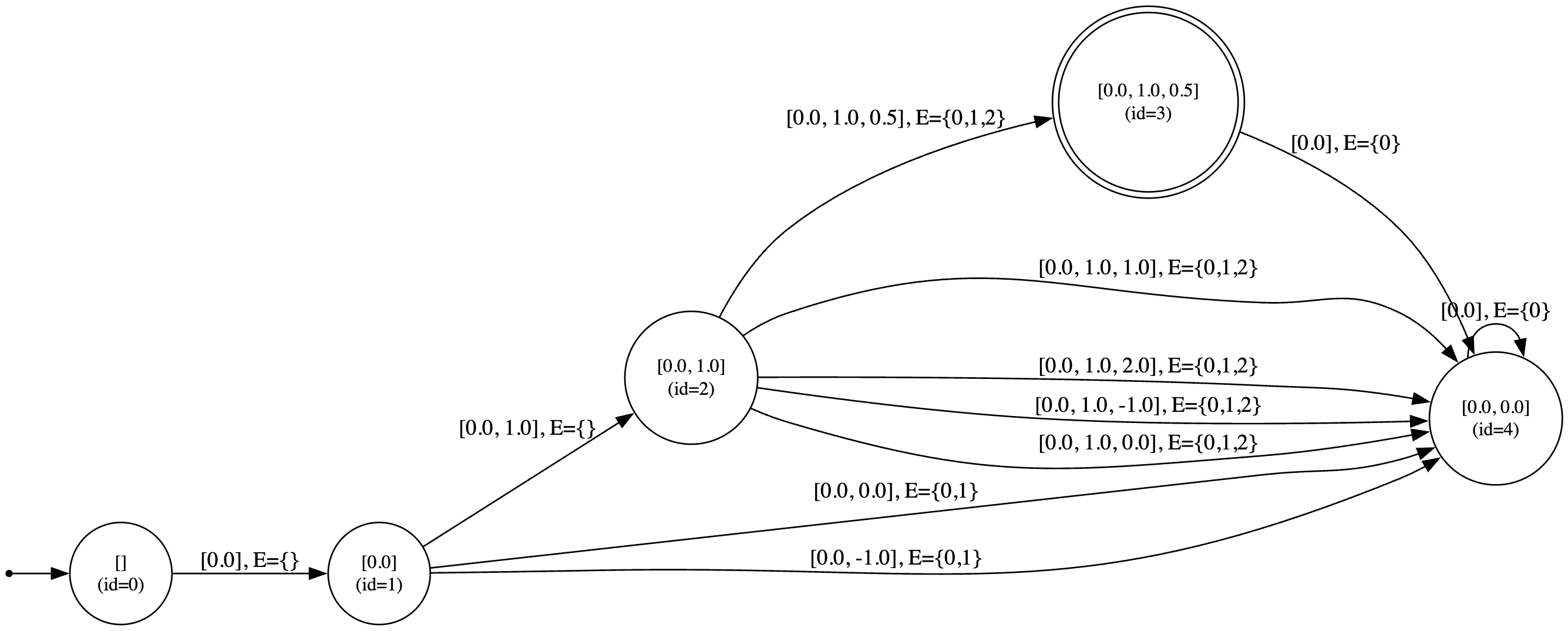}
    \caption{Learned DRA recognizing $L_{\textit{mid}}$}
    \label{fig:Lmid}
\end{figure}
    
\paragraph*{\textbf{The Language $L_{\textit{rep}}$ of \Cref{sec:learning-mem}.}}
Recall that $L_{\mathit{rep}} = \{xyxy \in \alphabetQ^4 \mid x < y \}$.
The minimal well-typed DRA for $L_{\mathit{rep}}$ has 6 states and 14 transitions.
Learning this automaton requires 3 $\EQ$s, 156 $\MQ$s, and 14 memorability queries. 
The final observation table and the DRA are the same as the ones in \Cref{fig:table-and-dra}.

\paragraph*{\textbf{Randomly Generated DRAs.}}
\begin{figure}[htb]
    \centering
    \begin{subfigure}[h]{0.49\textwidth}
        \centering
        \includegraphics[width=\linewidth]{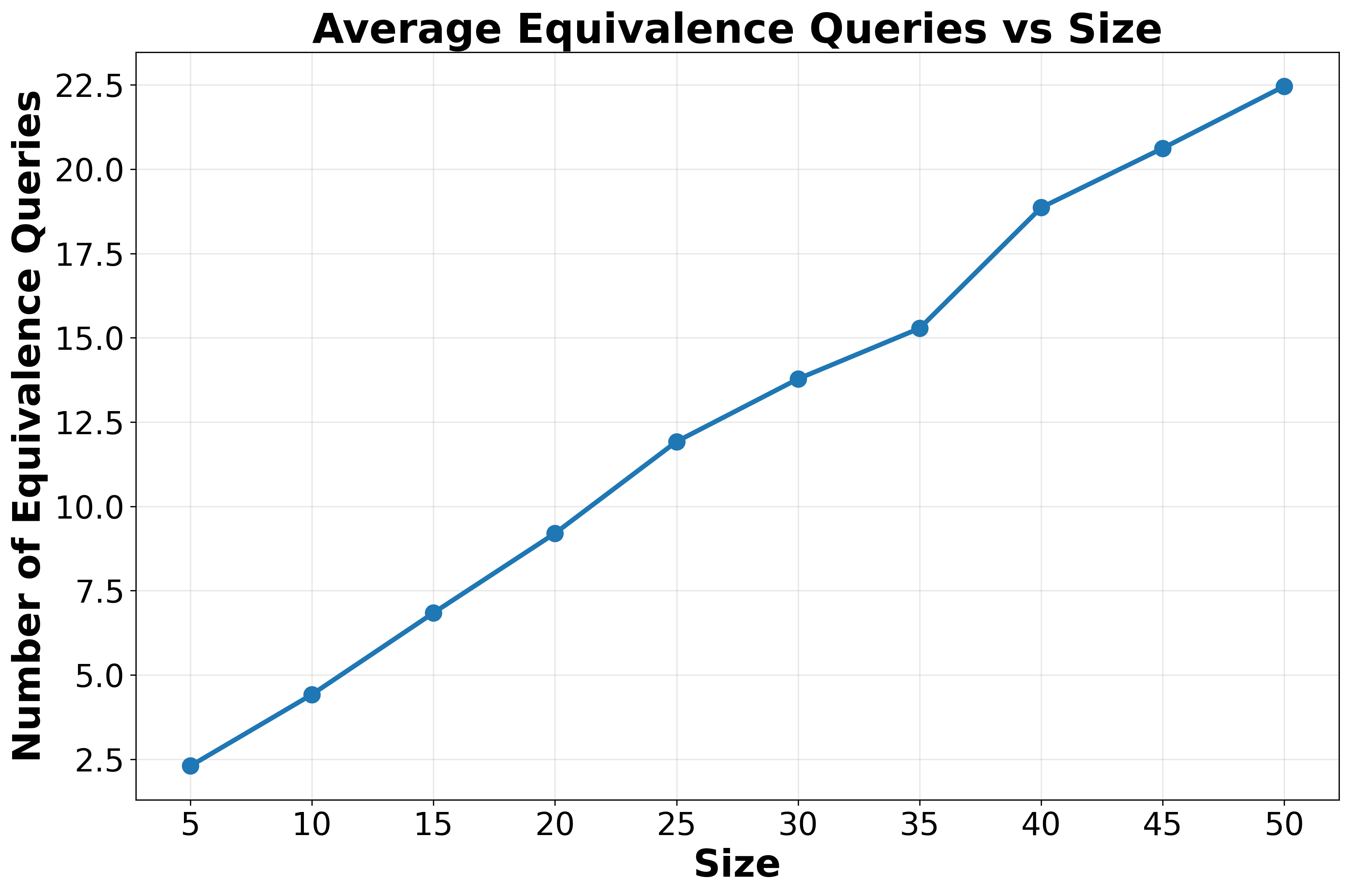}
        \label{fig:fig1}
    \end{subfigure}
    \hfill
    \begin{subfigure}[h]{0.49\textwidth}
        \centering
        \includegraphics[width=\linewidth]{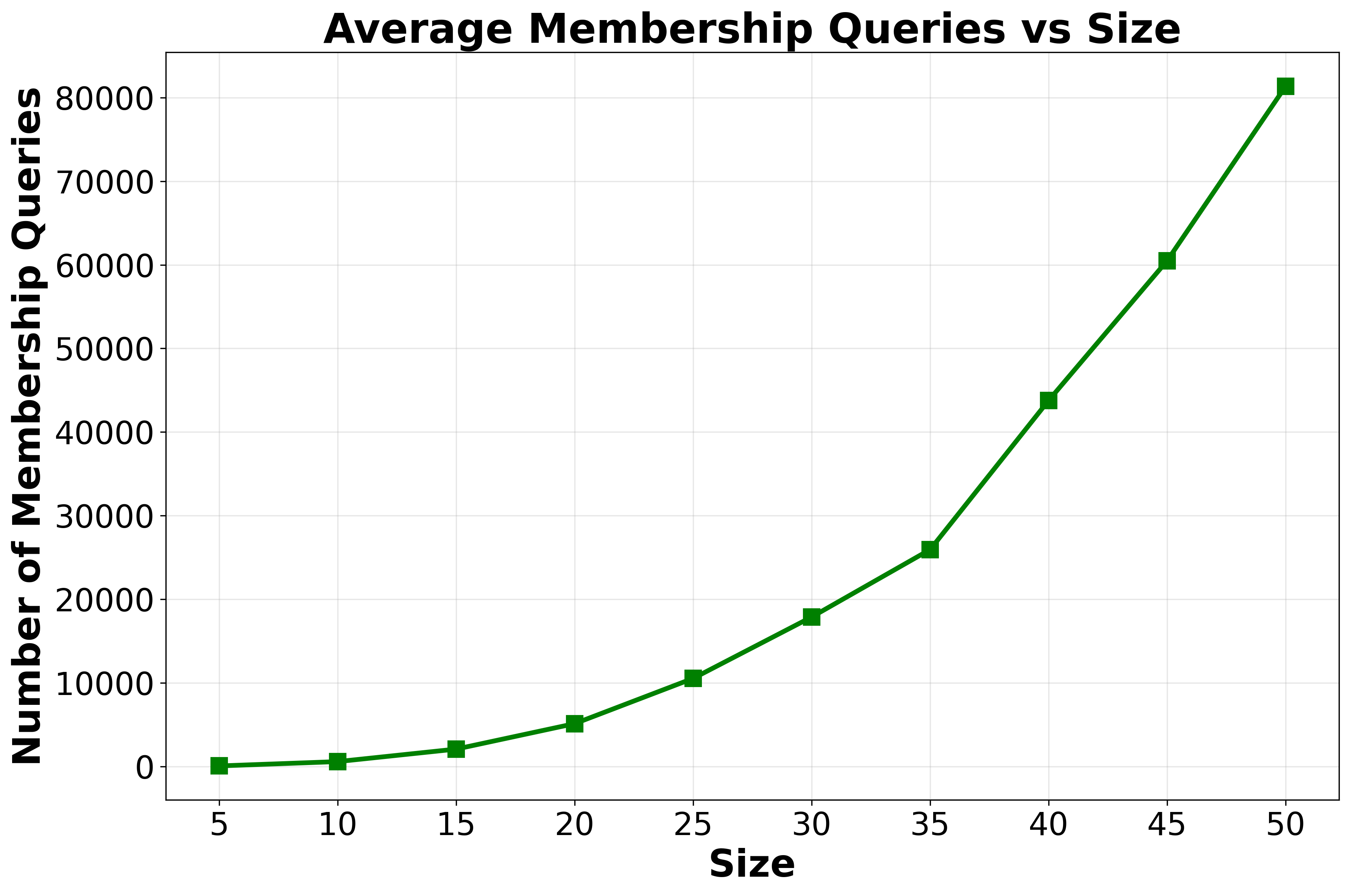}
        \label{fig:fig2}
    \end{subfigure}
    ~
    \begin{subfigure}[h]{0.49\textwidth}
        \centering
        \includegraphics[width=\linewidth]{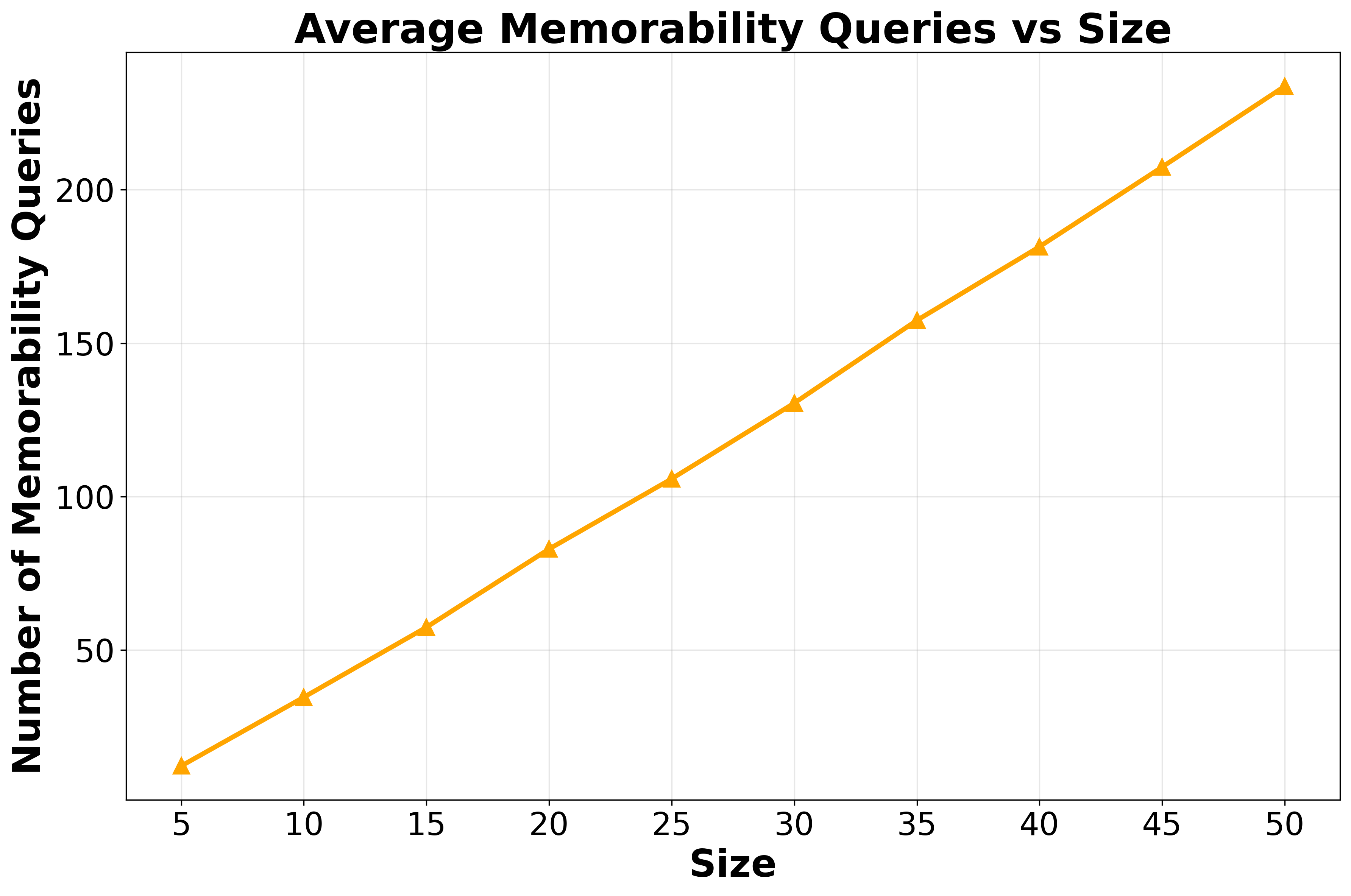}
        \label{fig:fig3}
    \end{subfigure}
    \hfill
    \begin{subfigure}[h]{0.49\textwidth}
        \centering
        \includegraphics[width=\linewidth]{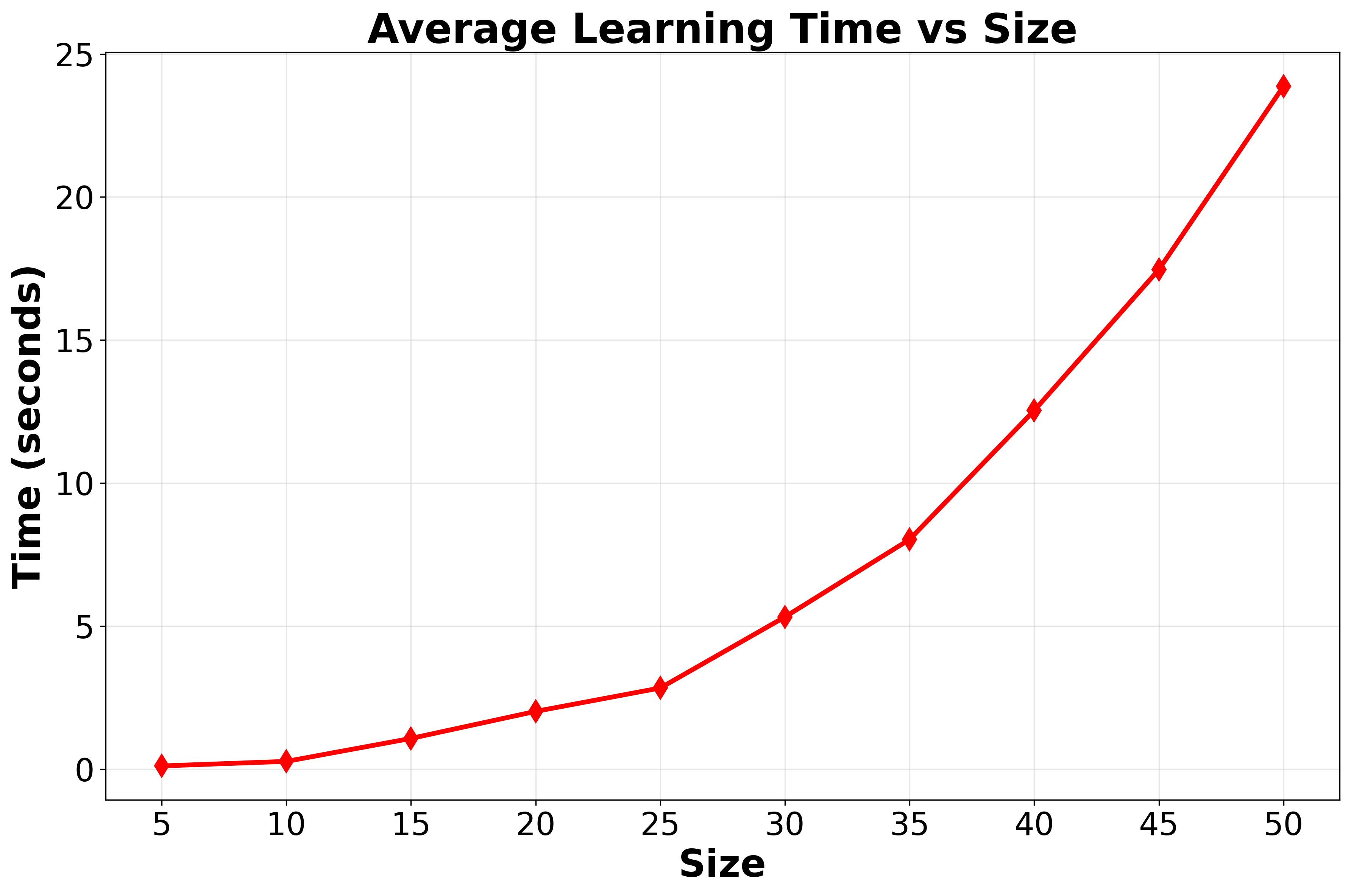}
        \label{fig:fig4}
    \end{subfigure}
    \caption{Average statistics for learning random minimal well-typed DRAs.}
    \label{fig:random_DRA}
\end{figure}

We generated random minimal well-typed DRAs of sizes $5, 10, \ldots, 50$, with 50 automata for each size. 
We report in the following table the \textbf{average} (Avg) number of equivalence queries ($\EQ$), membership queries ($\MQ$), and memorability queries and the average time required.
As can be seen in \Cref{fig:random_DRA}, similar to the trend for the family of languages $L_n$, the number of $\MQ$s grows the fastest, compared to $\EQ$s and then $\Mem$ queries, in accordance with \Cref{thm:correctness-learning-algorithm}.
The average time per automaton here includes the time for the teacher to solve the queries.
{
\newcolumntype{Y}{>{\centering\arraybackslash}X} 
\begin{table}[h]
\centering
\begin{tabularx}{0.97\textwidth}{ 
  l
  S[table-format=9.2]
  S[table-format=6.2]
  S[table-format=9.2]
  S[table-format=8.2]
  S[table-format=8.2]
  }
\toprule
\textbf{Size} & \multicolumn{1}{r}{Avg \#\textbf{Transitions}} &\multicolumn{1}{r}{Avg \#\textbf{EQ}} & \multicolumn{1}{r}{Avg \#\textbf{MQ}} & \multicolumn{1}{r}{Avg \#\textbf{Mem}} 
& \multicolumn{1}{r}{Avg \textbf{Time (s)}} \\
\midrule
5    &  12.2  &     2.3    &    75.0      &      12.2    &   0.12      \\     
10   &   34.6   &    4.4    &    593.1     &      34.6    &   0.28       \\    
15   &   57.4   &    6.8    &    2086.1    &      57.4    &   1.08        \\   
20   &   82.9   &    9.2    &    5131.1    &      82.9    &   2.02        \\   
25   &   105.8  &    11.9   &    10542.2   &      105.8   &   2.84         \\  
30   &   130.5  &    13.8   &    17883.5   &      130.5   &   5.33         \\  
35   &   157.4  &    15.3   &    25949.8   &      157.4   &   8.03         \\  
40   &    181.4  &    18.9   &    43782.9   &      181.4   &   12.54        \\  
45   &     207.4  &    20.6   &    60489.1   &      207.4   &   17.46        \\  
50   &     233.7  &    22.5   &    81354.0   &      233.7   &   23.87  \\
\bottomrule
\end{tabularx}
\end{table}
}

\newpage
\section{Missing Proofs of \Cref{sec:decision-problems}} \label{app:decision-problems}
\paragraph*{\textbf{Language Intersection Non-emptiness and Language Inclusion}.}
Given a $k_1$-DRA $\mathcal{A}_1 = (Q^1, q_0^1,F^1,\Delta^1)$ and a $k_2$-DRA $\mathcal{A}_2 = (Q^2, q_0^2,F^2,\Delta^2)$, the \emph{language intersection non-emptiness} asks whether there is a word $w \in L(\mathcal{A}_1) \cap L(\mathcal{A}_2)$. 
The  \emph{language inclusion} problem asks whether $L(\mathcal{A}_1) \subseteq L(\mathcal{A}_2)$. 
This is equivalent to the language intersection emptiness problem, that is, whether $L(\mathcal{A}_1) \cap \overline{L(\mathcal{A}_2)} = \emptyset$, where $\overline{L(\mathcal{A}_2)}$ denotes the complement of $L(\mathcal{A}_2)$. 
Since $\mathcal{A}_2$ is a DRA, it is straightforward to construct a DRA $\mathcal{B}_2$ accepting the complement language, that is, $L(\mathcal{B}_2) = \overline{L(\mathcal{A}_2)}$, by simply swapping the accepting and non-accepting states of $\mathcal{A}_2$.


To solve the language intersection emptiness problem, we can use the product register automaton of \Cref{def:product-automaton}, which simulates the operations of both automata simultaneously.
Recall that this product register automaton differs from the RAs in \Cref{def:RA}: its registers may contain duplicate values and it is not well-typed.
Nevertheless, language non-emptiness can be checked by simply testing whether there exists a path from the initial state $(q_0^1, q_0^2)$ to some accepting state in $F^\times$.
Although such a path may be exponentially long—bounded by $|Q_1|\cdot |Q_2|\cdot (k_1 + k_2)!$—we do not need to store the entire path. 
It suffices to maintain a counter together with the current configuration of the product automaton, both of which require only polynomial space. 
This yields a PSPACE decision procedure.

For the lower-bound, we reduce from the PSPACE-complete problem of deciding whether a linear-bounded deterministic Turing machine $M$ accepts an input $w$.
The reduction closely follows \cite[Theorem~32]{DBLP:conf/mfcs/MurawskiRT18}, which considers DRAs that do not permit register permutations.
The key is to encode the tape contents using DRA registers, which cannot store duplicate values, and to permute the register symbols while simulating the Turing machine.




\thmInclusionPSPACE*
\begin{proof}
Since DRAs can be easily complemented, it suffices to show that the intersection \emph{non-emptiness} problem for DRAs is PSPACE-complete. 
The complement problem---the intersection \emph{emptiness} problem---is also PSPACE-complete. 
Consequently, the DRA language inclusion problem, which is equivalent to intersection emptiness, is PSPACE-complete as well.
\paragraph*{\textbf{Inclusion in $\mathbf{PSPACE}$.}}
Given a $k_1$-DRA $\mathcal{A}_1 = (Q^1, q_0^1,F^1,\Delta^1)$ and a $k_2$-DRA $\mathcal{A}_2 = (Q^2, q_0^2,F^2,\Delta^2)$, 
we ask whether whether there is a word $w \in L(\mathcal{A}_1) \cap L(\mathcal{A}_2)$. 
Equivalently, it asks whether there is a run from the initial state $(q_0^1, q_0^2)$ to a final state in $F^\times$ in the product automaton $\mathcal{A}_1 \times \mathcal{A}_2$. 

We now consider the product automaton $\mathcal{A}_1 \times \mathcal{A}_2$.
We show that if there is a run from the initial state $(q_0^1, q_0^2)$ to a final state in $F^\times$, then there exists a run of length at most 
$|Q_1|\cdot |Q_2|\cdot (k_1 + k_2)!$ reaching a state in $F^\times$.

Based on this observation, we obtain a nondeterministic polynomial-space algorithm that guesses the input word symbol by symbol. 
The algorithm keeps only a counter and the current configuration of the product automaton, and therefore uses polynomial space.
By Savitch's theorem, PSPACE = NPSPACE, hence the intersection \emph{non-emptiness} problem for DRAs is in PSPACE.

At a configuration $(q^1, q^2, x^1, x^2)$, the algorithm guesses a symbol $a$. 
The number of possible choices for $a$ is bounded by
\[
2(|x^1| + |x^2|) + 1 \;\le\; 2(k_1 + k_2) + 1.
\]
To justify this, we reorder the concatenation $x^1 x^2$ into non-decreasing order and retain only the distinct symbols, denoting the resulting string by 
$x = x_1 \cdots x_\ell$.
The guessed symbol $a$ may then be chosen as:
\begin{itemize}
    \item $x_i$ for any $1 \le i \le \ell$,
    \item a symbol greater than $x_\ell$ or smaller than $x_1$, or
    \item a symbol strictly between $x_i$ and $x_{i+1}$ for any $1 \le i \le \ell - 1$.
\end{itemize}

After guessing $a$, the algorithm transitions to the next configuration and increments the counter. 
It starts with the initial configuration $(q_0^1, q_0^2, \varepsilon, \varepsilon)$ with the counter set to~$0$, and proceeds as follows:
\begin{itemize}
    \item If the counter has not yet reached 
    $|Q_1|\cdot |Q_2|\cdot (k_1 + k_2)!$ 
    and a final state is reached, it accepts.
    \item If the counter has not reached this bound and no final state has been reached, it continues by guessing another symbol.
    \item If the counter reaches 
    $|Q_1|\cdot |Q_2|\cdot (k_1 + k_2)!$ 
    without reaching a final state, it rejects.
\end{itemize}

It remains to show that $|Q_1|\cdot|Q_2|\cdot(k_1+k_2)!$ is an upper bound on the length of a shortest run reaching a final state.
Assume that there is a run in the product automaton from the initial state
$(q_0^1, q_0^2)$ to a final state in $F^\times$ over a word $u$ of length $\ell$.
Let the run be
\[
\rho_0 = (q_0^1, q_0^2, \varepsilon, \varepsilon)
  \xrightarrow{u_1} \rho_1 = (q_1^1, q_1^2, x_1^1, x_1^2)
  \xrightarrow{u_2} \cdots
  \xrightarrow{u_\ell} \rho_{\ell} = (q_{\ell}^1, q_{\ell}^2, x_{\ell}^1, x_{\ell}^2),
\]
where $(q_{\ell}^1, q_{\ell}^2) \in F^\times$.

We say that two configurations 
\[
\rho_i = (q_i^1, q_i^2, x_i^1, x_i^2)
\qquad\text{and}\qquad
\rho_j = (q_j^1, q_j^2, x_j^1, x_j^2)
\]
satisfy $\rho_i \approx \rho_j$ if and only if
$(q_i^1, q_i^2) = (q_j^1, q_j^2)$ and
$x_i^1 x_i^2 \sim_R x_j^1 x_j^2$.

Let $\ell \ge |Q_1|\cdot|Q_2|\cdot (k_1 + k_2)!$.
By the pigeonhole principle, there exist two distinct configurations along the run
such that $\rho_i \approx \rho_j$.
Fix such a pair with $i < j$ so that no earlier–later pair 
$\rho_k, \rho_{k'}$ with $\rho_k \approx \rho_{k'}$ satisfies
either $k \le i < j < k'$ or $k < i \le j \le k'$.

We now construct a new run on a new word that still reaches a final
state in $F^\times$, but in which no two configurations are $\approx$-equivalent.
This new run will therefore have length at most 
$|Q_1|\cdot|Q_2|\cdot (k_1 + k_2)!$.

The idea is to modify the run (and word) from configuration $\rho_i$ onward so that
the $k$-th configuration after $\rho_i$ is $\rho_{j+k}'$ with 
$\rho_{j+k}' \approx \rho_{j+k}$ for all $0 \le k \le (\ell - j)$.
Thus the new run is of the form
\[
\rho_0 \;\cdots\; \rho_i = \rho_j'
   \xrightarrow{u_{j+1}'} \rho_{j+1}'
   \xrightarrow{u_{j+2}'} \cdots
   \xrightarrow{u_\ell'} \rho_{\ell}'.
\]

It remains to show the following key property:
for any transition
$
\rho_i \xrightarrow{a} \rho_{i+1}
$ and any configuration $\rho_j \approx \rho_i$,
there exists a symbol $a'$ such that
$\rho_j \xrightarrow{a'} \rho_{j+1}$ and $\rho_{j+1} \approx \rho_{i+1}$.
Let $\rho_i = (q_i^1, q_i^2, x_i^1, x_i^2)$ and
$\rho_j = (q_j^1, q_j^2, x_j^1, x_j^2)$.
We only need to pick a symbol $a'$ such that
$
x_i^1 x_i^2 a \;\sim_R\; x_j^1 x_j^2 a'
$,
which is always possible because $x_i^1 x_i^2 \;\sim_R\; x_j^1 x_j^2$ and the data domain is dense.

\paragraph*{\textbf{$\mathbf{PSPACE}$-Hardness.}}
For PSPACE-hardness, we reduce from the PSPACE-complete problem of determining whether a linear bounded deterministic Turing
machine $M$ accepts an input $w$ (see, for example, the linear bounded prediction problem in~\cite{DBLP:journals/tcs/GolesMST16}).

More precisely, we consider a Turing machine with binary
alphabet and linearly bounded tape 
$M = \langle Q, q_0, \delta, q_{acc} \rangle$ such that $Q$ is a finite set of states, 
$q_0 \in Q$ is the initial state, $q_{acc} \in Q$ is the accepting state, and 
$\delta : Q \times \{0,1\} \to Q \times \{0,1\} \times \{-1,1\}$ 
is a transition function. 
A configuration of $M$ is a triple $\langle q, i, w \rangle$ where 
$q \in Q$ is the machine state, 
$0 \le i < |M|$ is the head position, and 
$w \in \{0,1\}^{|M|}$ is the tape content. 
Let $\delta(q, w(i)) = \langle q', b, j \rangle$.  
If $1 \le i + j \le |M|$, then the state 
$\langle q', i + j, w[i \mapsto b] \rangle$ 
is the unique successor of $\langle q, i, w \rangle$.
The PSPACE-hard problem asks whether $M$ accepts a word $w_0\in \{0,1\}^{|M|}$, that is, whether $\langle q_0, 1, w_0 \rangle$ reaches a configuration with the accepting state $q_f$.

The main idea of the reduction is similar to the PSPACE-hardness proof of the inclusion problem of a different type of DRAs, the ones where permutations of registers in transitions are not allowed \cite[Theorem~32]{DBLP:conf/mfcs/MurawskiRT18}.
Since the values in the register need to be distinct, we construct two $(|M|+1)$-DRAs $\mathcal{A}_1, \mathcal{A}_2$ such that whenever they synchronize on a data word, their register assignments $\rho_1, \rho_2$ represent the content of $M$ tape cells as follows: $0$ in the $i$-th tape cell is represented by $\rho_1(i) = \rho_2(i)$, and $1$ by $\rho_1(i) \notin \rho_2$. 
One extra register plays a technical role that helps maintain the representation. 
The position of the head and state of the machine are maintained in the state of the automata.
We show that $M$ accepts $w$ if and only if $\mathcal{A}_1$ and $\mathcal{A}_2$ both accept some data word $w'$ (intersection non-emptiness). 

There are two phases for the DRAs: the \emph{initialization} phase and the \emph{simulation} phase. 
In the initialization phase, the DRAs are initialized so that they jointly encode the initial configuration of the Turing machine~$M$, with their combined register assignments representing the initial tape contents~$w_0$. 

\paragraph*{{Initialization Phase.}}
In the initialization phase, both $\mathcal{A}_1$ and $\mathcal{A}_2$ read $2|M|$ symbols, where the $(2i{-}1)$-th and $2i$-th symbols jointly represent $w_0(i)$ for all $1 \le i \le |M|$. 
If $w_0(i)=0$, both automata verify that these two symbols are identical but distinct from all previously seen symbols, and then store the symbol in their respective registers; otherwise, they transition to rejecting sink states. 
If $w_0(i)=1$, the automata ensure that the two symbols are distinct from each other and from all previously seen symbols: $\mathcal{A}_1$ stores the $(2i{-}1)$-th symbol and $\mathcal{A}_2$ stores the $2i$-th symbol. 
Any violation leads to the rejecting sink. 

Thus, whenever neither DRA reaches a rejecting sink, both automata end up in states with register assignments $\rho_1$ and $\rho_2$ that together encode the tape contents $w_0$. 

Each DRA requires $2|M|$ states for this phase, in addition to the initial and rejecting sink states. 
The extra register may be used to check correct initialization: 
$\mathcal{A}_2$ first stores the $(2i{-}1)$-th symbol, and upon reading the $2i$-th symbol, 
if it differs from all previously stored symbols (including the $(2i{-}1)$-th), it updates its register accordingly—moving to a new non-sink state as $w_0(i)=1$ or rejecting otherwise. 
If the two symbols are equal and new, it interprets this as $w_0(i)=0$; otherwise, it transitions to a rejecting sink. 
$\mathcal{A}_1$ can use its extra register in a similar manner.

For example, when $w_0 = 011$, both $\mathcal{A}_1$ and $\mathcal{A}_2$ can read the data word $001234$ and end up in (non-sink) states with register assignments $\rho_1=013$ and  $\rho_2=024$ respectively; the data words $001034$ and $001134$ are rejected by both DRAs and the data word $001232$ is rejected by $\mathcal{A}_2$.

\paragraph*{{Simulation Phase.}}
Once the DRAs are properly initialized, both are in the states representing the initial machine state~$q_0$ and head position~$1$ (i.e., the head on the first tape cell), with register assignments~$\rho_1$ and~$\rho_2$ that together encode the initial tape contents~$w_0$.
At this point, they are ready to simulate the transitions of~$M$.

Each transition $\delta(q, w(i)) = \langle q', b, j \rangle$, that is, $\langle q, i, w \rangle \to \langle q', i+j, w[i \mapsto b] \rangle$, is simulated by at most $2|M|+2$ transitions.
Starting at the states representing the machine state~$q$ and head position~$i$ with register assignments~$\rho_1$ and~$\rho_2$ that jointly encode~$w$, after a sequence of transitions the automata reach the states representing the machine state~$q'$ and head position~$i+j$, with register assignments~$\rho_1'$ and~$\rho_2'$ that jointly encode the updated tape contents~$w[i \mapsto b]$.

The idea is as follows.
The automata read two symbols—the $i$-th symbol in~$\mathcal{A}_1$ and the $i$-th symbol in~$\mathcal{A}_2$—to determine whether $w(i) = 0$ or $1$.
Based on this information, each automaton simulates the corresponding transition of~$M$.
If $M$ does not rewrite $w(i)$, i.e.\ $b = w(i)$, both automata move to new states representing~$q'$ and head position~$i+j$ without changing their register assignments, since $w[i \mapsto b] = w$.

Otherwise, when $b \neq w(i)$, the symbol at position~$i$ must be updated.
Since the register automata can only forget symbols or append new ones at the end of their registers, they first rearrange the registers so that the $i$-th position can be modified.
They do this by reading $2(i-1)$ symbols—$\rho_1(1)\cdots\rho_1(i-1)$ and $\rho_2(1)\cdots\rho_2(i-1)$—so that the registers are in a form where the $i$-th entry is ready to be replaced, reaching temporary assignments 
\[
\begin{aligned}
&\rho_1(i)\cdots\rho_1(|M|)\rho_1(1)\cdots\rho_1(i-1), \text{ and} \\
&\rho_2(i)\cdots\rho_2(|M|)\rho_2(1)\cdots\rho_2(i-1).
\end{aligned}
\]

In the case $w(i) = 0$ and it needs to be updated to $b = 1$, that is, $\rho_1(i) = \rho_2(i)$ and the registers must be updated to two different symbols~$a$ and~$b$, the automata read and store two fresh symbols, reaching states with updated register assignments
\[
\begin{aligned}
&\rho_1(i+1)\cdots\rho_1(|M|)\rho_1(1)\cdots\rho_1(i-1)a, \text{ and}\\
&\rho_2(i+1)\cdots\rho_2(|M|)\rho_2(1)\cdots\rho_2(i-1)b.
\end{aligned}
\]

Conversely, when $w(i) = 1$ and it needs to be updated to $b = 0$, that is, $\rho_1(i) \neq \rho_2(i)$ and both registers must be updated to the same symbol, the automata read a new symbol~$a$, reaching states with updated register assignments
\[
\begin{aligned}
&\rho_1(i+1)\cdots\rho_1(|M|)\rho_1(1)\cdots\rho_1(i-1)a, \text{ and}\\
&\rho_2(i+1)\cdots\rho_2(|M|)\rho_2(1)\cdots\rho_2(i-1)a.
\end{aligned}
\]
Finally, another $2(|M|-i)$ transitions restore the correct register order, yielding 
\[
\rho_1' = \rho_1(i \mapsto a)
\quad\text{and either}\quad
\rho_2' = \rho_2(i \mapsto a)
\ \text{or}\ 
\rho_2' = \rho_2(i \mapsto b).
\]

Both automata are in accepting states when they correspond to the machine accepting state $q_{acc}$.
It is easy to see that an accepting computation of $M$ on $w_0$ corresponds to a data word accepted by both $\mathcal{A}_1$ and $\mathcal{A}_2$, and vice versa.
Since both DRAs are of polynomial size in $M$, the reduction is in polynomial time.
Moreover, the DRAs constructed can use identity tests only. 
\qed
\end{proof}

\paragraph*{\textbf{Configuration Equivalence}.}
Given a DRA $\mathcal{A}$ over $(\Sigma, R)$ and a configuration $(q, v)$ of $\mathcal{A}$, we define the language of $\mathcal{A}^{(q, v)}$ as $L(\mathcal{A}^{(q, v)}) = \{ w \in \Sigma^* \mid \exists\, q_f \in F, u \in \Sigma^*.~(q, v) \xrightarrow{w}_\mathcal{A} (q_f, u) \}$.
Given two DRAs $\mathcal{A}_1$ and $\mathcal{A}_2$ over $(\Sigma, R)$ and two configurations $(q_1, v_1)$ and $(q_2, v_2)$ of $\mathcal{A}_1$ and $\mathcal{A}_2$ respectively, the configuration equivalence problem asks whether $L(\mathcal{A}_1^{(q_1, v_1)}) = L(\mathcal{A}_2^{(q_2, v_2)})$.



\paragraph*{\textbf{Language Equivalence}.}
Given two DRAs $\mathcal{A}_1$ and $\mathcal{A}_2$ over $(\Sigma, R)$, the language equivalence problem asks whether $L(\mathcal{A}_1) = L(\mathcal{A}_2)$. 

The language equivalence problem is a special case of the configuration equivalence problem, that is, it is equivalent to the configuration equivalence problem whether $L(\mathcal{A}_1^{(q_0^1, \varepsilon)}) = L(\mathcal{A}_2^{(q_0^2, \varepsilon)})$ where $(q_0^1, \varepsilon)$ and $(q_0^2, \varepsilon)$ are the initial configurations of $\mathcal{A}_1$ and $\mathcal{A}_2$ respectively.
Both problems are in PSPACE, by arguments analogous to the PSPACE upper-bound proof for language inclusion.
Indeed, language equivalence reduces to checking the two inclusions $L(\mathcal{A}_1) \subseteq L(\mathcal{A}_2)$ and $L(\mathcal{A}_2) \subseteq L(\mathcal{A}_1)$, which is then in PSPACE.


\thmLangEquiv*
\begin{proof}
    It suffices to prove that the configuration equivalence problem is in PSPACE since the language equivalence problem is a special case of the configuration equivalence problem.
    Given two DRAs $\mathcal{A}_1$ and $\mathcal{A}_2$ and configurations $(q_1, v_1)$ and $(q_2, v_2)$ of $\mathcal{A}_1$ and $\mathcal{A}_2$, respectively, the configuration equivalence problem
    $L(\mathcal{A}_1^{(q_1, v_1)}) = L(\mathcal{A}_2^{(q_2, v_2)})$
    reduces to checking two inclusions:
    $L(\mathcal{A}_1^{(q_1, v_1)}) \subseteq L(\mathcal{A}_2^{(q_2, v_2)})$ and
    $L(\mathcal{A}_1^{(q_1, v_1)}) \supseteq L(\mathcal{A}_2^{(q_2, v_2)})$.
    Each inclusion further reduces to a language intersection emptiness check, assuming the DRAs start from the given initial configurations.  
    For example, checking $L(\mathcal{A}_1^{(q_1, v_1)}) \subseteq L(\mathcal{A}_2^{(q_2, v_2)})$ reduces to checking
    $L(\mathcal{A}_1^{(q_1, v_1)}) \cap L(\mathcal{B}_2^{(q_2, v_2)}) = \emptyset$,
    where $\mathcal{B}_2$ is the complement of $\mathcal{A}_2$, obtained by swapping its accepting and non-accepting states.  
    This intersection emptiness check is in PSPACE, similar to the case of DRAs without initial assignments.
    \qed
\end{proof}

\paragraph*{\textbf{Memorability}.}
Given a $k$-DRA $\mathcal{A}$ over $(\Sigma, <)$, a word $u \in \Sigma^{*}$ and $a \in u$, the memorability problem asks whether $a$ is $L$-memorable in $u$, that is, whether $a \in mem(u)$.

We first present a simpler version of \Cref{def:memorable} for the case in which  the alphabet $\Sigma$ is dense. 
In \Cref{def:memorable}, the alphabet $\Sigma$ may be dense or non-dense.
When $\Sigma$ is non-dense, we may not find a symbol $b$ and an $R$-preserving mapping $\sigma$ for $u$ with $\sigma(a)=b$ and $u \sim_R \sigma(u)$, hence the definition refers to another word of the same type.
In contrast, for dense $\Sigma$, such a symbol $b$ and mapping $\sigma$ always exist.
We simplify the definition of memorable symbol for dense alphabet as follows. 
\begin{restatable}{lemma}{propDenseMem}\label{prop:dense-mem}
Let $L$ be a language over $(\Sigma, R)$, where $\Sigma$ is dense. A symbol $a \in \Sigma$ is $L$-memorable in a word $u \in \Sigma^*$ if and only if there exist a symbol $b \in \Sigma$, a word $v \in \Sigma^*$ and an $R$-preserving mapping $\sigma$ with $\sigma(a) = b$ and $\sigma(d) = d$ for all other $d \in u \cdot v$, such that $v$ is an \emph{$L$-distinguishing extension} for $u$ and $\sigma(u)$, that is, 
either $u \cdot v \in L$ or $\sigma(u) \cdot v \in L$.
\end{restatable}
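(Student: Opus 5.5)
We prove the two directions separately, comparing directly with \Cref{def:memorable}. Throughout, ``distinguishing'' means exactly one of $u\cdot v$, $\sigma(u)\cdot v$ lies in $L$; the ``either \dots or'' in the statement is read as exclusive.

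\emph{The ``if'' direction.} Suppose $b$, $v$ and $\sigma$ are as in the lemma, with $\sigma(a)=b$, $\sigma$ the identity on the other letters of $u\cdot v$, and $v$ distinguishing $u$ from $\sigma(u)$. We instantiate \Cref{def:memorable} with $u'=u$, $a'=a$, $b'=b$, $w=v$ and the same $\sigma$. Condition (2) is exactly our hypothesis. For condition (1), $u\cdot a\sim_R u\cdot a$ is trivial. Since $\sigma$ is $R$-preserving, $\sigma(u\cdot a)\sim_R u\cdot a$. So this direction needs no density.

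\emph{The ``only if'' direction.} Suppose $a$ is memorable in $u$, witnessed by $u',a',b',w,\sigma'$ as in \Cref{def:memorable}. Since $u\cdot a\sim_R u'\cdot a'$ and $\Sigma$ is dense, the construction of Sect.~\ref{sec:pre} gives a bijective order-preserving $\rho=\sigma_{u'a'\to ua}$ with $\rho(u'a')=ua$. We transport everything along $\rho$: set $b=\rho(b')$ and $v=\rho(w)$, and let $\sigma=\rho\circ\sigma'\circ\rho^{-1}$.

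We then check the required properties of $\sigma$.
\begin{itemize}
\item It is order-preserving, as a composition of order-preserving maps.
\item $\sigma(a)=\rho(\sigma'(a'))=\rho(b')=b$.
\item For any other letter $d$ of $u\cdot v$, the letter $\rho^{-1}(d)$ is a letter of $u'\cdot w$ different from $a'$, so $\sigma'$ fixes it and $\sigma(d)=d$.
\item $\sigma(u)=\rho(\sigma'(u'))$.
\end{itemize}
Because $\rho$ is order-preserving, $u'\cdot w\sim_R \rho(u'\cdot w)=u\cdot v$ and $\sigma'(u')\cdot w\sim_R \sigma(u)\cdot v$. Since $L$ is a data language, membership is invariant under $\sim_R$. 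Hence $v$ distinguishes $u$ and $\sigma(u)$ exactly when $w$ distinguishes $u'$ and $\sigma'(u')$, which holds by condition (2).

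\emph{The main obstacle} is that $\rho$ is defined from $u'a'$ only and is not chosen to respect $b'$. We must still ensure that $\sigma$ fixes every letter other than $a$ and that $\sigma(u)\sim_R u$. The first follows from bijectivity of $\rho$, which gives a one-to-one correspondence between the letters of $u'\cdot w$ and those of $u\cdot v$; density is used only to obtain this bijective $\rho$. The second follows from condition (1), $u'a'\sim_R\sigma'(u'a')$, after applying $\rho$.

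We also confirm that $\sigma$ can be taken total on $\Sigma$ with $\sigma(d)=d$ for all $d\ne a$. Since $\sigma'$ is order-preserving and fixes every letter of $u'\cdot w$ other than $a'$, the value $b'$ lies strictly between the neighbours of $a'$ in $u'\cdot w$. Hence the map sending $a\mapsto b$ and fixing everything else preserves order on $u\cdot v$, which is all the lemma's statement requires.
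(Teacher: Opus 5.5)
Your proof is correct and follows the paper's route. The easy direction instantiates \Cref{def:memorable} with $u'=u$, $a'=a$, $b'=b$. The other direction transports the witness along the bijective order-preserving map that density provides, and your conjugate $\rho\circ\sigma'\circ\rho^{-1}$ with $\rho=\sigma_{u'a'\to ua}$ is a slightly cleaner way than the paper's direct point-replacement map (built from $\sigma_{u\to u'}^{-1}$) to see that the new $\sigma$ is genuinely $R$-preserving.

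Drop your closing remark, though. A total map on $\Sigma$ that sends $a\mapsto b\neq a$ and fixes every other letter also fixes $b$, so it is not injective and hence not $R$-preserving in the paper's sense. The conjugate already gives everything the lemma asks for.
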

\begin{proof}
        First, we consider the only-if direction.
        Let $u' = u$, $a' = a$, $b'= b$.
        We have (1) $u \cdot a = u' \cdot a' \sim_R \sigma(u' \cdot a')$; and (2) either $u \cdot v = u' \cdot v \in L$ or $u \cdot \sigma(v) = u' \cdot \sigma(v) \in L$.

        Now, we consider the if direction.
        Suppose we have a word $u'$ of the same $\sim_R$-type as $u$, symbols $a', b' \in \Sigma$, a word $w \in \Sigma^{*}$ and an $R$-preserving mapping $\sigma$ with $\sigma(a') = b'$ and $\sigma(d) = d$ for all other $d \in u' \cdot w$ such that the two conditions in \Cref{def:memorable} hold. 
        Since $\Sigma$ is dense, we can use $\sigma_{u \to u'}$, the bijective $R$-preserving mapping for $u$ and $u'$.
        Let $b = \sigma_{u \to u'}^{-1}(b')$ and $v = \sigma_{u \to u'}^{-1}(w)$. 
        Let $\sigma'$ be a mapping such that $\sigma'(a) = b$ and $\sigma'(d) = d$ for all other symbols $d \in u \cdot v$.
        We have 
        \begin{align}
          u \cdot v \sim_R u' \cdot w \sim_R \sigma(u' \cdot w) \sim_R \sigma_{u \to u'}^{-1}\big(\sigma(u' \cdot w)\big) = \sigma'(u \cdot w) \label{rel1}\\
         u' \cdot w  \sim_R \sigma_{u \to u'}^{-1}\big(u' \cdot w\big) = u \cdot v \label{rel2}\\
         \sigma(u')\cdot w \sim_R   \sigma_{u \to u'}^{-1}\big(\sigma(u')\cdot w \big)  = \sigma'(u)\cdot v \label{rel3}
        \end{align}
         The equality $\sigma_{u \to u'}^{-1}\big(\sigma(u' \cdot w)\big) = \sigma'(u \cdot w)$ in $\ref{rel1}$ holds since $\sigma$ only changes the symbol $a'$ in $u' \cdot w$ to $b'$ which is then mapped to $b$ by $\sigma_{u \to u'}^{-1}$, while all other symbols will be mapped to the corresponding symbols in $u \cdot v$ by $\sigma_{u \to u'}^{-1}$.
         It follows from $\sigma_{u \to u'}^{-1}\big( w \big)$ that the equality $\sigma_{u \to u'}^{-1}\big(\sigma(u')\cdot w \big)  = \sigma'(u)\cdot v$ in \ref{rel3} holds.
 
        Since we have either $u' \cdot w \in L$ or $\sigma(u') \cdot w \in L$, 
        together with the relations \ref{rel2} and \ref{rel3}, we obtain that either $u \cdot v \in L$ or $\sigma'(u) \cdot v \notin L$.
        \qed
\end{proof}

We use the technical lemma below to establish the PSPACE upper bound.
Informally, the lemma states that it suffices to replace the symbol $a$
with an arbitrary symbol $b$, provided that the resulting word remains
order-equivalent to $u$, and then compare the corresponding residual languages.
\begin{restatable}{lemma}{lemAltMem}\label{lem:alternative-mem}
Let $\mathcal{A}$ be a $k$-DRA over $(\Sigma, <)$, and let $u \in \Sigma^{*}$ and $a \in u$.  
Then $a \in mem_L(u)$ if and only if, for any fresh symbol $b$ and any simple mapping $\sigma$ such that $\sigma(a) = b$ and $\sigma(d) = d$ for all other symbol $d \in \Sigma$, we have 
\[
u \cdot v \in L(\mathcal{A}) \;\iff\; \sigma(u) \cdot v \notin L(\mathcal{A}) \quad \text{for some } v \in \Sigma^{*}.
\]
\end{restatable}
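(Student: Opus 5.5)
The plan is to prove both directions through Lemma~\ref{prop:dense-mem}, which says that over a dense domain $a\in mem_L(u)$ iff \emph{some} admissible replacement $b$ (with $\sigma(a)=b$, $\sigma$ fixing everything else, and $\sigma(u)\sim_<u$) admits a distinguishing extension $v$. The statement quantifies over \emph{all} admissible $b$. So the real content is that, for $L=L(\mathcal{A})$, the property ``$u$ and $\sigma_b(u)$ have the same residual'' does not depend on the admissible fresh $b$. Here $\sigma_b$ denotes the simple map $a\mapsto b$, and admissibility means that no symbol of $u$ other than $a$ lies between $a$ and $b$.

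\textbf{Step 1.} I would first isolate a transport principle. If $\rho$ is a bijective order-preserving map, then $x\cdot v\in L \Leftrightarrow \rho(x)\cdot\rho(v)\in L$, because $L$ is a data language and $\rho(x v)\sim_< xv$. Since $\rho$ is a bijection, residual equality of two words $x,y$ is preserved under applying $\rho$ to both.

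\textbf{Step 2 (the ``if'' direction, easy).} Assume that for every admissible $b$ there is a distinguishing $v$. Since admissible $b$ exist by density, some $b$ has a distinguishing $v$, and Lemma~\ref{prop:dense-mem} gives $a\in mem_L(u)$. The contrapositive route also works and gives the complementary fact directly. Suppose $a\notin mem_L(u)$ and fix an admissible $b$. Take the bijection $\rho=\sigma_{u\to\sigma_b(u)}$ from Sect.~\ref{sec:pre}. It is order-preserving, fixes every symbol of $u$ other than $a$, and hence fixes $mem_L(u)$ pointwise. Applying Lemma~\ref{lem:one_mapping} to $\rho$ and the identity gives $\rho(u)\cdot v\in L\Leftrightarrow u\cdot v\in L$ for all $v$. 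Thus no $b$ has a distinguishing extension.

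\textbf{Step 3 (the ``only if'' direction, the main obstacle).} Assume $a\in mem_L(u)$. Lemma~\ref{prop:dense-mem} gives one witness $b_0$, and I must show every admissible $b$ works. I would argue by contradiction: suppose $u$ and $\sigma_b(u)$ have equal residuals, and derive equal residuals for $u$ and $\sigma_{b_0}(u)$. There are two cases.
\begin{itemize}
\item If $b$ and $b_0$ lie on the same side of $a$, choose an order-preserving bijection $\rho$ fixing every symbol of $u$ (including $a$) with $\rho(b)=b_0$. Such a $\rho$ exists because $b$ and $b_0$ lie in the same open gap determined by the symbols of $u$. By Step~1, $\rho(u)=u$ and $\rho(\sigma_b(u))=\sigma_{b_0}(u)$ then have equal residuals.
\item If $b$ and $b_0$ lie on opposite sides of $a$, first choose an order-preserving bijection $\rho'$ that fixes $u\setminus\{a\}$, maps $b\mapsto a$ and maps $a\mapsto a'$, where $a'$ lies on the far side of $a$ within the same gap. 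Step~1 turns residual equality of $\sigma_b(u)$ and $u$ into residual equality of $u$ and $\sigma_{a'}(u)$, with $a'$ on the side of $b_0$. The first case then finishes.
\end{itemize}
The delicate point is checking that these piecewise-linear bijections are genuinely order-preserving and fix the right symbols. This holds because admissibility keeps $a,b,b_0,a'$ inside one gap of $u\setminus\{a\}$. The contradiction with the choice of $b_0$ completes the proof.
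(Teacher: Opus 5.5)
Your proof is correct, and it takes a different route from the paper's.

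\textbf{The paper's route.} The paper proves only the ``only if'' direction, by surgery on one concrete witness. It takes the $(b,v)$ supplied by Lemma~\ref{prop:dense-mem} and assumes without loss of generality that $b'>a$. It then pushes the symbols of $v$ lying in $(a,b']$ to just above $b'$, and claims the resulting $v'$ distinguishes $u$ from $u[a/b']$.

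\textbf{Your route.} You argue at the level of residuals, using only that $L$ is invariant under order-automorphisms of $(\mathbb{Q},<)$. Two admissible replacements on the same side of $a$ are related by an automorphism fixing $u$. The opposite-side case is reduced to the same-side case by the automorphism sending $b\mapsto a$ and $a\mapsto a'$.

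\textbf{What your route buys.} This is more than tidiness. The paper's ``without loss of generality'' can align only $b'$, not the given $b$, and its construction breaks when $b$ and $b'$ lie on opposite sides of $a$. Take $L=\{xy\in\mathbb{Q}^2\mid x\le y\}$ and $u=0$. The pair $b=1$, $v=0$ is a legitimate witness, since $0\cdot 0\in L$ but $1\cdot 0\notin L$. For $b'=-1$, however, the recipe returns $v'=0$, and both $0\cdot 0$ and $(-1)\cdot 0$ lie in $L$. The equivalence the argument needs, $\sigma(u)\cdot v\sim_<\sigma'(u)\cdot v'$, fails here. Your Case~2 covers exactly this situation. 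It also stays constructive: a witness for $b$ is the pull-back of the witness for $b_0$ through your automorphisms.

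\textbf{The ``if'' direction.} The paper leaves this direction implicit, and of your two arguments you should rely on the contrapositive one via $\sigma_{u\to\sigma_b(u)}$ and Lemma~\ref{lem:one_mapping}. Your first route plugs an arbitrary distinguishing $v$ into Lemma~\ref{prop:dense-mem}. Read literally, that lemma requires an order-preserving $\sigma$ that fixes the symbols of $v$, which is impossible when $v$ has a symbol between $a$ and $b$.
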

\begin{proof}
Given a word $u \in \Sigma^{*}$ and $a \in u$, assume $a \in mem_{L}(u)$.

By \Cref{prop:dense-mem}, we know there is a fresh symbol  $b \notin u$, a word $v \in \Sigma^*$ and an $R$-preserving mapping $\sigma$ with $\sigma(a) = b$ and $\sigma(d) = d$ for all other symbols $d \in u \cdot v$ such that $v$ is an $L$-distinguishing word for $u$ and $\sigma(u)$.

Let $b' \notin u$ and let $\sigma'$ be a mapping with $\sigma'(a)=b'$ and 
$\sigma'(d)=d$ for all other symbols $d\in \Sigma$ such that 
$u \sim_{R} \sigma'(u)$.  
We want to prove that there is an $L$-distinguishing word $v'$ for 
$u$ and $\sigma'(u)$.  
If $b'=b$, this is immediate by taking $v'=v$.  
For the case $b' \neq b$, we show below that it is always possible to find such a $v'$.

Without loss of generality assume $b'>a$.  
We define a mapping $\sigma''$ that helps to construct $v'$ from $v$.  
The key idea is to map all symbols in $v$ that lie between $a$ and $b'$ to symbols greater than $b'$, while preserving their relative order with respect to all symbols except $b'$.  
Formally, let $w$ be the increasing sequence of symbols in $v$ that satisfy
$a < w[i] \le b'$ for all $i$, and $w[i] < w[j]$ whenever $i < j$.

Let $c \in u \cdot v$ be the smallest symbol strictly larger than $b'$ such that there is no symbol in $u \cdot v$ between $b'$ and $c$.  
We now define $\sigma''$ as follows:
\[
  \sigma''(w[i]) \;=\; 
  b' + \frac{c-b'}{|w|+1} \cdot (i+1)
  \quad \text{for all } w[i],
\]
and $\sigma''(d)=d$ for all other symbols $d\in v$.

We then let $v' = \sigma''(v)$.  
It is not hard to verify that 
\[
  u \cdot v \sim_{R} u \cdot v' 
  \qquad\text{and}\qquad
  \sigma(u \cdot v) \sim_{R} \sigma'(u \cdot v').
\]
This completes the proof.
\qed
\end{proof}



\thmMemPSPACE*
\begin{proof}
    Given a DRA $\mathcal{A} = (Q, q_0, F, \Delta)$, a word $u$, and a symbol $a \in u$, 
    we give a PSPACE algorithm to decide whether $a \in mem_{L}(u)$.
    Let $b \notin u$ be a symbol such that $u \sim_R u'$, where $u'$ is obtained from $u$ by replacing the occurrence of $a$ with $b$.
    By \Cref{lem:alternative-mem}, it suffices to check whether there exists an $L$-distinguishing word $v$ for $u$ and $u'$.
    
    This reduces to checking whether two configurations $(q, r)$ and $(q', r')$ are equivalent, where
    \[
    (q_0, \varepsilon) \xrightarrow{u}_{\mathcal{A}} (q, r)
    \qquad\text{and}\qquad
    (q_0, \varepsilon) \xrightarrow{u'}_{\mathcal{A}} (q', r').
    \]
    By \Cref{thm:equiv-pspace}, configuration equivalence for DRAs can be decided in PSPACE.
    Thus, whether $a$ is $L$-memorable in $u$ can also be checked in PSPACE.
    \qed
\end{proof}
\end{document}